
\documentclass[lettersize,journal]{IEEEtran}  



\usepackage{graphicx} 
\usepackage{float}
\usepackage{amsthm}
\usepackage{amsmath} 
\usepackage{amssymb}  
\usepackage{wrapfig}

\usepackage{arydshln}
\usepackage{setspace}
\usepackage{xcolor}
\usepackage{cite}
\usepackage{comment}
\usepackage{hyperref}

\newtheorem{lemma}{Lemma}
\newtheorem{definition}{Definition}
\newtheorem{assumption}{Assumption}

\DeclareFontFamily{U}{mathx}{}
\DeclareFontShape{U}{mathx}{m}{n}{<-> mathx10}{}
\DeclareSymbolFont{mathx}{U}{mathx}{m}{n}
\DeclareMathAccent{\widehat}{0}{mathx}{"70}
\DeclareMathAccent{\widecheck}{0}{mathx}{"71}

\newcommand{\zc}{\widecheck{z}}
\newcommand{\ax}{a_x}
\newcommand{\ay}{a_y}
\newcommand{\az}{\widecheck{a}_z}
\newcommand{\ez}{\widecheck{e}_z}
\newcommand{\setSi}{\mathbb{S}_{(i)}}
\newcommand{\setSii}{\mathbb{S}_{(ii)}}
\newcommand{\setP}{\mathbb{P}}
\newcommand{\setF}{\mathbb{F}}
\newcommand{\setM}{\mathbb{M}}
\newcommand{\aof}{\kappa_A^O}

\newcommand*{\circledbullet}{%
    \mathbin{%
        \ooalign{$\circledcirc$\cr\hidewidth$\bullet$\hidewidth}%
    }%
}

\newtheorem{proposition}{Proposition}


\title{\LARGE \bf
Safe Aerial Manipulator Maneuvering and Force Exertion\\via Control Barrier Functions
}

\author{Dimitris Chaikalis$^{1}$, Vinicius Gon\c{c}alves$^{2}$, Nikolaos Evangeliou$^{3}$, Anthony Tzes$^{2,3}$ and Farshad Khorrami$^{1,2}$
\thanks{$^{1}$New York University, Electrical \& Computer Engineering, Brooklyn, NY 11201, USA.} %
\thanks{$^{2}$NYUAD Center for Artificial Intelligence and Robotics, UAE}
\thanks{$^{3}$New York University Abu Dhabi (NYUAD), Electrical Engineering, Abu Dhabi 129188, UAE}
\thanks{Corresponding author's email {\tt\small dimitris.chaikalis@nyu.edu}. Dimitris Chaikalis and
Vinicius Gon\c{c}alves contributed equally to this work.}
}
\begin{document}
\maketitle
\begin{abstract}
This article introduces a safe control strategy for application of forces to an external object using a dexterous robotic arm mounted on an unmanned Aerial Vehicle (UAV). A hybrid force-motion controller has been developed for this purpose. This controller employs a Control Barrier Function (CBF) constraint within an optimization framework based on Quadratic Programming (QP). The objective is to enforce a predefined relationship between the end-effector's approach motion and its alignment with the surface, thereby ensuring safe operational dynamics. No compliance model for the environment is necessary to implement the controller, provided end-effector force feedback exists. Furthermore, the paper provides formal results, like guarantees of feasibility for the optimization problem, continuity of the controller input as a function of the configuration, and Lyapunov stability. In addition, it presents experimental results in various situations to demonstrate its practical applicability on an aerial manipulator platform.
\end{abstract}
%
\section{Introduction}
%

The safe interaction of autonomous vehicles with their environment has many applications. Advances in control and technology have enabled the deployment of autonomous mobile robots for a variety of real-world tasks, often operating among other robots or humans\cite{optim_coop_arms_planning}.
Especially, the augmentation of autonomous vehicles with the necessary hardware and control methods that enable physical interaction allows their use in important tasks such as transportation\cite{multi_uav_transport_cbfs}, non-destructive testing\cite{physical_tase} and cooperation with humans\cite{dimarogonas}. This increase in capabilities in turn necessitates the development of safety-oriented control frameworks, capable of guaranteeing safe operation during autonomous physical interaction.

The typical interaction-enabled vehicle is comprised of a mobile platform (UGV, quadruped, multirotor aerial vehicle) endowed with a dexterous robotic arm. Among these, the aerial vehicle allows the highest mobility since it can navigate in the full $3$-dimensional space, resulting in an Unmanned Aerial Manipulator (UAM) when combined with a robotic arm.

Such platforms exhibit increased complexity of flight dynamics\cite{aerial_manipulation_tro_review} due to the floating base (UAV) and the robotic manipulator. However, UAMs have large dexterity in aerial manipulation\cite{aerial_manipulation_tro_review} and significant advances have been achieved on the accurate control of such systems\cite{aerial_manipulator_millimeter,aerial_manip_ctrl_uncertainty,yu2020finite,ding2022fault}, even in the context of force exchange during contact with the environment.

Irrespective of the platform, the paradigm of force exertion on an object is usually solved by utilizing a planner, tasked with driving the robot end-effector to the desired location, while a different controller is activated when in contact~\cite{BBPWPASN_tro21}. Such simple methods rely on a distinction between free-motion and motion under contact, with no safety guarantees in case of, for example, undesirable contact due to motion controller under-performance during navigation.

Current control methods mostly handle safety concerns by designing robust, prescribed-performance motion controllers, under the effect of disturbances or reaction forces from the environment\cite{yu2020finite,liang2023adaptive}. Such approaches, however, neglect the fact that simultaneously attempting position and force convergence might be undesirable, depending on the initial placement of the end-effector. Specially designed control methods need to be implemented to guarantee that contact with the environment will only occur under strictly safe conditions. With this controller, the system should be capable of avoiding undesirable contact while aligning itself with the specified goal or while recovering from a large disturbance. 

This is illustrated in Figure~\ref{fig:safeapproach}, that presents the top view of a drone and the surface in which force should be exerted. Simple controllers that only drive the alignment error to zero would not be able to safely guide the drone from the starting state (a) towards the goal state (c), since they would cause a collision with the surface. A maneuver (represented by the magenta dashed line) is necessary. The proposed controller implements such maneuvers using control barrier functions (CBFs), without the need of using path/trajectory planning.
\begin{figure}[htbp]
    \centering
    \includegraphics[width=0.9\columnwidth,keepaspectratio]{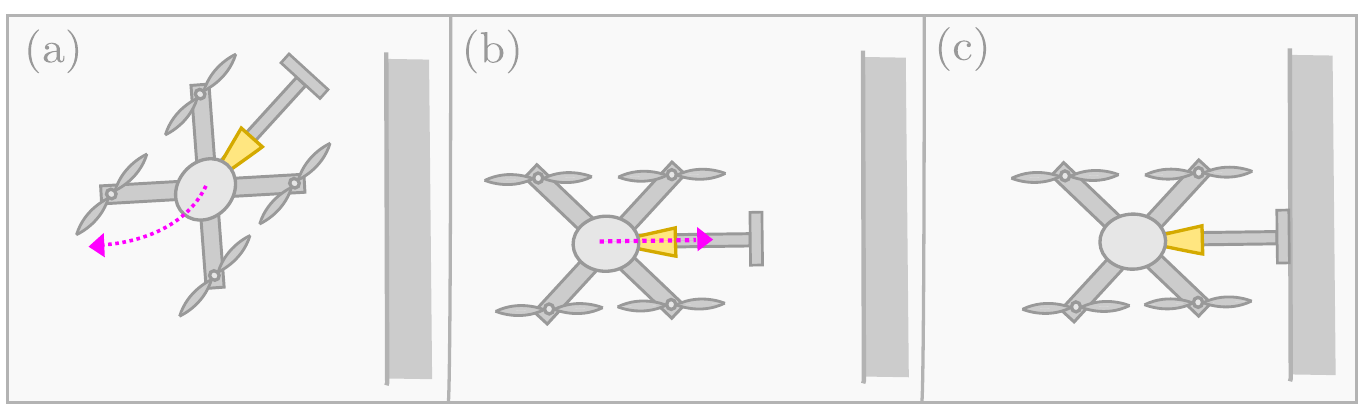}
    \caption{(a) Starting position, (b) intermediate position, and (c) final position of UAM for force exertion task.}
    \label{fig:safeapproach}
\end{figure}

This work presents a control method capable of achieving force exertion along the normal axis of arbitrarily-oriented surfaces, applied on an aerial manipulator. Compared to existing literature, increased focus is placed on ensuring system safety during task execution. The control input is computed using a Quadratic Programming (QP) approach together with CBFs\cite{8796030,gonccalves2023safe,Dai2023,gonccalves2024control,DHK23} designed to enforce a desired relation between distance to the surface and end-effector alignment, while enforcing the system's safety. 
\section{Related Work and Contribution}

Barrier functions have emerged as a powerful tool for ensuring safety in autonomous vehicle control. In \cite{cbfs_qp_safety}, a simple framework is established for safe control synthesis by incorporating CBFs in optimization-based controllers, while similar ideas are used in \cite{cbfs_tcst} to ensure safety constraints are met in the operation of an autonomous vehicle. 
 
In the specific context of hybrid force and position control, perhaps the most prominent method is admittance control. Under the admittance paradigm, reference signals are altered to prescribe mass-spring-damper end-effector motion dynamics. This is done to accommodate external wrenches, as for example in ~\cite{aerial_screwing}, where compliance is needed in order to perform aerial screwing. Similar ideas are used in \cite{smrcka2021admittance} for contact inspection, where admittance control enables a UAV to stabilize against a wall. 

Another example where contact force control is transformed into a position control problem, enforcing mass-spring-damper dynamics is \cite{physical_tase}, where accurate knowledge of all underlying models is used. In the absence of known environment model, adaptive impedance control has been applied. One example is \cite{cdc_soft_adaptive_impedance} presnting simulations of contact with a soft environment, or \cite{liang2023adaptive} with experiments on an aerial manipulator platform. Admittance methods ensure the system are able to react to measured forces, but are merely reactive and fail to address the primary reason that leads to undesirable forces, which is contact under misalignment. 

In the above works, motion planners are utilized to drive the vehicle towards the contact location. Another example is \cite{su2023sequential}, where planning is tightly coupled to a hierarchical controller, attempting to maintain errors small before proceeding to a new task. These solutions necessitate a clear distinction between free-motion and contact segments, which is not necessary in our proposed approach. Moreover, errors which can always arise from disturbances or the exchanged forces may result in switching between planning and contact phase, making it difficult to prove the stability of the overall framework. Holistic control solutions can handle the entire task of approach and force exertion, unifying the control design and removing the need of explicit planners. A typical example of such methods is model predictive control and its variations.

MPC implementations for aerial force control can be found in~ \cite{tzoumanikas2020aerial, MPC_Force_Robots}. Works featuring MPC while attempting to enforce some safety considerations include \cite{tcst_force_exertion}, where gaussian processes are incorporated to learn a relationship between motions and forces, thus enabling better predictions of generated forces. In \cite{safe_compliant_interaction}, admittance and MPC are combined for a mixture of safe compliant behavior during flight of an aerial manipulator and stiffer impedance during task execution. As opposed to MPC-based approaches, our framework doesn't require a dynamic model for the robot and the environment, while also being significantly more computationally efficient, due to the absence of the prediction horizon.

Other relevant safety-oriented approaches in force control include \cite{ecc24_cbf_force}, where CBFs are designed to enforce strict error bounds during force exertion, albeit only featuring simulation results. Likewise in \cite{barrier_backstepping_hybrid_fp_ctrl}, a backstepping controller is combined with barrier functions to ensure force limits are observed. In \cite{rong2022robust}, a fault-tolerant robust controller is applied on aerial force-motion control aiming to ensure stability under harsh conditions. Finally, omni-directional aerial vehicles is introduced, which are capable of handling a larger envelope of disturbances, to increase flight safety\cite{omni-forces-dob}. However, none of these methods are related to safety in terms of task execution explicitly, which this paper attempts to achieve.


The contributions of this work are:

\begin{itemize}
    \item A CBF-based controller that implements safe force exertion without the need of explicitly using path/trajectory planners. This controller is based on a novel inequality constraint that enforces a prescribed relationship between distance and alignment of the end-effector and the environment. This eliminates the unsafe situation of contact under undesirable conditions. The controller is lightweight and is based on strictly convex QPs, which can be solved extremely efficiently. Furthermore, it is very flexible, since the controller has several parameters that can be chosen according to the limitations/needs of each platform/situation. Finally, it is simple to implement, requiring only a QP solver.

    \item Several formal results for the safe motion/force controller are provided in Section \ref{sec:formalg} to address: safety guarantees, continuity, feasibility, and Lyapunov stability.

    \item Several experimental results on a two degree-of-freedom arm mounted on a quadrotor are provided. This shows the efficacy of the proposed controller under a wide range of scenarios.
\end{itemize}


This paper is structured as follows. Section \ref{sec:matnot} presents the mathematical notation. Section~\ref{sec:psa} presents the problem statement and the assumptions. Section~\ref{sec:high-level} outlines the overall proposed optimization-based controller, with its mathematical properties and equilibrium analysis in Section~\ref{sec:formalg}. Section \ref{sec:exp} presents experimental results and Section \ref{sec:concl} the conclusion. Finally, all proofs of Propositions and Lemmas are given in Appendices A and B, respectively. 

\section{Mathematical notation}
\label{sec:matnot}

The symbol $\mathbb{R}^+$ denotes the set of nonnegative reals. All vectors in this paper are column vectors. If $A$ is a matrix, $A^\top$ represents its transpose. The notation $\dot{f} = \frac{df}{dt}$ is used for time derivatives. $\nabla_g$ represents the gradient, as a column vector, taken in the variable-$g$. If $f: \mathbb{R}^a \mapsto \mathbb{R}^b$ is a function of the variable $q$, $\frac{\partial f}{\partial q}$ is its \emph{Jacobian} $a \times b$ matrix in which the entry in the row $i$ and column $j$ is $\frac{\partial f_i}{\partial q_j}$. $\| \cdot \|$ is the Euclidean norm. For a scalar $h > 0$, $[x]^h \triangleq x|x|^{h-1}$ is the signed power function. $R_x(\phi), R_y(\theta) , R_z(\psi)$ represent the $3 \times 3$ rotation matrices in the $x, y$ and $z$ axis, respectively. Transformations (translation, rotation or both) \emph{from} frame $\mathcal{A}$ \emph{to} frame $\mathcal{B}$ are written using the notation $(\cdot)_{\mathcal{A}}^{\mathcal{B}}$ (e.g., $R_{\mathcal{A}}^{\mathcal{B}}$ for a rotation). If $u$ and $v$ are vectors, $u>v, u<v, u \leq v, u \geq v$ are inequalities that should be taken componentwise. 

Normalized 3D vectors (axis) have the symbol $\widecheck{(\cdot)}$ (e.g., $\zc, \ez, \widecheck{e}_x$, etc...). The vectors ${\widecheck{e}_x}, {\widecheck{e}_y}, {\ez}$ are the columns of the $3 \times 3$ identity matrix. Sets are denoted using blackboard bold (e.g., $\mathbb{P}, \mathbb{F}, \mathbb{M}$, etc.) and frames using calligraphic font (e.g., $\mathcal{W}$, $\mathcal{U}$, $\mathcal{E}$, etc.). As in the standard pictorial physics notation, in a drawing, a symbol like $\otimes$ means a vector that is directed \emph{inside} the image, whereas $\circledbullet$ means a vector that is directed \emph{outside} the image.

A function $V: \mathbb{R}^n \mapsto \mathbb{R}^+$ is said to be \emph{Lyapunov-like} if it is differentiable, nonnegative and both $V$ and its gradient vanishes if and only if when the argument is the zero vector. A function $f: [a,b] \mapsto \mathbb{R}$ is said to be \emph{$\kappa$-like} \footnote{It is a generalization of the concept of $\kappa$ functions in control, but with the domain extended to any interval of the reals instead of only nonnegative intervals and also only being non-decreasing instead of increasing.} if it is continuous, non-decreasing and $f(\xi)=0$ if and only if $\xi=0$. It is said to be \emph{strictly $\kappa$-like} if it is $\kappa$-like and additionally strictly increasing and differentiable. 

\section{Problem Statement \& Assumptions }
\label{sec:psa}
The studied UAM-system comprises an underactuated multirotor aerial vehicle coupled with a robotic arm featuring revolute joints, as shown in Figure~\ref{fig:simulator}. There exists a stationary planar surface upon which it is desired to exert normal forces.
\begin{figure}[htbp]
    \centering
    \includegraphics[width=0.85\columnwidth,keepaspectratio]{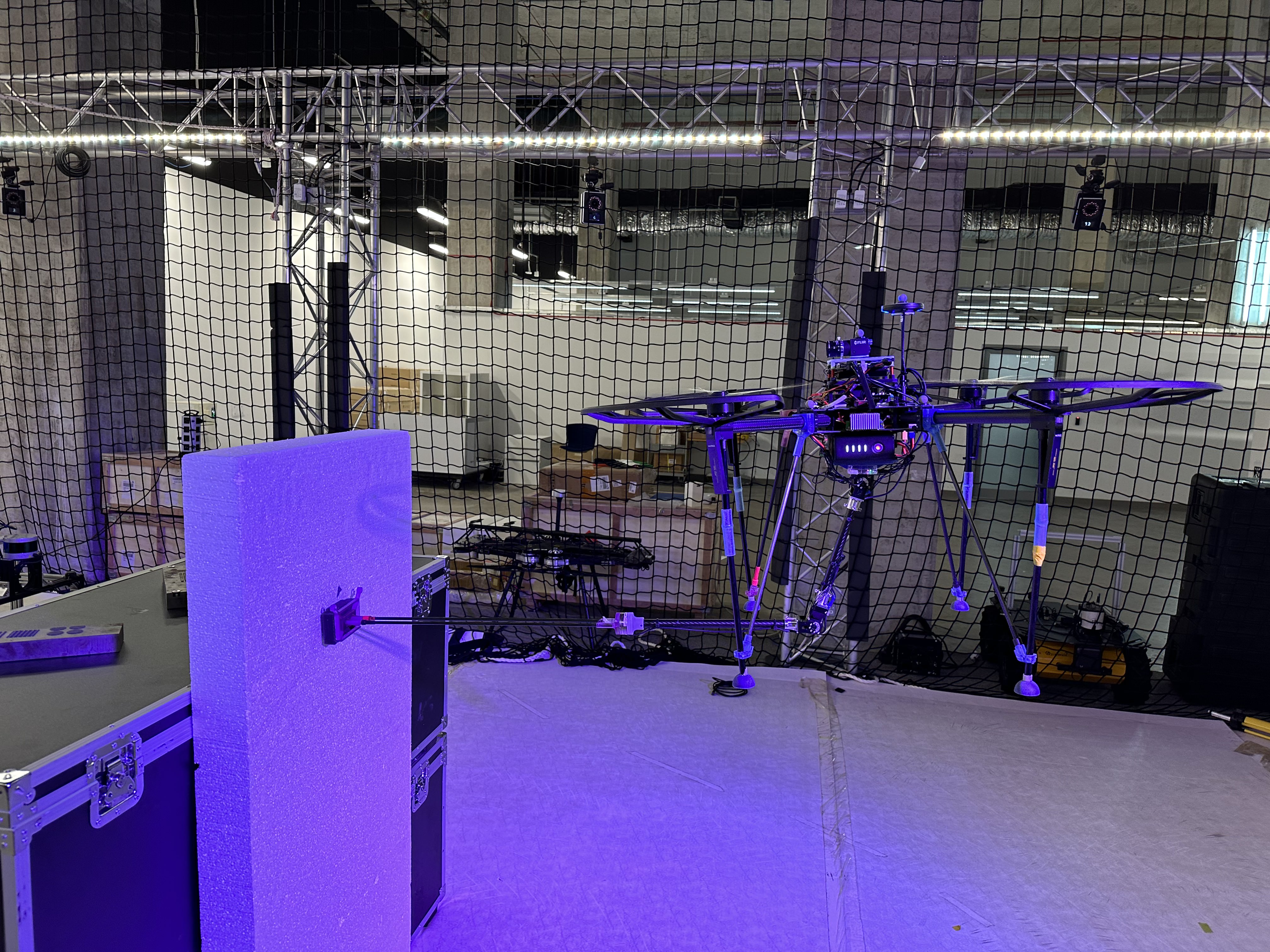}
    \caption{Example of  Unmanned Aerial Manipulator (UAM) system
    .}
    \label{fig:simulator}
\end{figure}

The relevant frames, as shown in Figure~\ref{fig:system_and_frames} for the studied problem are: $\mathcal{U}$, the frame fixed on the center of the aerial vehicle; $\mathcal{E}$, the frame of the force sensor attached to the robotic arm end-effector; $\mathcal{W}$, the world-fixed inertial frame; and $\mathcal{P}$, the inertial frame attached to the plane.
\begin{figure}[htbp]
    \centering
    \includegraphics[width=0.85\columnwidth,keepaspectratio]{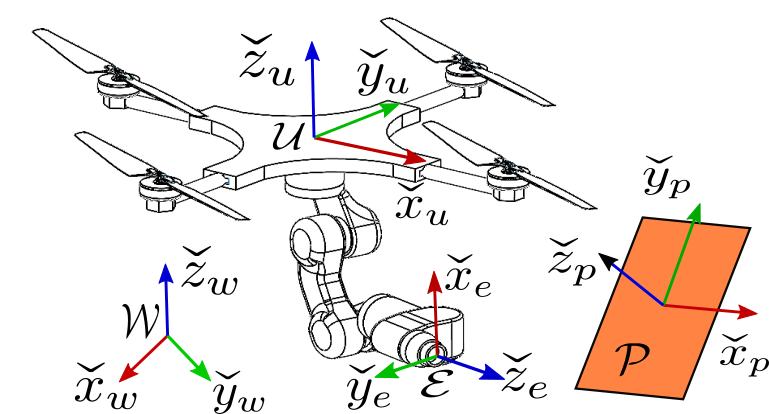}
    \caption{Coordinate frames of an example UAM system}
    \label{fig:system_and_frames}
\end{figure}

\begin{definition} \label{def:basic}
\vspace{5pt} \ \ \ 
\begin{itemize}
    \item Let ${p_{\mathcal{W}}^{\mathcal{U}}} \in \mathbb{R}^3$ denote the position of the center of the aerial vehicle measured in $\mathcal{W}$;
    \item Let $\phi$ (roll),  $\theta$ (pitch) and  $\psi$ (yaw) represent the angles of the attitude of $\mathcal{U}$ in $\mathcal{W}$. More precisely, the rotation between $\mathcal{W}$ and $\mathcal{U}$ will be given by $R_{\mathcal{W}}^{\mathcal{U}} \triangleq R_z(\psi) R_y(\theta) R_x(\phi)$;
    \item Let ${q_m} \in \mathbb{R}^n$ be the configuration of the robotic arm, represented by its joint angles;
    \item Let $\left({p_{\mathcal{P}}^{\mathcal{W}}}, {R_{\mathcal{P}}^{\mathcal{W}}} \right)$ be the constant, rigid transformation between $\mathcal{P}$ and $\mathcal{W}$. $\qedsymbol$
\end{itemize}
\end{definition}

 For clarity in mathematical exposition, the desired task is divided into two components: the \emph{alignment} and the \emph{force exertion} subtasks. More precisely, the objective is for the end-effector's position and orientation to align properly with \(\mathcal{P}\) (the \emph{alignment subtask}) and subsequently exert a constant force \( F_d < 0 \) along the  \(\zc\) axis of \(\mathcal{P}\), (the \emph{force subtask}) \footnote{Since the force $F_d$ is measured in the \(\zc\) axis, it is negative since we want it to go in the negative direction of this axis.}. 

\begin{assumption} \label{assump:plane} 
\vspace{5pt} \ \ \ 
\begin{itemize}
    \item It is assumed that the plane is static, infinite and that the $\zc$ vector of frame $\mathcal{P}$ is normal to the plane and directed outside of it (that is, pointing to the half-space that contains the UAM);
    \item It is assumed that the fixed transformation $\left({p_{\mathcal{P}}^{\mathcal{W}}}, {R_{\mathcal{P}}^{\mathcal{W}}} \right)$ is known;
    \item It is assumed that the end-effector tool that will perform the force-exertion is aligned with the $\zc$ vector of the frame $\mathcal{E}$ (see Figure~\ref{fig:system_and_frames}). 
    $\qedsymbol$
\end{itemize}  
\end{assumption}

\begin{assumption} \label{assump:force} Let the position of the end-effector along the $\zc$ axis of the plane  frame be $Z$.  The  reaction force model as a function of the insertion into the plane, $Z$, is assumed \footnote{$Z<0$ implies an insertion of the tool into the plane, whereas $Z>0$ means that no contact between the plane  and the tool was established.} to be a function $F : \mathbb{R} \rightarrow \mathbb{R}$ such that for any $F_d < 0$, there exists a $Z_d < 0$ in which $f(\xi) = F(\xi+Z_d) - F_d$ is $\kappa$-like. $\qedsymbol$
\end{assumption}

Note that the assumption for $F$ implies that there exists an unique $Z_d < 0$ such that $F(Z_d) = F_d$. A typical example satisfying the above assumptions is the spring-action force model $F(Z) = \min(k Z, 0)$ for a $k > 0$, in which $Z_d = F_d/k$. Although this assumption does not encompass all types of force models (like the Kelvin-Voigt model, in which the force depends on $\dot{Z}$ as well), it is still general enough to be used in many applications. It turns out that no explicit knowledge about neither the force model $F$ nor $Z_d$ will be necessary to implement the proposed controller. However,

\begin{assumption} \label{assump:Zd} It is assumed that a value $Z_d^*$, reasonably close to $Z_d$ and such that  $Z_d^* \leq Z_d$ is known. $\qedsymbol$
\end{assumption}

This estimate $Z_d^*$ for $Z_d$ will be useful in the controller. When the surface is rigid, $Z_d$ is very close to zero and thus $Z_d = 0$ is an excellent approximation for any reasonable force $F_d$. Consequently, any negative small number $Z_d^*$ will do. However, it will be shown later that it is beneficial to have $Z_d^*$ as closer as possible to the real $Z_d$.

Concerning the control inputs:
\begin{assumption}
\vspace{5pt} \ \ \ 
\begin{itemize}
    \item It is assumed that the manipulator is controlled through joint velocities. 
    Furthermore, it is assumed that the manipulator only has revolute and prismatic joints;
    \item It is assumed that the aerial vehicle is controlled through its linear velocity and yaw rate. $\qedsymbol$
\end{itemize}   
\end{assumption}

This work assumes a \emph{kinematic model} in which the manipulators' joint velocities and the aerial vehicle's linear velocity and yaw rate can be controlled. These will be sent to the low level controllers of the manipulator and aerial vehicle. For the latter, these will modify the vehicle's roll  and pitch in addition to the yaw to achieve the target linear velocity. Related to this, the following is assumed:

\begin{assumption} \label{assum:slowvar} It is assumed that roll ($\phi$) and pitch ($\theta$) of the aerial vehicle vary slowly, resulting in $\dot{\phi} \approx 0$, $\dot{\theta} \approx 0$. $\qedsymbol$
\end{assumption}

Assumption \ref{assum:slowvar} implies that the yaw axis (thrust direction) $\widecheck{z}_{\mathcal{U}}$ of the UAV is approximately constant. With this, the following assumption can be made:

\begin{assumption} \label{assum:nevalign} It is assumed that the yaw axis of the UAV $\widecheck{z}_{\mathcal{U}}$ is constant. Furthermore, this constant vector is not aligned with the (constant) $z$ axis of $\mathcal{P}$, $\widecheck{z}_\mathcal{P}$. $\qedsymbol$
\end{assumption}

This condition is a technicality, since $\widecheck{z}_{\mathcal{U}}$ and $\widecheck{z}_\mathcal{P}$ can be arbitrarily close to be aligned. It will, however, make the mathematical analysis simpler.

%
\section{ Controller synthesis\label{sec:high-level}}
The controller outputs linear velocity and yaw angular rate commands for the aerial vehicle as well as joint angle velocity commands for the robotic arm. In \ref{subs:basics}, basic parameters and variables are established. The Lyapunov functions for force control and alignment control are presented in \ref{subs:forceex} and \ref{subs:daed}, respectively. Velocity and joint limits are addressed in \ref{subs:jcvl} with the final controller optimization problem presented in \ref{subs:final_ctrl}.
\subsection{{Basic definitions}}
\label{subs:basics}
For the sake of simplicity, the variables in this section are expressed in the plane frame $\mathcal{P}$, and thus subscripts and superscripts (as $(\cdot)_{\mathcal{P}}^{\mathcal{U}}$) will be avoided to keep the notation as simple as possible. 
\begin{definition}
Let $x,y,z$ be the vehicle positions in $\mathcal{P}$ (center of $\mathcal{U}$ measured in $\mathcal{P}$). The  \emph{configuration vector} is defined as
$  q \triangleq [x \  y \   z \   \psi \   q_m^{\top}]^{\top}. $
The dimension of this vector will be $n$, so, it is considered that the manipulator has $n-4$ degrees of freedom. Furthermore, individual entries of $q_m$ will be denoted by $q_{m,j}$, $j=1,2,...,n-4$.
$\qedsymbol$
\end{definition}

The output command of the controller will be $\dot{q}$ and the high-level controller assumes the dynamics $\dot{q} = u$, Note that the UAV's roll ($\phi$) and pitch ($\theta$) are not included in the configuration. This is because for the proposed controller, they are assumed to be approximately constant (see Assumption \ref{assum:slowvar}), and so they are considered parameters. Furthermore

\begin{definition} Let  $X, Y, Z$ be  the coordinates of the position of the end-effector frame $\mathcal{E}$ measured in $\mathcal{P}$. Furthermore, let $\widecheck{x}, \widecheck{y}, \zc$ be the orthonormal vectors of the orientation of the end-effector frame  $\mathcal{E}$ measured in $\mathcal{P}$.  $\qedsymbol$
\end{definition}

These elements can be obtained from $q$ using forward kinematics. Another important definition related to the task is the following.

\begin{definition} \label{def:taskcoord} Define the \emph{task coordinates}  $r_X \triangleq X$, $r_Y \triangleq Y$, $r_O \triangleq  1+\ez^\top \zc$, $r_Z \triangleq Z-Z_d$.
$\qedsymbol$
\end{definition}

These coordinates codify the task mathematically as $r_X=r_Y=r_O=0$ \footnote{Note that since $\ez$ and $\zc$ are normalized vectors, $e_O = 1+\ez^\top \zc = 0$ if and only if $\zc = -\ez$, which is precisely what we require from the orientation part of the alignment.}(alignment subtask) and $r_Z=0$ (force exertion subtask) and will be useful to describe the controller and prove its mathematical properties.

\subsection{The force exertion subtask}
\label{subs:forceex}

The controller synthesis starts with the force exertion subtask and a relevant Lyapunov function is introduced.

\begin{definition} \label{def:potential} Let $\kappa_F: \mathbb{R} \times \mathbb{R} \mapsto \mathbb{R}$ be a function $\kappa_F(s_1,s_2)$ such that for any fixed $s_1$, $\kappa_F(s_1,\cdot)$ is $\kappa$-like. The function $V_F: \mathbb{R} \mapsto \mathbb{R}^+$ is defined as:
\begin{equation}
\label{eq:VF}
    V_F(r_Z) \triangleq \int_{0}^{r_Z} \kappa_F \Big(\xi+Z_d, F(\xi+Z_d) - F_d \Big)d\xi 
\end{equation}
\noindent in which $Z_d$ and $F$, the desired insertion and force-insertion relationship respectively, were introduced in Assumption \ref{assump:force}. $\qedsymbol$
    
\end{definition}

\begin{lemma} \label{lemma:VFlyap} The function $V_F$ is Lyapunov-like. $\qedsymbol$
\end{lemma}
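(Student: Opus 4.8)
We need to show that $V_F: \mathbb{R} \mapsto \mathbb{R}^+$ defined by the integral in \eqref{eq:VF} is Lyapunov-like, i.e., differentiable, nonnegative, and that both $V_F$ and its derivative vanish exactly at $r_Z = 0$. Let me sketch how I'd do this.

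Let me set up notation. Define $g(\xi) \triangleq \kappa_F(\xi + Z_d, F(\xi + Z_d) - F_d)$, so that $V_F(r_Z) = \int_0^{r_Z} g(\xi)\,d\xi$. The key observation is that by Assumption \ref{assump:force}, $f(\xi) \triangleq F(\xi + Z_d) - F_d$ is $\kappa$-like, hence continuous with $f(\xi) = 0 \iff \xi = 0$, and non-decreasing; in particular $f(\xi) > 0$ for $\xi > 0$ (wait — need to check the sign). Actually $f$ is non-decreasing and zero only at $0$, so $f(\xi) > 0$ for $\xi > 0$ and $f(\xi) < 0$ for $\xi < 0$. By definition, $\kappa_F(s_1, \cdot)$ is $\kappa$-like for each fixed $s_1$, so $\kappa_F(s_1, t)$ has the same sign as $t$ and is zero iff $t = 0$. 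Composing: $g(\xi) = \kappa_F(\xi + Z_d, f(\xi))$ has the same sign as $f(\xi)$, which has the same sign as $\xi$. So $g(\xi) > 0$ for $\xi > 0$, $g(\xi) < 0$ for $\xi < 0$, $g(0) = 0$.

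Now I'd proceed in steps:

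1. **Continuity of $g$, hence differentiability of $V_F$.** Argue $g$ is continuous as a composition: $\xi \mapsto \xi + Z_d$ is continuous, $F$ is continuous (it's $\kappa$-like up to shift, per the assumption — actually $F$ being such that $f$ is $\kappa$-like makes $F$ continuous), so $\xi \mapsto (\xi + Z_d, f(\xi))$ is continuous, and $\kappa_F$ is continuous (need: is $\kappa_F$ jointly continuous? The definition only says $\kappa_F(s_1, \cdot)$ is $\kappa$-like for each $s_1$ — **this is the main gap**; see below). Granting continuity of $g$, the Fundamental Theorem of Calculus gives $V_F \in C^1$ with $V_F'(r_Z) = g(r_Z)$.

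2. **Nonnegativity.** For $r_Z > 0$: integrate a positive function over $[0, r_Z]$, get $V_F(r_Z) > 0$. For $r_Z < 0$: $V_F(r_Z) = \int_0^{r_Z} g = -\int_{r_Z}^0 g$; on $[r_Z, 0]$ the integrand $g$ is negative, so $\int_{r_Z}^0 g < 0$, hence $V_F(r_Z) > 0$. And $V_F(0) = 0$. So $V_F \geq 0$ with equality iff $r_Z = 0$.

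3. **Gradient vanishes iff $r_Z = 0$.** From step 1, $V_F'(r_Z) = g(r_Z)$, and from the sign analysis $g(r_Z) = 0 \iff r_Z = 0$.

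Combining 1–3 gives exactly the Lyapunov-like property.

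**The main obstacle** is the joint continuity of $\kappa_F$ (equivalently, of $g$). The Definition \ref{def:potential} only postulates that $\kappa_F(s_1, \cdot)$ is $\kappa$-like for each fixed $s_1$ — it says nothing about continuity in $s_1$ or joint continuity. Without that, $g$ need not be continuous and $V_F$ need not be differentiable. I'd expect the authors either (a) to have intended an implicit joint-continuity hypothesis on $\kappa_F$ (and one should state it), or (b) to argue that even without it, $V_F$ is still differentiable — which is false in general (e.g., $\kappa_F(s_1, t) = t$ for $s_1$ rational, $= 2t$ for $s_1$ irrational makes $g$ discontinuous). So the honest version of this proof requires adding "$\kappa_F$ is continuous" (or at least that $g$ is) to Definition \ref{def:potential}; I would flag this and otherwise the argument is the routine three-step check above. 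A secondary (minor) point: one should confirm $F$ is continuous, which follows since $f(\xi) = F(\xi + Z_d) - F_d$ is $\kappa$-like hence continuous, and shifting/adding constants preserves continuity.

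Let me now also double-check the edge case where the domain of $f$ might be restricted — Assumption \ref{assump:force} says $f: \mathbb{R} \to \mathbb{R}$, so $f$ is defined on all of $\mathbb{R}$, hence $g$ is too, and $V_F$ is defined on all of $\mathbb{R}$. Good, no domain issues.

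So overall the plan: (i) reduce to analyzing $g(\xi) = \kappa_F(\xi + Z_d, f(\xi))$; (ii) establish $\mathrm{sign}(g(\xi)) = \mathrm{sign}(\xi)$ and $g(0)=0$ from the $\kappa$-like properties; (iii) get continuity of $g$ (modulo the joint-continuity caveat on $\kappa_F$) and apply FTC for differentiability with $V_F' = g$; (iv) do the sign-of-the-integral argument on $[0,r_Z]$ vs $[r_Z,0]$ for nonnegativity and the zero-set; (v) conclude Lyapunov-like. The one genuinely non-mechanical issue is the continuity of $\kappa_F$, which I believe needs to be an added hypothesis.
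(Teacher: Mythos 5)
Your proposal is correct and follows essentially the same route as the paper: the paper's proof also reduces to the integrand $g(\xi)=\kappa_F\bigl(\xi+Z_d,\,F(\xi+Z_d)-F_d\bigr)$, asserts that $g$ is $\kappa$-like (by the ``composition of $\kappa$-like functions'' argument), and then invokes the standard fact that the integral of a $\kappa$-like function is Lyapunov-like. Two remarks on the comparison. First, your caveat about joint continuity is legitimate: Definition~\ref{def:potential} only requires $\kappa_F(s_1,\cdot)$ to be $\kappa$-like for each fixed $s_1$, so continuity of $\xi\mapsto\kappa_F(\xi+Z_d,f(\xi))$ in the first argument is silently assumed by the paper's composition argument; the intended reading (and the experimental choice of $\kappa_F$) clearly has $\kappa_F$ jointly continuous, but you are right that this should be stated. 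Second, your version is in one respect more careful than the paper's: because $s_1=\xi+Z_d$ varies along with $s_2=f(\xi)$, the map $g$ is not a composition of two single-variable $\kappa$-like functions, and its monotonicity (required for being $\kappa$-like) need not hold in general; you correctly observe that only the sign-definiteness of $g$ and its continuity are needed for $V_F$ to be Lyapunov-like, which is exactly what your three-step argument uses.
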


Considering that $V_F$ is Lyapunov-like, the force-exertion subtask can be described as either $r_Z=0$ or $V_F(r_Z)=0$. 

As it will be clear soon, the Lyapunov function in \eqref{eq:VF} was crafted with three goals in mind: (i) to permit the formal proof of mathematical properties of the controller, such as Lyapunov stability, (ii) to allow flexibility in choosing the controller parameters, and (iii) to guarantee that the controller can be implemented \emph{without explicit knowledge} of neither the force-insertion model $F$ nor $Z_d$. Indeed, only the measured force $F(Z(q))$ and an estimate $Z_d^*$ of $Z_d$ will be required. Furthermore, it will not be actually necessary to compute $V_F$ to implement the controller: it is only a mathematical construct that allows the formal results to be established. 
\subsection{The alignment subtask}
\label{subs:daed}
Given the force subtask Lyapunov function $V_F$, one could compute $\dot{q}$ through a resolved rate controller scheme: if $\dot{q}$ is chosen so $\frac{d}{dt} r_Z(q) = \nabla_q r_Z(q)^\top \dot{q} = -\frac{\partial V_F}{\partial r_Z}(r_Z)$, then Lyapunov theory implies that $r_Z$ converges to $0$ asymptotically since $\dot{V}_F = -(\frac{\partial V_F}{\partial r_Z}(r_Z))^2$ and the right-hand side is nonpositive and zero if and only if $r_Z=0$. The controller $\dot{q}$ could be any solution to the linear equation $\nabla_q r_Z(q)^\top \dot{q} = -\frac{\partial V_F}{\partial r_Z}(r_Z)$.

However, this may not comply with the \emph{alignment subtask}, and the UAM may try to exert force without the proper alignment, which may compromise the safety of the system (e.g., the manipulator can collide with the plane sideways). For the purpose of addressing this issue, we need to define the \emph{alignment error function}.

\begin{definition} \label{def:A} Let 
\begin{itemize}
    \item $V_A^{XY}: \mathbb{R}^2 \mapsto \mathbb{R}^+$ be a twice differentiable Lyapunov-like function such that Hessian at the point $(0,0)$ is positive definite; 
    \item Let  $\aof: \mathbb{R} \mapsto \mathbb{R}^+$ be a  twice differentiable strictly $\kappa$-like function;
\end{itemize} 
Define the \emph{alignment error} function as 
\begin{equation}
    A \triangleq V_A^{XY}(r_X, r_Y) + \aof(r_O). \label{A_def}
\end{equation}
\end{definition}
Note that $A$ is a proper error alignment metric. Indeed, due to its properties, $A \geq 0$ and $A=0$ iff $r_X=r_Y=r_O = 0$, that is, when the alignment task is solved. Then, to ensure this safety, we need to consider a distance function between the drone and the plane. In this work, $Z(q)$, the distance between the end-effector and the plane, is selected.

With these definitions, a \emph{distance-alignment error} relationship will be enforced. Essentially, the distance between the end-effector and the plane  $Z(q)$ and the alignment $A(q)$ should follow a relationship in which there is a minimal distance between the end-effector and the surface, $Z$, for each alignment error $A$. If the force subtask is solved, the distance decreases so the end-effector can have the insertion necessary to provide the desired force, $Z=Z_d$, and thus this distance-alignment error relationship will force the alignment task variables $r_X$, $r_Y$ and $r_O$ to be close to $0$.

Geometrically, this can be visualized by plotting $A$ and $Z$ in a diagram and enforcing the points along the trajectory to be above a certain curve. This curve should be such that when $Z=Z_d$, the maximum allowed alignment error should be small. This maximum allowed error should be ideally zero, but  that requires the perfect knowledge of the value of $Z_d$. This implies that the closer is $Z_d^*$ to the true $Z_d$, the smaller is the error that we can enforce. To formally define this alignment error-distance relationship, we introduce the following definition:

\begin{definition} \label{def:kappa} Let 
\begin{itemize}
    \item $\kappa_{A}: \mathbb{R}^+ \mapsto \mathbb{R}^+$  be a strictly $\kappa$-like function;
    \item $Z_d^*$ be any number such that $Z_d^* < Z_d$ (see Assumption \ref{assump:Zd}). 
\end{itemize} 
Define the  \emph{barrier distance-alignment relation} as
\begin{equation}
    B \triangleq Z - Z_d^* - \kappa_{A}(A).\label{B_def}   \ \ \ \qedsymbol
\end{equation}
\end{definition}

Given the definition of $B$, the desired relationship can be described by $B(q) \geq 0$. Note that it prescribes a minimum distance $Z$ for each alignment error $A$. This relationship is parametrized by the function $\kappa_A$.
Figure~\ref{fig:diagram} displays this diagram with $\kappa_{A}(s) = s/(\sqrt{s}+0.2)$, $Z_d=-0.1$ and $Z_d^{*}=-0.3$. When $Z=Z_d$, a maximum alignment of $\kappa_A^{-1}(Z_d-Z_d^*) \approx 0.1$ is enforced.

\begin{figure}[htbp]
    \centering
    \includegraphics[width=0.9\columnwidth,keepaspectratio]{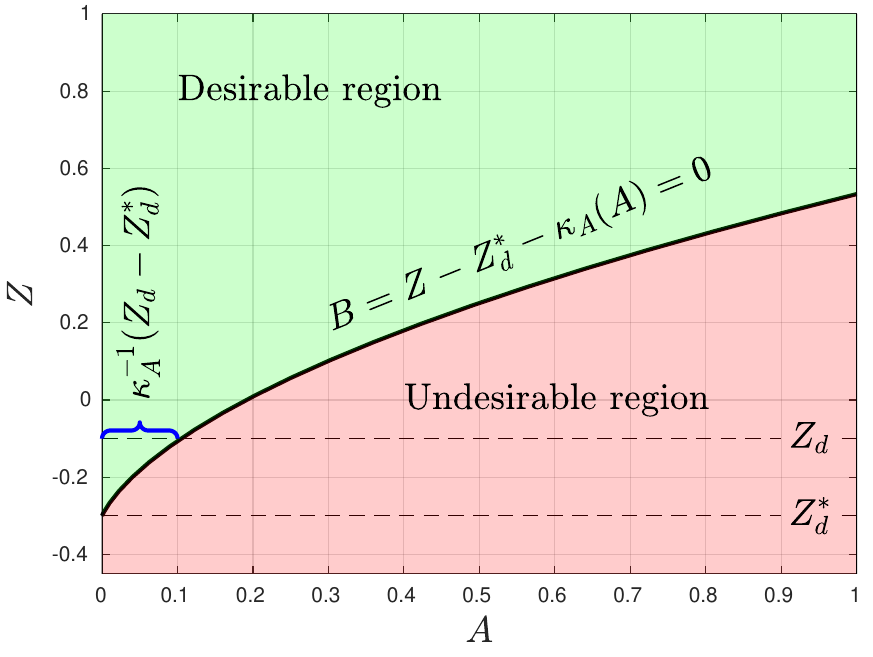}
    \caption{Alignment error ($A$) vs. distance ($Z$) function diagram for UAM. Any safe trajectory $q(t)$ should be such that $(A(q(t)),Z(q(t)))$ should be in the green region (i.e., $B(q(t)) > 0$). This will be enforced by the controller.}
    \label{fig:diagram}
\end{figure}

Note that Definition \ref{def:kappa}  is such that the point $Z=Z_d$, $A=0$ is above the barrier curve of \eqref{B_def} ($B \geq 0$) in the $(A,Z)$ diagram. This is necessary because otherwise the operational point in which the desired force is exerted would not be complying with the safety constraint $B \geq 0$. Regardless, it is sufficient to select any $Z_d^* < Z_d$. This is convenient because, in practice, it is difficult to know exactly the value of $Z_d$. The value of $Z_d^*$ , however, should be chosen judiciously: selecting a very negative number will make the condition $B \geq 0$ very loose and no reasonable distance-alignment relationship will be enforced. In fact, it is clear from the diagram and the properties of the function $\kappa_A$ that when $Z=Z_d$, $A$ is forced to be less or equal than $\kappa_A^{-1}(Z_d-Z_d^*)$. Consequently, the smaller is $Z_d-Z_d^*$ and/or the steeper is the function $\kappa_{A}$ around $0$, the smaller is the enforced steady state alignment error. See Figure  \ref{fig:diagram}.

Since $r_X, r_Y, r_O$ are functions of the configuration $q$, so is $A(q)$. Furthermore, the distance $Z$ is clearly a function of $q$ as well, and consequently so is $B = B(q)$.  Finally, note that \eqref{A_def} has a separated term for the $r_X, r_Y$ part of the subtask ($V_A^{XY}$) and one for the $r_O$ part of the subtask ($\aof$). This separation is not merely structural; it plays a vital role in establishing certain mathematical properties of the controller.

As opposed to the force subtask, that will be enforced by trying to enforce a resolved rate controller as much as possible, the alignment subtask will be enforced through an inequality constraint in an optimization problem. The reasons for that are twofold. First, it is desirable to impose not only a steady-state constraint  that the alignment should be zero, but a \emph{hard-constraint} ($B \geq 0$) on the behavior of the alignment during the whole trajectory. For this purpose, a hard constraint in the optimization problem for computing the control input is more suitable than trying to enforce a resolved-rate for the alignment error using the objective function. This is because in the latter there are no guarantees that the resolved-rate will be achieved. Second, as it was described before, a \emph{minimal} alignment is imposed to each distance, and thus the task is better described as an inequality instead of an equality like in the resolved rate approach.
Since $B$ is differentiable, the constraint $B \geq 0$ will  be enforced using a CBF inequality.

\subsection{{Joint and configuration velocity limits}}
\label{subs:jcvl}
Limits for $\dot{q}$ and joint limits for the manipulator will also be enforced. Consider joint limits for the manipulator $\underline{q}_m \leq q_m \leq \overline{q}_m$ and configuration velocity limits. The joint limits can be enforced through CBFs, and the corresponding differential inequalities can be merged into the configuration velocity limits. For this, the following definition is necessary.

\begin{definition} \label{def:lim} Split $\underline{u}$ and $\overline{u}$ in its first four entries  $\underline{u}_D$, $\overline{u}_D$ and remaining entries $\underline{u}_m$, $\overline{u}_m$. Let $\underline{q}_m < \overline{q}_m$, $\underline{u}_D < 0 <  \overline{u}_D$,  $\underline{u}_m < 0 < \overline{u}_m$. Let $K_L$ be a positive scalar.

Define
\begin{eqnarray}
\label{eq:bdef}
 && \underline{b}(q) \triangleq [\underline{u}_D^{\top} \ \ \max(\underline{u}_m, K_{L}(\underline{q}_m-q_m))^{\top}]^{\top} \nonumber\\
&& \overline{b}(q) \triangleq [\overline{u}_D^{\top} \ \ \min(\overline{u}_m, K_{L}(\overline{q}_m-q_m))^{\top}]^{\top}   \label{q_m_cbf}
\end{eqnarray}
\noindent in which the maximum and minimum are taken component-wise. $\qedsymbol$
\end{definition}
Then  $\underline{b}(q) \leq \dot{q} \leq \overline{b} (q)$ implements the  constraint. 
\subsection{{Optimization-based controller}}
\label{subs:final_ctrl}
The optimization problem for computing the control inputs is defined in this section. The idea is to have the controller be as close as possible to a resolved rate controller for the force subtask (Subsection \ref{subs:forceex}), while considering the alignment relationship (Subsection \ref{subs:daed}) for the alignment subtask. Additionally, joint/configuration velocity limits are taken into account (Subsection \ref{subs:jcvl}). Before presenting the controller, it is necessary to introduce the following definition.

\begin{definition} \label{def:W} Let $\epsilon_i$, $i=1,2,4,...,n$, be positive scalars and $\epsilon_3=0$. Define $E$ as the diagonal matrix with entries $\epsilon_i$. Then, define $W: \mathbb{R}^n \times \mathbb{R}^n \mapsto \mathbb{R}^+$ as 
\begin{equation}
W(q,\dot{q}) \triangleq \left(\nabla_q r_Z(q)^\top\dot{q} {+} \frac{\partial V_F}{\partial r_Z}\big(r_Z(q)\big)\right)^2 {+} \dot{q}^{\top}E\dot{q}. \ \ \ \qedsymbol
\label{W_def}
\end{equation}    
\end{definition}

Note that from \eqref{W_def}, $W(q,\dot{q})$ is quadratic and convex in the variable $\dot{q}$ for all fixed $q$. In fact, it will be shown that it is \emph{strictly convex} in $\dot{q}$ for all fixed $q$. The existence of the term $\dot{q}^\top E \dot{q}$ is to ensure this strict convexity. The reason why $\epsilon_3 = 0$ (associated with $\dot{q}_3=\dot{z}$) instead of being positive as the remaining $\epsilon_i$ is twofold: a) it is not necessary to ensure strict convexity, and b) makes the mathematical analysis simpler.

The proposed controller to solve the complete (alignment+force exertion) task can then be defined using the aforementioned elements.

\begin{definition} \label{def:hlvlu}  Define the vector $b^*(q) \in \mathbb{R}^n$ such that the $i_{th}$ entry is $\underline{b}_i(q)$ if $\frac{\partial B}{\partial q_i}(q)$ is negative  and $\overline{b}_i(q)$ otherwise.  Let $\kappa_B: \mathbb{R} \mapsto \mathbb{R}$ be a $\kappa$-like function and
\begin{eqnarray}
\label{eq:dotqmax}
    && \setF \triangleq \{q \in \mathbb{R}^n \ | \  \nabla_q B(q) ^{\top}b^*(q) \geq -\kappa_B(B(q))\}.
\end{eqnarray}
The proposed control function $u_H: \setF \mapsto \mathbb{R}^n$ will be defined as the solution of the following QP:
\begin{eqnarray}
\label{eq:uH}
    u_H(q) &\triangleq& \textsl{arg}\min_\mu \ \ \  W(q,\mu) ~ \mbox{such that} \nonumber \\
        && \nabla_q B(q)^{\top} \mu \geq -\kappa_B \big( B(q) \big) \nonumber \\
        && \underline{b}(q) \leq \mu \leq \overline{b} (q). \ \ \ \qedsymbol
\end{eqnarray}
\end{definition}

It is implicitly assumed that the optimization problem is feasible and has a unique solution. These aspects will be discussed in Section \ref{sec:formalg}, but, overall, the optimization problem is feasible if and only if $q \in \setF$ (that is why the domain of $u_H$ is $\setF$). Furtermore, the solution is always unique when it exists. Nevertheless, the objective function tries to enforce the resolved rate controller relationship $\frac{d}{dt} r_Z(q) = -\frac{\partial V_F}{\partial r_Z}(r_Z)$ as much as possible while considering the constraints. The term $\dot{q}^{\top}E\dot{q}$ is a regularization term, and it is very important because otherwise $W$ is not strictly convex on the decision variable $\dot{q}=\mu$. Strict convexity, as it will be discussed in Section \ref{sec:formalg}, is necessary to guarantee uniqueness and continuity of the solution $u_H(q)$. 

It is important to stress $u_H$ is indeed a function only of $q$, thereby rendering the closed-loop dynamical system $\dot{q} = u_H(q)$ autonomous. This is because, due to Assumption \ref{assump:force},  the reaction force is modeled to be configuration-dependent through the function $F(Z(q))$. However, it should be noted that \emph{it is not necessary} to know the force-reaction model function $F$  to compute this controller. Indeed,  these terms only appear in $\frac{\partial}{\partial r_z} V_F\big(r_Z(q)\big)$. But, using the integral definition of $V_F$ in Definition \ref{def:potential} and the fundamental theorem of calculus:
\[
    \frac{\partial V_F}{\partial r_Z}(r_Z) = \kappa_F\big(r_Z{+}Z_d,F(r_Z{+}Z_d){-}F_d\big) = \kappa_F\big(Z,F-F_d\big).
\]
Note that this is a function only of $Z, F$ and $F_d$. Consequently, 
this term can be implemented solely by knowing $F(Z)$. This quantity can be obtained without any information about the force model provided that there is a force sensor that measures it. The mention to the force model is only necessary for the sake of the mathematical analysis.

\section{Controller Analysis \label{sec:formalg}}

Some formal guarantees and justifications about the proposed controller design of Equation \eqref{eq:uH} will be established. 

First, it is necessary to consider the following result.

\begin{lemma}\label{lemma:pdresulta} The following results hold $\forall q$.
\begin{eqnarray}
\hspace*{-14mm}
    && \frac{\partial X(q)}{\partial x} = 1 \ , \ \frac{\partial Y(q)}{\partial x} = 0 \ , \ \frac{\partial  \zc(q)}{\partial x} = 0 \ , \ \frac{\partial Z(q)}{\partial x} = 0 \nonumber \\
\hspace*{-14mm}
    && \frac{\partial X(q)}{\partial y} = 0 \ , \ \frac{\partial Y(q)}{\partial y} = 1 \ , \ \frac{\partial  \zc(q)}{\partial y} = 0 \ , \ \frac{\partial Z(q)}{\partial y} = 0 \nonumber \\
\hspace*{-14mm}
    && \frac{\partial X(q)}{\partial z} = 0 \ , \ \frac{\partial Y(q)}{\partial z} = 0 \ , \ \frac{\partial  \zc(q)}{\partial z} = 0 \ , \ \frac{\partial Z(q)}{\partial z} = 1 \nonumber ~\qedsymbol
\end{eqnarray}
\end{lemma}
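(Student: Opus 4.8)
\textbf{Proof strategy for Lemma \ref{lemma:pdresulta}.}

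The plan is to exploit the kinematic chain that relates the end-effector frame $\mathcal{E}$ to the plane frame $\mathcal{P}$, passing through the UAV frame $\mathcal{U}$. The key observation is that the configuration variables $x$, $y$, $z$ only enter through the translation $p_{\mathcal{P}}^{\mathcal{U}}$ of the UAV center expressed in $\mathcal{P}$, and they enter it \emph{linearly} as the components of that translation vector. Concretely, write the end-effector position in $\mathcal{P}$ as
\begin{equation}
p_{\mathcal{P}}^{\mathcal{E}} = p_{\mathcal{P}}^{\mathcal{U}} + R_{\mathcal{P}}^{\mathcal{U}}\, p_{\mathcal{U}}^{\mathcal{E}}(q_m,\psi,\phi,\theta),
\end{equation}
and the end-effector orientation in $\mathcal{P}$ as $R_{\mathcal{P}}^{\mathcal{E}} = R_{\mathcal{P}}^{\mathcal{U}} R_{\mathcal{U}}^{\mathcal{E}}(q_m,\psi,\phi,\theta)$. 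Here $p_{\mathcal{P}}^{\mathcal{U}} = [x\ y\ z]^\top$ by the Definition preceding Definition \ref{def:taskcoord}, and crucially $R_{\mathcal{P}}^{\mathcal{U}}$ depends only on $\psi$ (and the parameters $\phi,\theta$), not on $x,y,z$, because the roll/pitch/yaw decomposition and the fixed transform $R_{\mathcal{P}}^{\mathcal{W}}$ involve no translational coordinates. Likewise $p_{\mathcal{U}}^{\mathcal{E}}$ and $R_{\mathcal{U}}^{\mathcal{E}}$, being forward kinematics of the arm relative to the UAV body, depend only on $q_m$ (and $\psi,\phi,\theta$ through how the arm base is mounted), never on $x,y,z$.

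First I would differentiate the orientation identity: since neither $R_{\mathcal{P}}^{\mathcal{U}}$ nor $R_{\mathcal{U}}^{\mathcal{E}}$ depends on $x$, $y$, or $z$, every column of $R_{\mathcal{P}}^{\mathcal{E}}$ — in particular $\zc$ — has zero partial derivative with respect to each of $x,y,z$. This immediately gives $\partial\zc/\partial x = \partial\zc/\partial y = \partial\zc/\partial z = 0$. Next I would differentiate the position identity. The second term $R_{\mathcal{P}}^{\mathcal{U}}\, p_{\mathcal{U}}^{\mathcal{E}}$ is independent of $x,y,z$, so $\partial p_{\mathcal{P}}^{\mathcal{E}}/\partial x = \partial p_{\mathcal{P}}^{\mathcal{U}}/\partial x = \widecheck{e}_x$, and similarly $\partial p_{\mathcal{P}}^{\mathcal{E}}/\partial y = \widecheck{e}_y$, $\partial p_{\mathcal{P}}^{\mathcal{E}}/\partial z = \ez$. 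Reading off the components $X,Y,Z$ of $p_{\mathcal{P}}^{\mathcal{E}}$ then yields exactly the nine scalar identities in the statement.

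I do not anticipate a genuine obstacle here; the content is essentially bookkeeping about which frame transforms depend on which coordinates. The one subtlety worth making explicit is \emph{why} $R_{\mathcal{P}}^{\mathcal{U}}$ is translation-free: this follows from $R_{\mathcal{P}}^{\mathcal{U}} = (R_{\mathcal{P}}^{\mathcal{W}})^\top R_{\mathcal{W}}^{\mathcal{U}}$ with $R_{\mathcal{W}}^{\mathcal{U}} = R_z(\psi)R_y(\theta)R_x(\phi)$ (Definition \ref{def:basic}) and $R_{\mathcal{P}}^{\mathcal{W}}$ constant (Assumption \ref{assump:plane}), none of which involve $x,y,z$. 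Given that, the result is a one-line consequence of differentiating a sum in which only the known linear term $[x\ y\ z]^\top$ carries the dependence. I would present the argument in that order: set up the two kinematic identities, note the coordinate (in)dependence of each factor, then differentiate.
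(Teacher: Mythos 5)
Your proposal is correct and follows essentially the same route as the paper: the paper's proof composes the homogeneous transformations $T_{\mathcal{P}}^{\mathcal{U}}(x,y,z,\psi)\,T_{\mathcal{U}}^{\mathcal{E}}(q_m)$ and reads off by inspection that $x,y,z$ enter only through the translation block, which is exactly your decomposition of $p_{\mathcal{P}}^{\mathcal{E}}$ and $R_{\mathcal{P}}^{\mathcal{E}}$ written out explicitly. The only cosmetic difference is your transpose on $R_{\mathcal{P}}^{\mathcal{W}}$ versus the paper's $R_{\mathcal{P}}^{\mathcal{W}}R_{\mathcal{W}}^{\mathcal{U}}(\psi)$, which is a convention detail and does not affect the argument since that factor is constant either way.
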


\subsection{Feasibility, uniqueness and continuity of the controller}
The first key result pertaining to the controller is:

\begin{proposition} \label{prop:nonzerograd} $\nabla_q B(q)$ is non-zero for all $q$. In particular $\frac{\partial B}{\partial z}(q)=1$ for all $q$. $\qedsymbol$
\end{proposition}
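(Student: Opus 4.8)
The plan is to prove the claim by computing $\nabla_q B(q)$ componentwise and showing that the component corresponding to $z$ (the third entry of the configuration vector $q = [x\ y\ z\ \psi\ q_m^\top]^\top$) is identically equal to $1$. Since a vector with a component equal to $1$ cannot be the zero vector, this immediately yields the stronger conclusion, which in turn gives the first statement.

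First I would recall the definition $B = Z - Z_d^* - \kappa_A(A)$ from Definition~\ref{def:kappa}, where $A = V_A^{XY}(r_X, r_Y) + \aof(r_O)$ from Definition~\ref{def:A}, and the task coordinates $r_X = X$, $r_Y = Y$, $r_O = 1 + \ez^\top \zc$ from Definition~\ref{def:taskcoord}. Differentiating with respect to $z$ and using the chain rule gives
\begin{equation*}
\frac{\partial B}{\partial z} = \frac{\partial Z}{\partial z} - \kappa_A'(A)\left( \frac{\partial V_A^{XY}}{\partial r_X}\frac{\partial X}{\partial z} + \frac{\partial V_A^{XY}}{\partial r_Y}\frac{\partial Y}{\partial z} + (\aof)'(r_O)\, \ez^\top \frac{\partial \zc}{\partial z} \right).
\end{equation*}
Now I invoke Lemma~\ref{lemma:pdresulta}, which states precisely that $\partial X/\partial z = 0$, $\partial Y/\partial z = 0$, $\partial \zc/\partial z = 0$, and $\partial Z/\partial z = 1$. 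Substituting these, every term in the parenthesis vanishes and the first term equals $1$, so $\partial B/\partial z \equiv 1$. The differentiability needed to apply the chain rule is guaranteed by the hypotheses: $\kappa_A$ is strictly $\kappa$-like hence differentiable (Definition~\ref{def:kappa}), $V_A^{XY}$ is twice differentiable and $\aof$ is twice differentiable (Definition~\ref{def:A}), and $X, Y, Z, \zc$ are smooth functions of $q$ via forward kinematics.

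Finally, since $\nabla_q B(q)$ has its third component equal to $1$ for every $q$, it follows that $\nabla_q B(q) \neq 0$ for all $q$, which is the stated claim. I do not anticipate a genuine obstacle here: the only subtlety is making sure one is allowed to differentiate through $\kappa_A$, $V_A^{XY}$, and $\aof$, which is exactly why those regularity assumptions were built into their respective definitions; the heavy lifting — the fact that the end-effector distance $Z$ and orientation $\zc$ do not depend on the vehicle's vertical coordinate in the plane frame in a way that couples with $x, y$, while $Z$ depends on $z$ with unit slope — has already been isolated in Lemma~\ref{lemma:pdresulta}, so this proposition is essentially a one-line corollary of that lemma plus the chain rule.
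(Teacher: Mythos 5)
Your proposal is correct and follows the same route as the paper: the paper's proof is exactly the observation that, by Lemma~\ref{lemma:pdresulta}, $\frac{\partial B}{\partial z} = \frac{\partial Z}{\partial z} = 1$ for all $q$, hence the gradient cannot vanish. You simply spell out the chain-rule bookkeeping through $\kappa_A$, $V_A^{XY}$, and $\aof$ that the paper leaves implicit.
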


This proposition guarantees that the left-hand side of the CBF constraint is always non-null. The intuitive explanation to this result is very simple: it is always possible to improve the distance-alignment relationship (i.e., increase $B$) by moving the vehicle in the direction of $z$, that is, moving away from the plane.

The following definition will be important for the subsequent results.
\begin{definition} Let
\[
    \setP\triangleq\left\{q  \in \mathbb{R}^n \  \lvert \  B(q) > 0 \ , \ \underline{q}_m < q_m < \overline{q}_m \right\}.
\qedsymbol   
\]
\end{definition}

\begin{proposition} \label{prop:pi}  {The set $\setP$ is positive invariant.}$\qedsymbol$
\end{proposition}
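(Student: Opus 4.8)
The goal is to show that $\setP = \{q : B(q) > 0,\ \underline{q}_m < q_m < \overline{q}_m\}$ is positively invariant under the closed-loop flow $\dot q = u_H(q)$. The plan is to argue that the boundary of $\setP$ cannot be crossed by a trajectory starting inside, by checking separately the two types of boundary: the ``barrier'' face where $B(q)=0$, and the joint-limit faces where some $q_{m,j}$ hits $\underline{q}_{m,j}$ or $\overline{q}_{m,j}$. Throughout I will use that along any trajectory $\dot B(q) = \nabla_q B(q)^\top u_H(q)$ and that $u_H(q)$ satisfies the QP constraints in Definition~\ref{def:hlvlu}, in particular $\nabla_q B(q)^\top u_H(q) \geq -\kappa_B(B(q))$ and $\underline b(q) \le u_H(q) \le \overline b(q)$.

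First I would handle the barrier face. Since $u_H(q)$ is feasible for the QP, we have $\dot B = \nabla_q B(q)^\top u_H(q) \ge -\kappa_B(B(q))$. This is a standard CBF differential inequality: $\kappa_B$ is $\kappa$-like, hence $\kappa_B(0)=0$, so whenever $B(q)=0$ we get $\dot B \ge 0$. A comparison-lemma / Nagumo-type argument then shows $B(q(t))$ cannot become negative, so the set $\{B(q)\ge 0\}$ is forward invariant, and in fact $\{B(q)>0\}$ is as well since $B$ cannot reach $0$ in the interior of the flow except tangentially. (One can also invoke the fact, noted after Definition~\ref{def:hlvlu}, that feasibility of the QP is equivalent to $q\in\setF$, i.e.\ $\nabla_q B(q)^\top b^*(q)\ge-\kappa_B(B(q))$; since $b^*$ is the worst-case corner of the box $[\underline b,\overline b]$ with respect to the sign pattern of $\nabla_q B$, this guarantees every feasible $\dot q$ in the box satisfies the CBF inequality, and one should also remark that $\setP\subseteq\setF$ so the QP is indeed feasible on all of $\setP$.) For the joint-limit faces, I would use Definition~\ref{def:lim}: when $q_{m,j} \to \overline q_{m,j}$ from below, the $j$-th upper bound $\overline b_j(q) = \min(\overline u_{m,j}, K_L(\overline q_{m,j}-q_{m,j})) \to 0$, and since $u_H(q) \le \overline b(q)$ componentwise we get $\dot q_{m,j} = (u_H)_{\,4+j}(q) \le K_L(\overline q_{m,j}-q_{m,j})$, which is exactly the linear CBF condition forcing $q_{m,j}$ to stay below $\overline q_{m,j}$; symmetrically with $\underline b$ and $K_L(\underline q_{m,j}-q_{m,j})$ for the lower limit.

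Combining the two, no component of the boundary $\partial\setP$ can be crossed outward, so any trajectory initialized in $\setP$ remains in $\setP$ for all forward time on its interval of existence; this is the claimed positive invariance. I expect the main obstacle to be the rigor of the barrier-face argument: one must be careful that $u_H$ is only defined and (by the results being built toward) continuous on $\setF\supseteq\setP$, so existence/uniqueness of solutions and the applicability of the comparison lemma should be justified on $\setP$, and one must make sure the conclusion is invariance of the \emph{open} set $\{B>0\}$ (together with the open joint-limit conditions) rather than merely its closure — this follows because $\kappa_B(B)>0$ strictly when $B>0$ is not guaranteed (only $\ge 0$), so the cleanest route is the standard result that the CBF inequality $\dot B\ge-\kappa_B(B)$ with $B(0\text{-level})$ on the boundary renders both $\{B\ge0\}$ and $\{B>0\}$ forward invariant, which I would cite or reprove via the comparison lemma applied to $\dot y = -\kappa_B(y)$.
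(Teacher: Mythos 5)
Your proposal is correct and follows essentially the same route as the paper's (very terse) proof: the QP's CBF constraint $\nabla_q B^\top u_H \geq -\kappa_B(B)$ handles the $B>0$ face via the standard comparison argument, and the $K_L(\overline{q}_m - q_m)$ / $K_L(\underline{q}_m - q_m)$ terms in $\overline{b},\underline{b}$ handle the joint-limit faces. Your version is in fact more careful than the paper's, which omits the feasibility-on-$\setP$ remark and the open-set/comparison-lemma subtleties you flag.
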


The next two propositions address the feasibility of the QP in \eqref{eq:uH}.
\begin{proposition}
    \label{prop:feasF} The QP in \eqref{eq:uH} is feasible if and only if $q \in \setF$. $\qedsymbol$
\end{proposition}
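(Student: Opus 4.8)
The plan is to reduce the question to an elementary separable linear program over a box. Feasibility of the QP in \eqref{eq:uH} means exactly that the box $\{\mu : \underline{b}(q) \le \mu \le \overline{b}(q)\}$ and the half-space $\{\mu : \nabla_q B(q)^\top \mu \ge -\kappa_B(B(q))\}$ have non-empty intersection. Since the second set is carved out by a single affine inequality, this intersection is non-empty \emph{if and only if} $\max_{\underline{b}(q) \le \mu \le \overline{b}(q)} \nabla_q B(q)^\top \mu \ge -\kappa_B(B(q))$. The crux is to identify the maximizer: because the box is a Cartesian product of intervals and the objective $\sum_i \tfrac{\partial B}{\partial q_i}(q)\,\mu_i$ is separable, it is maximized coordinate-wise by setting $\mu_i = \overline{b}_i(q)$ when $\tfrac{\partial B}{\partial q_i}(q) > 0$ and $\mu_i = \underline{b}_i(q)$ when $\tfrac{\partial B}{\partial q_i}(q) < 0$ (the choice being immaterial when the partial derivative vanishes). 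This is precisely the vector $b^*(q)$ of Definition \ref{def:hlvlu}, so $\max_{\underline{b}(q) \le \mu \le \overline{b}(q)} \nabla_q B(q)^\top \mu = \nabla_q B(q)^\top b^*(q)$.

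Both implications then follow immediately. If $q \in \setF$, then $\nabla_q B(q)^\top b^*(q) \ge -\kappa_B(B(q))$; since every entry of $b^*(q)$ is an endpoint of the corresponding non-empty interval $[\underline{b}_i(q),\overline{b}_i(q)]$ (cf.\ Definition \ref{def:lim}), the vector $b^*(q)$ lies in the box and hence is a feasible point of the QP. Conversely, if the QP has a feasible $\mu$, then $\mu$ lies in the box and satisfies the CBF inequality, so $\nabla_q B(q)^\top b^*(q) = \max_{\text{box}} \nabla_q B(q)^\top(\cdot) \ge \nabla_q B(q)^\top \mu \ge -\kappa_B(B(q))$, i.e.\ $q \in \setF$.

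The only points needing care are bookkeeping rather than substance. First, one must check that the box $[\underline{b}(q),\overline{b}(q)]$ is non-empty: for the first four coordinates this is immediate from $\underline{u}_D < 0 < \overline{u}_D$, and for the manipulator coordinates it follows from $\underline{q}_m < \overline{q}_m$ together with $\underline{u}_m < 0 < \overline{u}_m$ on the operational region where the joint limits are respected (in particular on $\overline{\setP}$), so that each $[\underline{b}_i(q),\overline{b}_i(q)]$ is a genuine interval and in fact contains $0$. Second, one must fix the tie-breaking convention for coordinates with $\tfrac{\partial B}{\partial q_i}(q)=0$, which is harmless since any value in $[\underline{b}_i(q),\overline{b}_i(q)]$ — and in particular $\overline{b}_i(q)$, the value chosen by $b^*$ — attains the coordinate-wise maximum. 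I do not anticipate any genuine obstacle: the argument is essentially the fact that a separable linear program over a box is solved by independent per-coordinate choices, combined with the definition of $b^*$ and of $\setF$.
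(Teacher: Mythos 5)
Your proposal is correct and follows essentially the same route as the paper's proof: both reduce feasibility to the separable linear program $\max_{\underline{b}(q)\le\mu\le\overline{b}(q)}\nabla_q B(q)^\top\mu$, solve it coordinate-wise to identify the maximizer as $b^*(q)$, and read off the condition $\nabla_q B(q)^\top b^*(q)\ge-\kappa_B(B(q))$ defining $\setF$. Your extra care about the non-emptiness of the box and the tie-breaking for vanishing partial derivatives is sound bookkeeping that the paper treats more briefly, but it is not a different argument.
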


\begin{proposition} \label{prop:zero} {If the initial configuration $q_0 \in \setP$, then the optimization problem is always feasible. In particular, $\mu=0$ is always a strictly feasible solution.} $\qedsymbol$
\end{proposition}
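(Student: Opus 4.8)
\textbf{Proof proposal for Proposition \ref{prop:zero}.}

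The plan is to show that if $q_0 \in \setP$ then $q(t) \in \setP$ for all $t$ by positive invariance (Proposition \ref{prop:pi}), and that for every $q \in \setP$ the decision variable $\mu = 0$ satisfies all constraints of the QP in \eqref{eq:uH} strictly, which immediately gives feasibility and in fact strict feasibility. Combined with Proposition \ref{prop:feasF}, this also shows $\setP \subseteq \setF$.

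First I would check the box constraint $\underline{b}(q) \leq \mu \leq \overline{b}(q)$ at $\mu = 0$. From Definition \ref{def:lim}, $\underline{u}_D < 0 < \overline{u}_D$ and $\underline{u}_m < 0 < \overline{u}_m$, so the first four entries of $\underline{b}(q)$ are strictly negative and of $\overline{b}(q)$ strictly positive. For the manipulator entries, since $q \in \setP$ we have $\underline{q}_m < q_m < \overline{q}_m$ componentwise, hence $K_L(\underline{q}_m - q_m) < 0 < K_L(\overline{q}_m - q_m)$; taking the componentwise $\max$ with $\underline{u}_m < 0$ keeps $\underline{b}(q) < 0$ and the $\min$ with $\overline{u}_m > 0$ keeps $\overline{b}(q) > 0$. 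Therefore $\underline{b}(q) < 0 < \overline{b}(q)$, so $\mu = 0$ lies strictly inside the box.

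Next I would check the CBF constraint $\nabla_q B(q)^\top \mu \geq -\kappa_B(B(q))$ at $\mu = 0$: the left-hand side is $0$, and since $q \in \setP$ means $B(q) > 0$, the $\kappa$-like property of $\kappa_B$ gives $\kappa_B(B(q)) > 0$, so $-\kappa_B(B(q)) < 0 \leq 0$, i.e. the constraint holds strictly. Hence $\mu = 0$ is a strictly feasible point of the QP for every $q \in \setP$, and in particular the QP is feasible there. Finally, invoking Proposition \ref{prop:pi}, starting from $q_0 \in \setP$ the trajectory remains in $\setP$ for all time, so the QP is feasible along the entire closed-loop trajectory. I do not anticipate a serious obstacle here; the only mild subtlety is making sure the componentwise $\max$/$\min$ in \eqref{eq:bdef} are handled correctly so that the strict sign of $\underline{b}(q)$ and $\overline{b}(q)$ is preserved on $\setP$, which is exactly where the open condition $\underline{q}_m < q_m < \overline{q}_m$ in the definition of $\setP$ (rather than the closed one) is used.
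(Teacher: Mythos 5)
Your proposal is correct and follows essentially the same route as the paper's (much terser) proof: verify that $\mu=0$ strictly satisfies both the box constraint and the CBF constraint for any $q\in\setP$, using $B(q)>0$ with the $\kappa$-like property of $\kappa_B$ and the strict joint-limit condition $\underline{q}_m<q_m<\overline{q}_m$ together with Definition \ref{def:lim}, and then invoke positive invariance (Proposition \ref{prop:pi}) to extend this along the trajectory. Your write-up simply fills in the componentwise $\max$/$\min$ details that the paper leaves implicit.
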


The intuition for the above result is that ``stopping'' ($\dot{q} = \mu = 0$) is always a feasible action considering the constraints if the UAM is inside the safe region $\setP$. This is obvious because this will not change anything in the system. 

Note that the combination of Proposition \ref{prop:zero} and Proposition \ref{prop:feasF}  implies that $\setP \subseteq \setF$. This is because in $\setP$, the QP is feasible (Proposition \ref{prop:zero}), and this is exactly what characterizes $\setF$ (Proposition \ref{prop:feasF}). However, there may exist points of $\setF$ outside of $\setP$: those are the points that do not comply with the imposed constraints ($B(q) \geq 0$ and joint limits) but in which $u_H(q)$ is still well defined. This is highly beneficial, because it may be the case that, due to a starting condition (or due to disturbances), the system may fall outside of $\setP$, and in that case the controller will drive the system back towards it.

Uniqueness and continuity of the solution will be addressed in the Propositions \ref{prop:unique} and \ref{eq:propcontinuity}. However, before delving into these, it is essential to introduce the following lemma:

\begin{lemma} \label{lemma:pod} For all $ q$, $\|\dot{r}_Z\|^2 + \dot{q}^\top E \dot{q} = \dot{q}^\top ( \nabla_q r_Z(q) \nabla_q r_Z^{\top}(q) + E)\dot{q}$ is nonnegative and zero if and only if $\dot{q}=0$. Equivalently, the matrix $\nabla_q r_Z(q) \nabla_q r_Z^{\top}(q)+E$ is positive definite, and $W(q,\dot{q})$ (as in \eqref{W_def}) is strictly convex in $\dot{q}$ for all fixed $q$. 
$\qedsymbol$ \end{lemma}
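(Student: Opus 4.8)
The plan is to exploit the block structure of $E$ together with the single fact, supplied by Lemma \ref{lemma:pdresulta}, that $Z$ depends on the coordinate $z$ with unit partial derivative. Write the quadratic form as $Q(q,\dot q) \triangleq \dot r_Z^2 + \dot q^\top E \dot q$, where $\dot r_Z = \nabla_q r_Z(q)^\top \dot q$. Since $r_Z = Z - Z_d$ and $Z_d$ is constant, $\nabla_q r_Z = \nabla_q Z$; and by Lemma \ref{lemma:pdresulta} the entry of $\nabla_q Z$ corresponding to the coordinate $z$ (the third entry of $q$) equals $1$. Also, by Definition \ref{def:W}, $E = \mathrm{diag}(\epsilon_1,\epsilon_2,0,\epsilon_4,\dots,\epsilon_n)$ with $\epsilon_i > 0$ for $i \ne 3$. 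Nonnegativity of $Q$ is then immediate: it is the sum of a square, $\dot r_Z^2 \ge 0$, and the nonnegative form $\dot q^\top E \dot q = \sum_{i\ne 3}\epsilon_i \dot q_i^2 \ge 0$.

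Next I would establish definiteness. Suppose $Q(q,\dot q) = 0$. Then both summands vanish. From $\dot q^\top E \dot q = 0$ and $\epsilon_i > 0$ for $i \ne 3$ we get $\dot q_i = 0$ for all $i \ne 3$. From $\dot r_Z^2 = 0$ we get $\dot r_Z = 0$, i.e. $\sum_{i} \tfrac{\partial Z}{\partial q_i}\dot q_i = 0$; but every term with $i \ne 3$ is already zero, so the equation reduces to $\tfrac{\partial Z}{\partial z}\,\dot z = 0$, and since $\tfrac{\partial Z}{\partial z} = 1$ this forces $\dot z = 0$. Hence $\dot q = 0$. The converse is trivial. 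Therefore $Q(q,\dot q) = \dot q^\top\bigl(\nabla_q r_Z(q)\nabla_q r_Z(q)^\top + E\bigr)\dot q$ is positive for all $\dot q \ne 0$, which is exactly positive definiteness of the symmetric matrix $M(q) \triangleq \nabla_q r_Z(q)\nabla_q r_Z(q)^\top + E$.

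Finally, for strict convexity of $W$: from \eqref{W_def}, as a function of $\dot q$ for fixed $q$, $W(q,\dot q) = \bigl(\nabla_q r_Z(q)^\top \dot q + c\bigr)^2 + \dot q^\top E \dot q$ with $c = \tfrac{\partial V_F}{\partial r_Z}(r_Z(q))$ a constant; expanding shows its Hessian in $\dot q$ is the constant matrix $2M(q)$, which is positive definite by the previous paragraph, so $W(q,\cdot)$ is strictly convex.

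\textbf{Anticipated main obstacle.} There is no deep difficulty here; the only subtlety is the degenerate direction $\dot z$, which is \emph{not} controlled by $E$ (because $\epsilon_3 = 0$) and must instead be pinned down by the $\dot r_Z^2$ term. The proof hinges entirely on the precise statement $\tfrac{\partial Z}{\partial z} = 1$ from Lemma \ref{lemma:pdresulta}; the rest is bookkeeping. I would just make sure to state clearly why vanishing of the other components makes the $\dot r_Z = 0$ equation collapse to $\dot z = 0$.
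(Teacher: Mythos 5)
Your proposal is correct and follows essentially the same route as the paper's proof: nonnegativity from the sum of a square and the positive-semidefinite form $\dot q^\top E\dot q$, then using $\epsilon_i>0$ for $i\neq 3$ to kill all components except $\dot z$, and finally invoking $\frac{\partial Z}{\partial z}=1$ from Lemma \ref{lemma:pdresulta} to force $\dot z=0$. The explicit remark that the Hessian of $W(q,\cdot)$ is $2(\nabla_q r_Z\nabla_q r_Z^\top+E)$ is a small but welcome addition that the paper leaves implicit.
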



\begin{proposition} \label{prop:unique} {The solution to the optimization problem in \eqref{eq:uH} is unique $\forall q \in \setF$.   $\qedsymbol$}
\end{proposition}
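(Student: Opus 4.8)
The plan is to invoke strict convexity of the objective together with convexity of the feasible set, which is the standard route to uniqueness of minimizers in quadratic programming. First I would note that for each fixed $q \in \setF$, the feasible set of the QP in \eqref{eq:uH} is the intersection of the half-space $\{\mu : \nabla_q B(q)^\top \mu \geq -\kappa_B(B(q))\}$ with the box $\{\mu : \underline{b}(q) \leq \mu \leq \overline{b}(q)\}$; both are convex, so their intersection is convex. By Proposition \ref{prop:feasF}, this set is nonempty precisely because $q \in \setF$, and it is clearly closed. Moreover, since the box is bounded (the entries of $\underline{b}(q)$ and $\overline{b}(q)$ are finite for each fixed $q$), the feasible set is compact. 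A continuous function attains its minimum on a nonempty compact set, so a minimizer exists.

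Next I would establish strict convexity of $\mu \mapsto W(q,\mu)$, which is exactly the content of Lemma \ref{lemma:pod}: the Hessian of $W(q,\cdot)$ is $2(\nabla_q r_Z(q)\nabla_q r_Z^\top(q) + E)$, which that lemma asserts is positive definite. A strictly convex function cannot attain the same value at two distinct points of a convex set and have both be global minima: if $\mu_1 \neq \mu_2$ were both minimizers with common minimum value $W^*$, then by strict convexity $W\big(q, \tfrac{1}{2}\mu_1 + \tfrac{1}{2}\mu_2\big) < \tfrac{1}{2}W(q,\mu_1) + \tfrac{1}{2}W(q,\mu_2) = W^*$, and the midpoint $\tfrac{1}{2}\mu_1 + \tfrac{1}{2}\mu_2$ lies in the feasible set by convexity, contradicting that $W^*$ is the minimum. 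Hence the minimizer is unique, and $u_H(q)$ is well defined for every $q \in \setF$.

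I expect this proof to be essentially routine — it is a textbook argument combining compactness (for existence), convexity of the constraint set, and strict convexity of the objective (for uniqueness). The only place requiring any care is confirming that the feasible set is genuinely nonempty for $q \in \setF$, but that is precisely Proposition \ref{prop:feasF}, which may be cited directly. The boundedness of the box constraint should also be stated explicitly since it is what upgrades a possibly-unbounded feasible region to a compact one and thereby secures existence of the minimizer before uniqueness is even discussed; alternatively one can note that $W(q,\cdot)$ is coercive in $\mu$ (because the Hessian is positive definite, so $W$ grows quadratically), which gives existence over any closed feasible set without needing compactness of the box. Either route works; the coercivity route is slightly cleaner and avoids relying on finiteness of the limit vectors.
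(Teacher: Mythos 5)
Your proof is correct and follows essentially the same route as the paper's: a convex feasible set defined by linear inequalities, plus strict convexity of $W(q,\cdot)$ via the positive definiteness of $\nabla_q r_Z \nabla_q r_Z^\top + E$ from Lemma~\ref{lemma:pod}. You are in fact slightly more careful than the paper about existence of a minimizer (the paper asserts it directly from feasibility), and of your two options the coercivity argument is the one to keep, since the box need not be compact --- some entries of $\underline{u}_D,\overline{u}_D$ may be infinite, as in the paper's own experiments.
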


\begin{proposition} \label{eq:propcontinuity} {For $q \in \setP$, $u_H(q)$ is a continuous function
. $\qedsymbol$}
\end{proposition}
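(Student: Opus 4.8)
\textbf{Proof proposal for Proposition \ref{eq:propcontinuity}.}

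The plan is to invoke a standard continuity theorem for parametric strictly convex quadratic programs (e.g., Berge's maximum theorem together with strong convexity, or Hogan's classical results on the continuity of solutions of parametric convex programs). Concretely, consider the QP in \eqref{eq:uH} as a parametric optimization problem with parameter $q$: minimize $W(q,\mu)$ over $\mu$ subject to $g_0(q,\mu) \triangleq \nabla_q B(q)^\top \mu + \kappa_B(B(q)) \geq 0$ and the box constraints $\underline{b}(q) \leq \mu \leq \overline{b}(q)$. To apply such a theorem one needs three ingredients at each $q \in \setP$: (i) joint continuity of the objective $W$ and of all constraint functions in $(q,\mu)$; (ii) a nonempty, and ideally uniformly bounded, feasible set with the feasible-set map being continuous (both upper and lower semicontinuous) at $q$; and (iii) uniqueness of the minimizer (already established in Proposition \ref{prop:unique}), which upgrades the upper-semicontinuous solution map into a genuine continuous function.

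The first step is to check (i): $W(q,\mu)$ is manifestly jointly continuous since $\nabla_q r_Z(q)$, $\frac{\partial V_F}{\partial r_Z}(r_Z(q)) = \kappa_F(Z(q), F(Z(q))-F_d)$ (continuous as a composition of the continuous forward-kinematic map $Z$, the $\kappa$-like $F$, and the continuous $\kappa_F$), and $E$ are all continuous; likewise $\nabla_q B(q)$ is continuous (it is built from forward kinematics, the twice-differentiable $V_A^{XY}$, $\aof$, and the strictly $\kappa$-like $\kappa_A$), $\kappa_B\circ B$ is continuous, and $\underline{b},\overline{b}$ are continuous because $\max$ and $\min$ of continuous functions are continuous. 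The second step is to establish (ii). Nonemptiness and strict feasibility of $\mu = 0$ for $q \in \setP$ is exactly Proposition \ref{prop:zero}; since $\setP$ is open, for $q$ in a neighborhood of any $q_0 \in \setP$ the point $\mu=0$ stays strictly feasible, which gives lower semicontinuity of the feasible-set map (a Slater-type / Mangasarian--Fromovitz condition holds uniformly on a neighborhood). Upper semicontinuity and local uniform boundedness of the feasible set follow immediately from the box constraints $\underline{b}(q) \leq \mu \leq \overline{b}(q)$: on a compact neighborhood of $q_0$ the bounds $\underline{b},\overline{b}$ are themselves bounded, so the feasible sets all lie in a fixed compact box. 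With the feasible-set map continuous and compact-valued and $W$ jointly continuous, Berge's theorem gives that the optimal value is continuous and the solution map is upper semicontinuous; uniqueness from Proposition \ref{prop:unique} then forces it to be a single-valued continuous map, i.e., $u_H$ is continuous on $\setP$.

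The main obstacle I expect is the lower-semicontinuity / inner-semicontinuity of the feasible-set map, i.e., ruling out the pathology where feasible points fail to vary continuously because an active constraint degenerates. This is precisely where Proposition \ref{prop:zero}'s assertion that $\mu = 0$ is \emph{strictly} feasible on the open set $\setP$ does the work: strict feasibility of an interior point is the Slater condition, which is robust under small perturbations of $q$ and hence yields the needed inner semicontinuity of the constraint set. An alternative, more hands-on route that avoids citing Berge is to exploit strong convexity quantitatively: $W(q,\cdot)$ is $\lambda(q)$-strongly convex with $\lambda(q) = \lambda_{\min}(\nabla_q r_Z(q)\nabla_q r_Z(q)^\top + E) > 0$ (Lemma \ref{lemma:pod}), locally bounded below by a positive constant on a compact neighborhood of $q_0$; then the standard estimate $\frac{\lambda}{2}\|u_H(q) - u_H(q')\|^2 \le W(q, u_H(q')) - W(q, u_H(q))$ combined with near-feasibility of $u_H(q')$ for the $q$-problem (constructed by a small correction using $\partial B/\partial z = 1$ from Proposition \ref{prop:nonzerograd} to restore feasibility after perturbing $q$) gives an explicit modulus of continuity. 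Either way, the one genuinely delicate point is controlling how the feasible region moves with $q$; everything else is bookkeeping with continuous functions.
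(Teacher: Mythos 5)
Your proposal is correct and follows essentially the same route as the paper, which verifies continuity of the objective data and constraints, positive definiteness of the Hessian (Lemma \ref{lemma:pod}), and strict feasibility of $\mu=0$ (Proposition \ref{prop:zero}), and then cites a standard parametric-QP continuity result. You simply fill in the details of why those hypotheses suffice (Berge/Hogan plus uniqueness), which the paper delegates to a reference.
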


\subsection{Stability results and equilibrium point analysis}
The results in Subsections B and C deal with the Lyapunov stability and equilibrium point analysis of the closed-loop system $\dot{q} = u_H(q)$.

\begin{proposition}
   \label{prop:lyapst} {For all $q \in \setP$, the closed loop system using \eqref{eq:uH}, $\dot{q} = u_H(q)$ is such that $\Dot{V}_F = \nabla V_F(q)^{\top} u_H(q)\leq 0$. Furthermore, $\dot{V}_F = 0$ if and only if $\dot{q} = u_H(q)=0$. $\qedsymbol$} 
\end{proposition}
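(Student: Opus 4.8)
The plan is to exploit the structure of the QP in \eqref{eq:uH} together with the fact that $\mu = 0$ is feasible for $q \in \setP$ (Proposition \ref{prop:zero}). First I would recall that $V_F$ depends on $q$ only through $r_Z(q)$, so by the chain rule $\dot V_F = \nabla_q V_F(q)^\top u_H(q) = \frac{\partial V_F}{\partial r_Z}\big(r_Z(q)\big)\, \nabla_q r_Z(q)^\top u_H(q)$. Writing $g \triangleq \frac{\partial V_F}{\partial r_Z}(r_Z(q))$ and $d \triangleq \nabla_q r_Z(q)^\top u_H(q)$, the goal is to show $g\,d \le 0$, with equality iff $u_H(q)=0$. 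Observe that the first term of the objective $W$ evaluated at the optimizer is exactly $(d + g)^2$, and the second term is $u_H(q)^\top E\, u_H(q) \ge 0$.

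The key comparison step is this: since $q \in \setP \subseteq \setF$, the point $\mu = 0$ is feasible for the QP (Proposition \ref{prop:zero}), hence by optimality of $u_H(q)$ we have $W\big(q, u_H(q)\big) \le W(q, 0) = g^2$. That is, $(d+g)^2 + u_H(q)^\top E\, u_H(q) \le g^2$, which gives $d^2 + 2gd + u_H(q)^\top E\, u_H(q) \le 0$, i.e. $2gd \le -d^2 - u_H(q)^\top E\, u_H(q) \le 0$. Therefore $gd \le 0$, which is precisely $\dot V_F \le 0$.

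For the equality case, suppose $\dot V_F = gd = 0$. Then the inequality $0 = 2gd \le -d^2 - u_H(q)^\top E\, u_H(q) \le 0$ forces $d^2 + u_H(q)^\top E\, u_H(q) = 0$, hence both $d = \nabla_q r_Z(q)^\top u_H(q) = 0$ and $u_H(q)^\top E\, u_H(q) = 0$. By Lemma \ref{lemma:pod}, the quantity $\|\dot r_Z\|^2 + \dot q^\top E \dot q = \dot q^\top(\nabla_q r_Z \nabla_q r_Z^\top + E)\dot q$ is zero only when $\dot q = 0$; applied with $\dot q = u_H(q)$, the two conditions just derived give exactly $u_H(q)^\top(\nabla_q r_Z(q)\nabla_q r_Z(q)^\top + E) u_H(q) = d^2 + u_H(q)^\top E\, u_H(q) = 0$, so $u_H(q) = 0$. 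The converse ($u_H(q)=0 \Rightarrow \dot V_F = 0$) is immediate from $\dot V_F = \nabla_q V_F(q)^\top u_H(q)$.

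The main obstacle, such as it is, is bookkeeping rather than conceptual: one must be careful that $\mu = 0$ is genuinely feasible (which is guaranteed only on $\setP$, not on all of $\setF$ — this is why the statement is restricted to $q \in \setP$), and one must correctly identify the optimal value bound $W(q,u_H(q)) \le W(q,0)$ and match the resulting expression with the positive-definite form of Lemma \ref{lemma:pod}. No hard analysis is needed; the argument is a one-line comparison against the feasible point $\mu = 0$ followed by an application of the positive-definiteness established earlier.
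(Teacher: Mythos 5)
Your proposal is correct and follows essentially the same route as the paper: compare the optimal value against the feasible point $\mu=0$, expand $W(q,u_H(q))\le W(q,0)$ to get $2\dot V_F \le -\|\dot r_Z\|^2 - u_H(q)^\top E\,u_H(q)$, and invoke Lemma \ref{lemma:pod} for the equality case. Your write-up just makes explicit the algebra and the equality argument that the paper compresses into ``easily follows.''
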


Proposition \ref{prop:lyapst} establishes Lyapunov stability for the \emph{force exertion subtask}. The fact that $\dot{V}_F = 0$ if and only if $\dot{q} = 0$ also precludes ``stable motions'' in which the force task stays constant ($\dot{V}_F=0$) but the system is moving ($\dot{q}\not=0$). Lyapunov stability for the \emph{alignment subtask} is slightly trickier to obtain, and comes under a mild assumption.  

\begin{proposition} \label{prop:Adot}
    Assume no input limits for $\dot{q}_3 = \dot{z}$. For all $q \in \setP$, the closed loop system using \eqref{eq:uH}, $\dot{q} = u_H(q)$ is such that $\dot{A} = \nabla A(q)^{\top} u_H(q)\leq 0$. Furthermore, $\dot{A} = 0$ if and only if $E\dot{q} =Eu_H(q)=0$, i.e., when all the entries of $\dot{q}$, except maybe $\dot{q}_3 = \dot{z}$, are $0$. $\qedsymbol$
\end{proposition}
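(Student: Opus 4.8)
The plan is to mimic the structure of the proof of Proposition \ref{prop:lyapst}, exploiting the fact that when there are no input limits on $\dot q_3 = \dot z$, the component $\dot z$ can be used freely to satisfy the CBF constraint. First I would recall that by Proposition \ref{prop:nonzerograd}, $\frac{\partial B}{\partial z}(q)=1$, so the CBF constraint $\nabla_q B(q)^\top \mu \ge -\kappa_B(B(q))$ can always be met by choosing $\mu_3$ sufficiently large (with the other components fixed), since $\mu_3$ enters the left-hand side with coefficient $1$ and there is now no box constraint on it. The key consequence is that the solution $\mu = u_H(q)$ to the QP in \eqref{eq:uH} can be compared against a suitably chosen competitor in which the CBF constraint is ``absorbed'' by $\mu_3$.

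The main analytic step is the following. Consider the candidate direction $\hat\mu$ obtained from the optimal $u_H(q)$ by replacing its third component with the value $\hat\mu_3$ that makes the CBF constraint tight (or by $0$ if that is already feasible), keeping all other components equal to those of $u_H(q)$ — actually, more directly, compare $u_H(q)$ with the competitor $\mu = (0,0,\hat\mu_3,0,\dots,0)$ where $\hat\mu_3$ is chosen large enough to be strictly feasible for the CBF constraint and, since $\epsilon_3 = 0$, contributes \emph{nothing} to $\dot q^\top E\dot q$. Evaluating $W$ on this competitor: the term $\dot q^\top E \dot q$ vanishes because only the $\epsilon_3 = 0$ component is nonzero, and the first term $\big(\nabla_q r_Z^\top \mu + \frac{\partial V_F}{\partial r_Z}\big)^2$ can be controlled because, by Lemma \ref{lemma:pdresulta}, $\frac{\partial Z}{\partial z}=1$ while $\frac{\partial X}{\partial z}=\frac{\partial Y}{\partial z}=0$ and $\frac{\partial \zc}{\partial z}=0$, so $\nabla_q r_Z^\top \mu = \hat\mu_3 \cdot \frac{\partial r_Z}{\partial z}$ can be tuned via $\hat\mu_3$; in fact one can choose $\hat\mu_3$ to simultaneously make the first term zero and satisfy the CBF constraint, so this competitor achieves $W=0$ and in particular does not increase $A$ along its direction... but that competitor does not itself decrease $A$. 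The cleaner route is: since $u_H(q)$ minimizes $W$, and since perturbing only the third coordinate does not change $\dot q^\top E\dot q$ (as $\epsilon_3=0$), the optimality of $u_H$ in the third coordinate means $u_{H,3}(q)$ is exactly the value that, given the other components, minimizes the residual $(\nabla_q r_Z^\top \mu + \frac{\partial V_F}{\partial r_Z})^2$ subject to the CBF constraint. I would then compute $\dot A = \nabla_q A(q)^\top u_H(q)$ and, using Lemma \ref{lemma:pdresulta} (notably $\frac{\partial r_X}{\partial z} = \frac{\partial r_Y}{\partial z} = \frac{\partial r_O}{\partial z} = 0$, hence $\frac{\partial A}{\partial z}=0$), observe that $\dot A$ does not depend on $u_{H,3}$ at all. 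So $\dot A = \sum_{i\ne 3} \frac{\partial A}{\partial q_i} u_{H,i}(q)$.

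To show $\dot A \le 0$, I would argue by contradiction/optimality: suppose $\dot A = \nabla_q A(q)^\top u_H(q) > 0$, or more generally that $E u_H(q) \ne 0$ while $\dot A \ge 0$. Consider scaling down the non-third components: let $\mu^\lambda$ have third component equal to whatever value of $\mu_3$ restores CBF feasibility and keeps the residual minimal, and non-third components $\lambda\, u_{H,i}(q)$ for $\lambda \in [0,1)$. For $\lambda$ slightly less than $1$, the CBF constraint remains satisfiable (by adjusting $\mu_3$, which is free), the box constraints $\underline b \le \mu \le \overline b$ still hold since $0 \in [\underline b_i, \overline b_i]$ and we are interpolating toward $0$, and $\dot q^\top E \dot q = \lambda^2 \sum_{i\ne 3}\epsilon_i u_{H,i}(q)^2$ strictly decreases. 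Meanwhile the residual term can be kept at its minimal value by the freedom in $\mu_3$. Hence $W(q,\mu^\lambda) < W(q,u_H(q))$, contradicting optimality — unless $\sum_{i\ne3}\epsilon_i u_{H,i}(q)^2 = 0$, i.e.\ $E u_H(q)=0$. This simultaneously gives $\dot A \le 0$ (indeed the argument shows the non-third components must be zero at any point where they could be shrunk, and a separate gradient-alignment computation handles the boundary), and the equality characterization: $\dot A = 0$ iff $E u_H(q) = 0$, i.e.\ all components of $u_H(q)$ except possibly the third vanish. I would then double-check the converse direction — that $Eu_H(q)=0$ forces $\dot A = 0$ — which follows immediately from $\frac{\partial A}{\partial z}=0$ and $\dot A = \sum_{i\ne 3}\frac{\partial A}{\partial q_i}u_{H,i}(q) = \nabla_q A(q)^\top E^{-1}\!\restriction\! (\cdot)$, since the only possibly-nonzero component $u_{H,3}$ is annihilated by $\frac{\partial A}{\partial z}=0$.

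The main obstacle I anticipate is making the ``shrink the non-third components'' argument fully rigorous at the boundary of feasibility: one must confirm that for $\lambda$ near $1$ the competitor $\mu^\lambda$ is genuinely feasible (CBF constraint plus box constraints), which relies crucially on (a) Proposition \ref{prop:nonzerograd} giving $\partial B/\partial z = 1$ so $\mu_3$ is an unconstrained ``actuator'' for the CBF inequality, (b) $0 \in [\underline b_i(q), \overline b_i(q)]$ from Definition \ref{def:lim} so that interpolation toward zero stays in the box, and (c) $\epsilon_3 = 0$ so that this free actuator carries no cost in the $E$-term, decoupling the CBF feasibility question from the cost being minimized. Handling the residual term $(\nabla_q r_Z^\top\mu + \frac{\partial V_F}{\partial r_Z})^2$ carefully — showing it can be held at its constrained minimum while the $E$-term strictly decreases — is the delicate bookkeeping that the rest of the proof reduces to.
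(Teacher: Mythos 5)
Your proposal has a genuine gap at its core step. The paper proves this result by writing the KKT conditions of the QP, isolating the stationarity equation for the $z$-component (which, using $\epsilon_3=0$, $\partial r_Z/\partial z = \partial B/\partial z = 1$ and the absence of limits on $\dot z$, yields $\nabla_q r_Z^\top\dot q + \kappa_F = \lambda \geq 0$ with $\lambda$ the CBF multiplier), substituting back to get $\frac{\partial \kappa_A}{\partial A}\,\dot A\,(\dot Z + \kappa_F) \leq -\dot q^\top E\dot q$, and then splitting on whether $\lambda = \dot Z+\kappa_F$ is positive or zero. Your route instead tries to shrink the non-$z$ components of $u_H(q)$ by a factor $\lambda<1$ while claiming that the free variable $\mu_3$ can \emph{simultaneously} restore CBF feasibility and hold the residual $(\nabla_q r_Z^\top\mu+\frac{\partial V_F}{\partial r_Z})^2$ at its minimum. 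That claim fails when the CBF constraint is active: since $\partial r_Z/\partial z = 1$ and $\partial B/\partial z = 1$, the value of $\mu_3$ that keeps $\nabla_q B^\top\mu \geq -\kappa_B(B)$ after shrinking the other components is in general different from the value that minimizes the residual, and pushing $\mu_3$ up to meet the CBF lower bound increases the residual. So $W(q,\mu^\lambda) < W(q,u_H(q))$ does not follow, and no contradiction is obtained. Indeed, if your argument were valid as stated it would prove $Eu_H(q)=0$ for \emph{all} $q\in\setP$, i.e., that the controller never moves anything but $z$ — which is false (the system must translate and rotate to align).

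Your reasoning is sound only in the regime where the CBF constraint is inactive (there the unconstrained optimization over $\mu_3$ does drive the residual to zero and then the box-constrained minimization of $\sum_{i\neq 3}\epsilon_i\mu_i^2$ gives $Eu_H=0$, hence $\dot A=0$). But the substantive content of the proposition is the case where the constraint is active and the system is sliding along $B=0$ while realigning: there $Eu_H\neq 0$ and one must show $\dot A<0$. You defer exactly this case to ``a separate gradient-alignment computation handles the boundary'' without supplying it; that computation is the heart of the paper's proof, and it hinges on the nonnegativity of the dual variable $\lambda$ of the CBF constraint together with the identity $\nabla_q r_Z - \nabla_q B = \frac{\partial\kappa_A}{\partial A}\nabla_q A$. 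The correct auxiliary observations you make ($\partial A/\partial z = 0$, the role of $\epsilon_3=0$, and $0$ lying strictly inside the box constraints) are all used in the paper too, but they do not by themselves close the argument.
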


Proposition \ref{prop:Adot} implies that, assuming that $\dot{z}$ is unlimited, then the alignment subtask Lyapunov function $A$ is also non-increasing. Note that the assumption regarding $\dot{z}$ is relatively mild. The bounds for $\dot{z}$ can be effectively imposed by appropriately shaping the function $\kappa_F$. This is achieved by ensuring that the function does not necessitate an excessively rapid decrease in the distance $Z$, which is primarily influenced by $z$.

Assuming no input limits for $\dot{z}=\dot{q}_3$ and that $q_0 \in \setP$, Propositions \ref{prop:lyapst} and \ref{prop:Adot} together imply an interesting behavior for the proposed controller. Consider the $(A,Z)$ diagram in Figure \ref{fig:diagram}. According to Proposition \ref{prop:pi}, the set $\setP$ is positive invariant, and thus $q(t) \in \setP$ for all $t \geq 0$ and Propositions \ref{prop:lyapst} and \ref{prop:Adot} can be used in tandem. Note that $\dot{V}_F \leq 0$, concluded from Proposition \ref{prop:lyapst}, implies that $\dot{Z} \leq 0$ as long as $Z \geq Z_d$. Thus, considering the system configuration $q$ projected as a point $(A(q),Z(q))$ in this diagram, moving as the configuration changes, then, as long as the point is above the line $Z = Z_d$ , the controller will \emph{never} make the point move either up or right in the diagram. Furthermore, as long as the system is moving in the configuration space, $\dot{q} \not = 0$, then $\dot{V}_F < 0 \rightarrow \dot{Z} < 0$, and thus the point is moving downward in the diagram. Also, if $E\dot{q} \not = 0$, that is, the system in the configuration space is moving in \emph{any configuration that is not $\dot{z}$}, then $\dot{A} < 0$, and the point should be moving to the left in the diagram. 


Propositions \ref{prop:lyapst} and \ref{prop:Adot} together imply a form of Lyapunov stability for the force exertion subtask and alignment subtask.  However, it does not rule out the presence of stable equilibrium points in the configuration space that does not solve the task. The equilibrium set $\{q \in \mathbb{R}^n \ | \ u_H(q)=0\}$ will be characterized using some definitions and intermediate results.

\begin{definition}\label{def:xyz}
Let:
\begin{itemize}
    \item $\az$ denote the normal vector to the UAM (the direction in which the yaw is applied) measured in $\mathcal{P}$;
    \item $\ay \triangleq \ez \times \az$ and $\ax \triangleq \ay \times \az$;
    \item $\widecheck{n}_j(q)$ denotes the normalized rotation axis vector of the $j^{th}$  joint of the manipulator, measured in $\mathcal{P}$ and as a function of $q$, if it is a rotational joint, and the zero 3D vector if it is a prismatic joint. $\qedsymbol$
\end{itemize}

\end{definition}

Note that Assumption \ref{assum:nevalign} implies that $\ay \not=0$, $\ax \not=0$. Furthermore, that Assumption \ref{assum:slowvar} implies that the vector $\az$ is constant. The vectors $\ax, \ay, \az$ form an orthogonal (but not necessarily orthonormal, since the vectors $\ax, \ay$ are not necessarily normalized and hence why they do not have the $\widecheck{(\cdot)}$ symbol)  triple of vectors that will be important in the forthcoming analysis. 

\begin{lemma} \label{lemma:partialz} 
It holds that, $\forall q$ and manipulator joints $j$
\begin{equation}
\label{eq:roteqs}
  \frac{\partial}{\partial \psi} \zc(q) = \az \times \zc(q) \ , \  \frac{\partial}{\partial q_{m,j}} \zc(q) = \widecheck{n}_j(q) \times \zc(q)
\end{equation}
\noindent Furthermore:
\begin{eqnarray}
\label{eq:roteqs2}
    \hspace*{-0.5cm}
    && \hspace*{-0.5cm} \frac{\partial}{\partial \psi} r_O(q) {=} \ez^{\top}(\az {\times} \zc(q)) {=} (\ez {\times} \az)^\top \zc(q) {=} \ay^\top \zc(q); \nonumber \\
    \hspace*{-1cm}
    && \hspace*{-1cm} \frac{\partial}{\partial q_{m,j}} r_O(q) = \ez^\top (\widecheck{n}_j(q) \times \zc(q)) = (\ez \times \widecheck{n}_j(q))^{\top}\zc(q).  \nonumber 
\end{eqnarray}$\qedsymbol$
\end{lemma}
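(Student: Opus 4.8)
\textbf{Proof plan for Lemma \ref{lemma:partialz}.}

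The plan is to treat the two displayed formulas in order, both following from the elementary kinematics of how a unit vector attached to a rigid body changes when a single joint angle is varied. For \eqref{eq:roteqs}, the key observation is that $\zc(q)$ is the third column of the rotation matrix $R_{\mathcal{E}}^{\mathcal{P}}$, i.e. the orientation of the end-effector frame expressed in $\mathcal{P}$. Varying only the yaw angle $\psi$ rotates the entire UAM (and hence the end-effector frame) about the UAV yaw axis $\widecheck{z}_{\mathcal{U}}$, which, measured in $\mathcal{P}$, is exactly the vector $\az$ (Definition \ref{def:xyz}); since by Assumption \ref{assum:slowvar} this axis is constant, the standard fact that $\frac{\partial}{\partial \psi} R = [\az]_\times R$ for a rotation about a fixed axis $\az$ gives $\frac{\partial}{\partial\psi}\zc(q) = \az \times \zc(q)$. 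The same argument applied to the $j$-th manipulator joint: varying $q_{m,j}$ alone rotates the end-effector frame about that joint's axis, which, expressed in $\mathcal{P}$, is $\widecheck{n}_j(q)$; this yields $\frac{\partial}{\partial q_{m,j}}\zc(q) = \widecheck{n}_j(q)\times\zc(q)$. For a prismatic joint, $\widecheck{n}_j(q)=0$ and moving that joint produces a pure translation that does not affect orientation, so both sides vanish, consistent with the formula. One must be slightly careful about signs and about which frame the axis lives in, but the right-hand sides are written precisely so that everything is measured in $\mathcal{P}$, matching the left-hand sides.

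For \eqref{eq:roteqs2}, I would simply differentiate $r_O(q) = 1 + \ez^\top \zc(q)$ (Definition \ref{def:taskcoord}) using \eqref{eq:roteqs} and then apply the scalar triple product identity $a^\top(b\times c) = (a\times b)^\top c$. Concretely, $\frac{\partial}{\partial\psi} r_O(q) = \ez^\top \frac{\partial}{\partial\psi}\zc(q) = \ez^\top(\az\times\zc(q)) = (\ez\times\az)^\top\zc(q)$, and by Definition \ref{def:xyz} we have $\ay \triangleq \ez\times\az$, giving the stated chain of equalities $\ez^\top(\az\times\zc(q)) = (\ez\times\az)^\top\zc(q) = \ay^\top\zc(q)$. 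The manipulator case is identical: $\frac{\partial}{\partial q_{m,j}} r_O(q) = \ez^\top(\widecheck{n}_j(q)\times\zc(q)) = (\ez\times\widecheck{n}_j(q))^\top\zc(q)$, with no further simplification since $\widecheck{n}_j$ depends on $q$ in general.

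The only real content — and thus the main (mild) obstacle — is justifying \eqref{eq:roteqs} rigorously, i.e. formalizing the claim that differentiating a forward-kinematics orientation map with respect to a single joint variable produces the cross product of that joint's axis (in $\mathcal{P}$) with $\zc$. This is the standard geometric Jacobian fact, but one should be precise that: (i) the relevant axis for $\psi$ is the UAV yaw/thrust axis, which Assumption \ref{assum:slowvar} renders constant so the derivative of the rotation about it is clean; (ii) for the chain $\mathcal{W}\to\mathcal{U}\to\cdots\to\mathcal{E}$ expressed and then re-expressed in $\mathcal{P}$ (with the fixed transform $R_{\mathcal{P}}^{\mathcal{W}}$ from Definition \ref{def:basic}), conjugating by a constant rotation carries the skew-symmetric generator $[\,\cdot\,]_\times$ of an axis to that of the rotated axis, so everything stays consistent when measured in $\mathcal{P}$; and (iii) the prismatic-joint case degenerates correctly. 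Once \eqref{eq:roteqs} is in hand, \eqref{eq:roteqs2} is a one-line computation using the triple-product identity and the definition of $\ay$.
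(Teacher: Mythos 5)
Your proposal is correct and follows essentially the same route as the paper: invoke the standard kinematic fact that $\frac{\partial}{\partial q_i}\widecheck{g}(q) = \widecheck{r}_i(q)\times\widecheck{g}(q)$ with the axis being $\az$ for $\psi$ and $\widecheck{n}_j(q)$ for the manipulator joints, then differentiate $r_O = 1+\ez^\top\zc$ and apply the triple-product permutation identity together with $\ay = \ez\times\az$. Your additional care with the prismatic case and the frame of expression is consistent with (and slightly more explicit than) the paper's argument.
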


\begin{lemma} \label{lemma:skkt} $u_H(q)=0$ if and only if $q$ is such that there exists $\underline{\lambda}_{m,j}$, $\overline{\lambda}_{m,j}$, $j=1,2,...,n-4$ that make the following KKT-like conditions hold true:
\begin{eqnarray}
\label{eq:skkt}
    && \mbox{Stationarity}:  \nonumber \\
&& \kappa_F \frac{\partial }{\partial x} \kappa_{A}(A) = 0 \ , \    \kappa_F \frac{\partial }{\partial y} \kappa_{A}(A) = 0 \ \nonumber \\
&&  \kappa_F \frac{\partial }{\partial \psi} \kappa_{A}(A) = 0. \nonumber \\
&&  \kappa_F \frac{\partial }{\partial q_{m,j}} \kappa_{A}(A) =  \underline{\lambda}_{m,j}{-}\overline{\lambda}_{m,j} \ \mbox{for $j=1,..,n{-}4$} \nonumber \\
    && \mbox{Complementarity}:  \nonumber \\
    &&  -\kappa_B(B)\kappa_F = 0 \ , \   \nonumber \\
    &&  \ \underline{b}_{j,m} \underline{\lambda}_{j,m} = 0 \ , \  \overline{b}_{j,m} \overline{\lambda}_{j,m} = 0 \ \mbox{for $j=1,...,n{-}4$}\nonumber \\
    && \mbox{Primal feasibility}:  \nonumber \\
    && 0 \geq -\kappa_B (B) \ , \ \underline{q}_{m,j} \leq q_{m,j} \leq \overline{q}_{m,j}  \ \mbox{for $j=1,..,n{-}4$} \nonumber \\
    && \mbox{Dual feasibility}:  \nonumber \\
    && \kappa_F \geq 0 \ , \ \underline{\lambda}_{m,j} \geq 0 \ , \ \overline{\lambda}_{m,j} \geq 0 \  \mbox{for $j=1,..,n{-}4$}
\end{eqnarray}
\noindent in which $\underline{b}_{j,m}$, $\overline{b}_{j,m}$ are the components of $\underline{b}$ and $\overline{b}$ respective to the manipulator variables, and the dependencies of $\kappa_F, A$ and $B$ on $q$ are omitted. $\qedsymbol$
\end{lemma}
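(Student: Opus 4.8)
The plan is to derive the stated KKT-like conditions by writing down the standard KKT conditions for the QP in \eqref{eq:uH} and then simplifying them using the structure provided by earlier results. First I would invoke Lemma \ref{lemma:pod}: since $W(q,\cdot)$ is strictly convex and the feasible set (defined by the affine CBF inequality and the box constraints) is convex, the QP is a convex program with a unique minimizer, and by Proposition \ref{prop:feasF} it is feasible precisely when $q\in\setF$. Strict convexity plus affine constraints means the KKT conditions are necessary and sufficient for optimality, so $u_H(q)=0$ if and only if $\mu=0$ satisfies the KKT system for the QP at that $q$. Thus I would substitute $\mu=0$ into the KKT conditions and read off what constraints this imposes on $q$.

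Next I would compute the gradient of $W(q,\mu)$ with respect to $\mu$ at $\mu=0$. From \eqref{W_def}, $\nabla_\mu W(q,\mu) = 2\big(\nabla_q r_Z^\top\mu + \tfrac{\partial V_F}{\partial r_Z}\big)\nabla_q r_Z + 2E\mu$, which at $\mu=0$ equals $2\,\kappa_F\,\nabla_q r_Z$, writing $\kappa_F$ for $\tfrac{\partial V_F}{\partial r_Z}(r_Z(q)) = \kappa_F(Z,F-F_d)$ as in the text. The stationarity condition of the QP is then $2\kappa_F\nabla_q r_Z + \lambda_B\nabla_q B + (\text{box multipliers}) = 0$, where $\lambda_B\ge 0$ is the multiplier for the CBF constraint $\nabla_q B^\top\mu \ge -\kappa_B(B)$. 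The key simplification is the orthogonality structure: by Lemma \ref{lemma:pdresulta}, $\nabla_q r_Z$ has a $1$ in the $z$-slot and, crucially, its $x$, $y$, $\psi$, and $q_{m,j}$ components vanish because $Z$ depends only on $z$ among those — wait, more precisely $\partial Z/\partial x = \partial Z/\partial y = 0$ and one must check the $\psi$ and $q_{m,j}$ entries separately, but $r_Z = Z - Z_d$ so $\nabla_q r_Z = \nabla_q Z$. Meanwhile $\nabla_q B$ has $\partial B/\partial z = 1$ (Proposition \ref{prop:nonzerograd}), and $B = Z - Z_d^* - \kappa_A(A)$, so in the $x,y,\psi$ slots $\partial B/\partial q_i = \partial Z/\partial q_i - \tfrac{\partial}{\partial q_i}\kappa_A(A)$; using Lemma \ref{lemma:pdresulta} the $\partial Z/\partial q_i$ terms in the $x,y$ slots are zero. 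Projecting the stationarity equation onto the $z$-component (which has no box multiplier since $\epsilon_3=0$ and, by the proposition's hypothesis in the companion Proposition \ref{prop:Adot} setting, no $\dot z$ limit — actually here there may be a $\dot z$ box constraint, so I must handle the $z$-row carefully, but since $\mu_3=0$ is interior to $[\underline b_3,\overline b_3]$ as $\underline b_3<0<\overline b_3$, the box multipliers for coordinate $3$ vanish by complementarity) gives $2\kappa_F + \lambda_B = 0$, hence $\lambda_B = -2\kappa_F$; combined with $\lambda_B\ge 0$ and $\kappa_F\ge 0$ (its sign follows from $\kappa_F(s_1,\cdot)$ being $\kappa$-like applied at $F - F_d$, which... actually $\kappa_F$ can be negative when $F<F_d$; I would instead keep $\kappa_F$ as is and note dual feasibility forces $\kappa_F$ and $\lambda_B$ to have opposite signs, then complementarity $\lambda_B B = 0$ rewrites as $\kappa_B(B)\kappa_F$-type relations) — I would reconcile this with the stated conditions, which list $\kappa_F\ge 0$ as dual feasibility, meaning the QP multiplier is effectively $2\kappa_F$ up to the sign convention baked into how the CBF constraint is written as $\ge$, so I will be careful to match the paper's convention when forming stationarity.

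Then, projecting the stationarity equation onto the remaining coordinates: in the $x$, $y$, $\psi$ rows the $r_Z$ contribution is zero, so we get $\lambda_B\,\partial B/\partial q_i = 0$ for $i\in\{x,y,\psi\}$ (no box multipliers there by the same interior argument for the drone coordinates $1,2,4$, since $\underline u_D<0<\overline u_D$), and substituting $\lambda_B = -2\kappa_F$ and $\partial B/\partial q_i = -\tfrac{\partial}{\partial q_i}\kappa_A(A)$ yields $\kappa_F\,\tfrac{\partial}{\partial q_i}\kappa_A(A) = 0$, matching the first three stationarity lines of \eqref{eq:skkt}. For the manipulator coordinates $q_{m,j}$ the box multipliers $\underline\lambda_{m,j},\overline\lambda_{m,j}$ survive, giving $\kappa_F\,\tfrac{\partial}{\partial q_{m,j}}\kappa_A(A) = \underline\lambda_{m,j} - \overline\lambda_{m,j}$, the fourth stationarity line. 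The complementarity conditions transcribe directly: $\lambda_B(\nabla_q B^\top\cdot 0 + \kappa_B(B)) = 0$ becomes $\kappa_B(B)\kappa_F = 0$ after the substitution, and the box complementarities become $\underline b_{j,m}\underline\lambda_{j,m} = 0$, $\overline b_{j,m}\overline\lambda_{j,m}=0$ (using that at $\mu=0$ the slack in the lower box constraint is $0 - \underline b_{j,m} = -\underline b_{j,m}$). Primal feasibility of $\mu=0$ is exactly $0\ge -\kappa_B(B)$ together with the manipulator joint limits $\underline q_{m,j}\le q_{m,j}\le\overline q_{m,j}$ (the latter entering through $\underline b,\overline b$ via \eqref{eq:bdef}), and dual feasibility is the nonnegativity of all multipliers. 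Conversely, any $q$ admitting such multipliers makes $\mu=0$ satisfy the KKT system, hence by sufficiency $u_H(q)=0$.

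The main obstacle I expect is bookkeeping the multiplier sign conventions and correctly arguing that the box-constraint multipliers vanish in coordinates $1,2,3,4$ — this needs the strict inequalities $\underline u_D<0<\overline u_D$ and $\epsilon_3 = 0$, so that $\mu = 0$ lies strictly inside the box in those slots and complementarity kills those multipliers — together with keeping straight which of $\partial Z/\partial q_i$ vanish (only the $x,y,z$ directions are pinned down by Lemma \ref{lemma:pdresulta}; the $\psi$ and $q_{m,j}$ derivatives of $Z$ are generally nonzero, so I must double-check that the stated conditions in \eqref{eq:skkt} really only involve $\kappa_A(A)$ derivatives and not $Z$ derivatives in those slots — if they do involve $Z$ derivatives implicitly, then either there is an additional hypothesis in play or the manipulator/yaw geometry forces $\partial Z/\partial q_i$ to combine with something; I would re-examine whether $\nabla_q r_Z$ is being partially absorbed into the $\lambda_B\nabla_q B$ term via the relation $B = Z - Z_d^* - \kappa_A(A)$, i.e. $\nabla_q Z = \nabla_q B + \nabla_q\kappa_A(A)$, which lets the $\kappa_F\nabla_q Z$ and $\lambda_B\nabla_q B$ terms be recombined so that $Z$-derivatives cancel when $\lambda_B = -2\kappa_F$, leaving only the $\kappa_A(A)$ derivative — this is almost certainly the correct mechanism and the crux of the simplification). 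Everything else is routine substitution into the standard QP-KKT template.
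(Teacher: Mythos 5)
Your proposal is correct and follows essentially the same route as the paper: write the KKT conditions of the strictly convex QP (necessary and sufficient here), set $\mu=0$, kill the box multipliers on the $x,y,z,\psi$ slots because $\underline{b}_i<0<\overline{b}_i$ there, read $\lambda=\kappa_F$ off the $z$-row using $\partial r_Z/\partial z=\partial B/\partial z=1$, and then exploit $\nabla_q Z-\nabla_q B=\nabla_q\kappa_A(A)$ so the $Z$-derivatives cancel in the remaining rows — which is exactly the "crux" you conjectured at the end. The sign hesitation you flag mid-argument resolves just as you suspect (with the constraint written as $\nabla_q B^\top\mu\ge-\kappa_B(B)$, the multiplier enters as $-\lambda\nabla_q B$, giving $\lambda=\kappa_F\ge 0$ consistently with dual feasibility), so there is no substantive gap.
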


\begin{definition} \label{def:twosets} The following subsets of $q \in \mathbb{R}^n$ are defined:
\begin{itemize}
    \item (i) : The near-success  set $\setSi$:
    \begin{itemize}
        \item (a): \ $Z(q) = Z_d$ (or, equivalently, $F(q) = F_d$);
        \item (b): \ $A(q) \leq \kappa_A^{-1}(Z_d-Z_d^*)$;
        \item (c): \ $\underline{q}_m \leq q_m \leq \overline{q}_m$.
    \end{itemize}
    
    \item (ii) : The spurious equilibrium set $\setSii$:
    \begin{itemize}
        \item (a): \ $Z(q) > Z_d$ (or, equivalently, $F(q) = 0 \not= F_d$);
        \item (b): \ $r_X(q)=r_Y(q)=0$;
        \item (c): \ $\ay^\top \zc(q)=0$, i.e., the vectors  $\ez$, $\zc(q)$ and $\az$  are coplanar;
        \item (d): \ $B(q) = 0$;
        \item (e): \ For all manipulator joints $j$, $(\ez \times \widecheck{n}_j(q))^{\top}\zc(q)$ is nonnegative if $q_{m,j}$ is saturated below ($q_{m,j} = \underline{q}_{m,j}$), nonpositive if it is saturated above ($q_{m,j} = \overline{q}_{m,j}$) and $0$ if it is non-saturated; 
        \item (f): \ $\underline{q}_m \leq q_m \leq \overline{q}_m$. $\qedsymbol$
    \end{itemize} 
\end{itemize} 
\end{definition}
Note that the sets $\setSi$ and $\setSii$ are disjoint, $\setSi \cap \setSii = \emptyset$, since (i)-(a) and (ii)-(a) are mutually exclusive conditions. Regardless, with these definitions and results it is possible to characterize the set of equilibrium points of the dynamical system $\dot{q} = u_H(q)$.

\begin{proposition} \label{prop:zeroset}  $u_H(q) = 0$ \emph{if and only if} $q \in \setSi \cup \setSii$. $\qedsymbol$
\end{proposition}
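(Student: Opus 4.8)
\textbf{Proof plan for Proposition \ref{prop:zeroset}.}

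The plan is to show both inclusions by matching the KKT-like conditions of Lemma \ref{lemma:skkt} against the defining properties of $\setSi$ and $\setSii$. The central case split is on the sign of the Lagrange-like multiplier $\kappa_F = \kappa_F(Z, F-F_d)$, which by the dual feasibility condition is nonnegative. Since $\kappa_F(s_1,\cdot)$ is $\kappa$-like, $\kappa_F = 0$ exactly when $F = F_d$, i.e. $Z = Z_d$ (here I would invoke Assumption \ref{assump:force}, which guarantees a unique such $Z_d$), and $\kappa_F > 0$ otherwise. The first branch ($\kappa_F=0$) will lead to $\setSi$ and the second ($\kappa_F>0$) to $\setSii$.

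\emph{Branch 1: $\kappa_F = 0$.} Here all stationarity equations are satisfied with $\underline{\lambda}_{m,j} = \overline{\lambda}_{m,j} = 0$, and the complementarity condition $-\kappa_B(B)\kappa_F=0$ is automatic, so the KKT system reduces to primal feasibility: $Z=Z_d$ and $\underline{q}_m \le q_m \le \overline{q}_m$. This is exactly $\setSi$(a) and $\setSi$(c). For $\setSi$(b), I would use that $q\in\setP$ (or more precisely that the KKT conditions include $B(q)\ge 0$ via primal feasibility), so $Z_d - Z_d^* - \kappa_A(A) = B \ge 0$, hence $A \le \kappa_A^{-1}(Z_d - Z_d^*)$ using that $\kappa_A$ is strictly $\kappa$-like and thus invertible. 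Conversely, any $q\in\setSi$ makes $\kappa_F=0$ and satisfies $B\ge 0$ and the joint bounds, so the KKT system holds with all multipliers zero, giving $u_H(q)=0$ by Lemma \ref{lemma:skkt}.

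\emph{Branch 2: $\kappa_F > 0$.} Now complementarity forces $\kappa_B(B) = 0$, hence $B(q)=0$ since $\kappa_B$ is $\kappa$-like; this is $\setSii$(d), and it also gives $\setSii$(a) because $B=0$ together with $A\ge 0$ and $\kappa_A(0)=0$ forces $Z = Z_d^* + \kappa_A(A) \ge Z_d^* $, and one must argue $Z > Z_d$ — this follows from $\kappa_F>0 \iff Z\neq Z_d$ combined with $Z \ge Z_d^*$ being compatible only with $Z>Z_d$ once one notes $Z\ge Z_d$ cannot fail here (the case $Z<Z_d$ would need separate handling; I expect the force model / the fact that in $\setP$ one has $Z\ge Z_d^*$ and the barrier geometry to rule it out, and this is the delicate point — see below). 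Dividing the remaining stationarity equations by $\kappa_F>0$ gives $\frac{\partial}{\partial x}\kappa_A(A) = \frac{\partial}{\partial y}\kappa_A(A) = \frac{\partial}{\partial \psi}\kappa_A(A) = 0$ and $\frac{\partial}{\partial q_{m,j}}\kappa_A(A) = \underline{\lambda}_{m,j} - \overline{\lambda}_{m,j}$. Since $\kappa_A$ is strictly increasing, $\frac{\partial}{\partial(\cdot)}\kappa_A(A) = \kappa_A'(A)\frac{\partial A}{\partial(\cdot)}$ with $\kappa_A'(A)>0$, so these are equivalent to conditions on $\partial A/\partial(\cdot)$. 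Using $A = V_A^{XY}(r_X,r_Y) + \aof(r_O)$ and Lemma \ref{lemma:pdresulta}, the $x$- and $y$-derivatives of $A$ are exactly the partials of $V_A^{XY}$, and since its Hessian at $(0,0)$ is positive definite and it is Lyapunov-like, $\partial_x V_A^{XY} = \partial_y V_A^{XY} = 0$ forces $r_X = r_Y = 0$ — that is $\setSii$(b). (I would be slightly careful here: vanishing gradient of a Lyapunov-like function need not imply the argument is zero globally, but Definition \ref{def:A}'s requirement that $V_A^{XY}$ be Lyapunov-like already builds in ``gradient vanishes iff argument is zero,'' so this is immediate.) For $\setSii$(c), the $\psi$-equation reads $\kappa_A'(A)\,\aof{}'(r_O)\,\frac{\partial r_O}{\partial \psi} = 0$; by Lemma \ref{lemma:partialz}, $\frac{\partial r_O}{\partial \psi} = \ay^\top \zc(q)$, and since $\aof$ is strictly $\kappa$-like ($\aof{}'>0$) and $\kappa_A'(A)>0$, this forces $\ay^\top\zc(q) = 0$. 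Finally, for $\setSii$(e), I combine the manipulator stationarity $\kappa_A'(A)\aof{}'(r_O)(\ez\times\widecheck{n}_j)^\top\zc = \underline{\lambda}_{m,j} - \overline{\lambda}_{m,j}$ with complementarity $\underline{b}_{j,m}\underline{\lambda}_{j,m} = \overline{b}_{j,m}\overline{\lambda}_{j,m} = 0$ and dual feasibility $\underline{\lambda}_{m,j},\overline{\lambda}_{m,j}\ge 0$, reading off the sign of $(\ez\times\widecheck{n}_j)^\top\zc$ according to whether $q_{m,j}$ is at its lower bound (where $\underline{b}_{j,m}$ can be nonzero so only $\overline{\lambda}$ may be active, forcing the quantity $\le 0$... here I must track the sign conventions in Definition \ref{def:lim} carefully), at its upper bound, or interior (both multipliers zero, quantity $=0$). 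The positivity of $\aof{}'(r_O) = \aof{}'$ at the relevant point needs $r_O$ in the domain where $\aof$ is strictly increasing, which holds since $\aof$ is strictly $\kappa$-like on all of $\mathbb{R}$. The converse direction for $\setSii$ runs the same equivalences backwards, setting $\kappa_F = \kappa_F(Z,F-F_d) > 0$ (nonzero since $Z > Z_d$), choosing multipliers $\underline{\lambda}_{m,j}, \overline{\lambda}_{m,j}$ to realize the signed quantities from $\setSii$(e), and checking all four KKT groups.

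\textbf{Main obstacle.} I expect the delicate part to be the sign bookkeeping in Branch 2: (a) establishing that $Z > Z_d$ rather than merely $Z \neq Z_d$ when $\kappa_F > 0$ and $B = 0$ — this requires ruling out $Z < Z_d$, presumably by appealing to positive invariance of $\setP$ (Proposition \ref{prop:pi}) so that $Z \ge Z_d^* > $ something, or by the monotonicity established in Proposition \ref{prop:lyapst} that keeps $Z \ge Z_d$ along trajectories; and (b) correctly matching the inequality directions in condition $\setSii$(e) against the component-wise max/min structure of $\underline{b}, \overline{b}$ in Definition \ref{def:lim} and the sign of $b^*$ in Definition \ref{def:hlvlu}. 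Everything else is a mechanical translation through Lemma \ref{lemma:skkt}, Lemma \ref{lemma:partialz}, and Lemma \ref{lemma:pdresulta}, using strict monotonicity of $\kappa_A$ and $\aof$ to divide out positive factors.
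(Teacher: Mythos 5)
Your plan follows the paper's proof essentially step for step: the case split on the multiplier $\kappa_F$, the reduction of Branch 1 to $\setSi$ via primal feasibility, and the translation of the Branch-2 stationarity/complementarity conditions into $\setSii$(b)--(f) using Lemmas \ref{lemma:pdresulta} and \ref{lemma:partialz} and the strict monotonicity of $\kappa_A$ and $\aof$. Two of the points you flag as ``delicate,'' however, are resolved differently (and more simply) than you anticipate, and as written your sketch would go wrong at both.

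First, establishing $Z > Z_d$ in Branch 2 does not require positive invariance of $\setP$, trajectory monotonicity, or the barrier geometry. It is immediate from dual feasibility alone: the multiplier of the CBF constraint is $\lambda = \kappa_F(Z, F-F_d) \geq 0$, and by Assumption \ref{assump:force} the quantity $F - F_d$ has the same sign as $Z - Z_d$ (the function $\xi \mapsto F(\xi + Z_d) - F_d$ is $\kappa$-like), so $\kappa_F(Z,F-F_d) \geq 0$ forces $Z \geq Z_d$; combined with $\kappa_F \neq 0$ this gives $Z > Z_d$ directly. The case $Z < Z_d$ is excluded because it would make $\kappa_F < 0$, violating dual feasibility — no separate handling is needed. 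Second, your parenthetical on $\setSii$(e) has the saturation bookkeeping backwards: when $q_{m,j}$ is at its \emph{lower} bound, Definition \ref{def:lim} gives $\underline{b}_{j,m} = \max(\underline{u}_{m,j}, 0) = 0$, so complementarity leaves $\underline{\lambda}_{m,j}$ free (possibly positive) while $\overline{b}_{j,m} > 0$ forces $\overline{\lambda}_{m,j} = 0$; the quantity $(\ez \times \widecheck{n}_j)^\top \zc \propto \underline{\lambda}_{m,j} - \overline{\lambda}_{m,j}$ is then \emph{nonnegative}, not $\leq 0$ as you wrote. You hedge on both points, so these are correctable rather than conceptual failures, but the plan needs these two repairs to match the paper's (otherwise identical) argument.
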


Proposition \ref{prop:zeroset} states that equilibrium points of the controller occur \emph{if and only if} one of two mutually-exclusive conditions is met. The first condition, denoted as \(q \in \setSi\), occurs when the task has been achieved within an alignment error tolerance. Note that the terminal alignment error in this case, \( A \leq \kappa_A^{-1}(Z_d-Z_d^*) \), can be improved in two ways: by bringing \( Z_d^* \) closer to \( Z_d \) or by making the function \( \kappa_A \) steeper near 0.

The second condition, \(q \in \setSii\), applies when there is no contact with the surface. In this case, the \(X\) and \(Y\) alignments are achieved, the vectors \(\zc, \ez\), and \(\az\) are coplanar, the system lies at the boundary of the CBF constraint, and there is a specific sign condition for the joints of the manipulator, which depends on the saturation of the joints. The sign condition for the manipulator joints when $q \in \setSii$ may seem enigmatic, but it essentially indicates that for the configuration \( q \) to be a spurious (undesirable) equilibrium point, specific conditions related to gradient descent must be met. Specifically, when attempting to improve the alignment error \( r_O \) from \( q \) by applying gradient descent in the joint coordinates \( q_{m,j} \) only, one discovers that improvement is not possible. This occurs because, for each joint \( j \), either that joint cannot contribute (\( \frac{\partial r_O}{\partial q_{m,j}}(q) = (\ez \times \widecheck{n}_j)^{\top}\zc=0 \)), or the direction specified by \( -\frac{\partial r_O}{\partial q_{m,j}}(q) \) is obstructed by saturation. For instance, if \( -\frac{\partial r_O}{\partial q_{j,m}}(q)>0 \) (meaning that increasing \( q_{j,m} \) is necessary to reduce \( r_O \)), then \( q_{j,m} \) must already be at its upper saturation limit.

\section{Experimental Results}
\label{sec:exp}
In order to validate the performance of the proposed control method, experiments were carried out on a custom-built UAM. The platform is comprised of a quadcopter UAV and a $2$-DoF planar robot manipulator attached underneath it, seen in Figure~\ref{fig:experiment_pics}. The UAV weighs $\approx 3$ Kg and is equipped with the Pixhawk autopilot running ArduCopter and an Intel NUC i7 computer, while the arm joints are Dynamixel XM-430 motors. The optimization controller is computed on-board at $60$ Hz and the velocity commands are transmitted to the ArduCopter autopilot and the joint motors respectively. The actual time to compute a controller step was clocked at $2\mbox{msec}$ maximum. A motion capture system is used to provide both the autopilot and the proposed controller with position and orientation estimates for the UAV. An ATI Mini-40 FT sensor exists at the end-effector for force measurements. A video of the simulations is available at \href{https://youtu.be/BAkfiErkPYI}{this link} \footnote{https://youtu.be/BAkfiErkPYI}.

\begin{figure}[htbp]
    \centering
    \includegraphics[width=0.9\columnwidth,keepaspectratio]{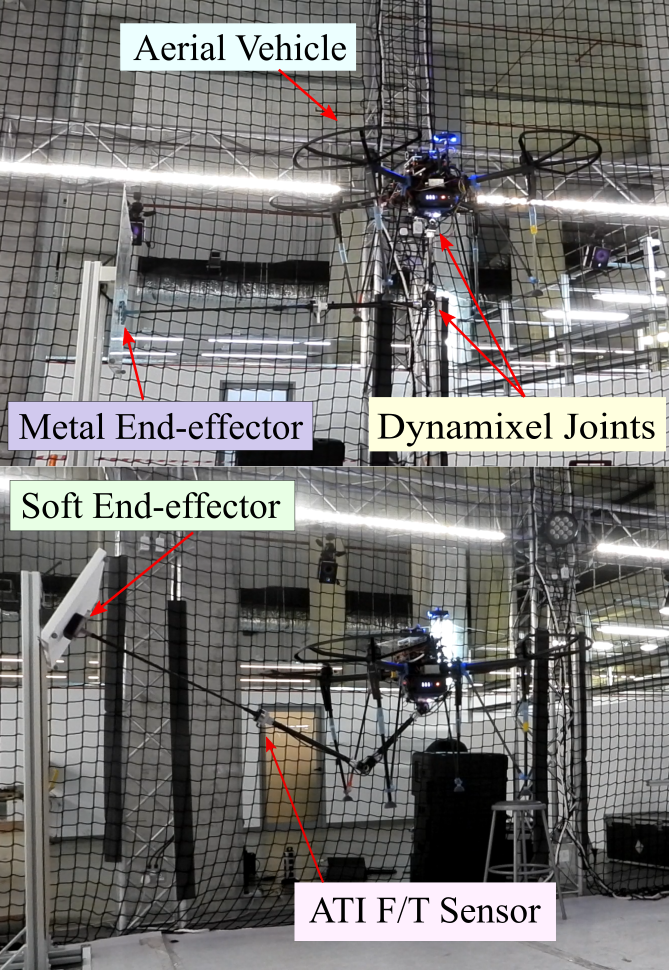}
    \caption{UAM during hard contact (top) and inclined force exertion (bottom), with annotated components.}
    \label{fig:experiment_pics}
\end{figure}

The controller parameters are the same in all presented experiments, as follows: 
\begin{eqnarray}
&& \kappa_F(s_1,s_2) =(0.12|s_1|+0.02)[s_2]^{0.5} \nonumber \\
&& \kappa_A(s) = 2.08s/(\sqrt{s}+0.29)^2 \ , \  \kappa_B(s)=-0.3s \nonumber \\ 
&& V_{A}^{XY}(r_X,r_Y) = 6.5\left(r_X^2 + r_Y^2\right) \ , \ \kappa_A^O(r_O) = 4r_O \nonumber \\
&& \underline{u}_m=-\overline{u}_m=[20 \ 20]^\top \ , \ \underline{q}_m=-\overline{q}_m=[70 \  105]^\top \nonumber \\
&& \underline{u}_D=-\overline{u}_D=[0.1  \ 0.15 \ \infty \ 5.7]^\top \ , \ K_L=0.5 \nonumber \\
&&   \epsilon_1 = \epsilon_2 = 0.04, \epsilon_4 = 4\times 10^{-5}, \epsilon_5=\epsilon_6 =  3 \times 10^{-6} \nonumber  
\end{eqnarray}
Above, all the positions are measured in meters, angles in degrees, times in seconds and forces in Newtons. The value for $F_d$ changed in each experiment, and $Z_d^*$ was selected as a very small negative number ($-0.001$m), since in all the experiments the necessary deformation to exert the force is negligible.

A rationale for selecting \(\kappa_A\) and \(\kappa_F\) will be provided. For \(\kappa_A\), a function with a steep—yet finite—inclination at \(s=0\) is desirable in order to enforce a small alignment error. Furthermore, it is preferable for \(\kappa_A\) to saturate as \(s \rightarrow \infty\). Indeed, enforcing an arbitrarily high distance when the alignment error is arbitrarily high is not necessary. The proposed function is a simple function with these characteristics.

For \(\kappa_F(s_1,s_2)\), the simplest choice might be \(\kappa_F(s_1,s_2) = K_F s_2\) for some \(K_F > 0\). However, experimental observations indicated that a nonlinear controller would be more appropriate. This is because a high value of \(K_F\) is desirable when \(s_2=F-F_d\) is close to 0, but a lower value is preferable in other cases. The choice \(\kappa_F(s_1,s_2) = K_F'[s_2]^{0.5}\) (in which $[\cdot]^n$ was defined in Section \ref{sec:matnot}) for another \(K_F' > 0\) possesses this characteristic. It is important to note that since this choice is independent of \(s_1\), during the entire no-contact phase, \(s_2 = F - F_d = -F_d\), which is constant. Therefore, the velocity driving the end-effector towards the surface remains constant until contact is made. This aspect complicates the tuning process: if \(K_F'\) is too low, the approach is excessively slow; conversely, if \(K_F'\) is too high, the force exerted is overly aggressive. Introducing a distance-dependent \(\kappa_F\) (i.e., dependent on \(s_1 = Z\)) resolves this issue. This rationale underpins the crafting of \(\kappa_F\) in Definition \ref{def:potential}, enabling this capability.



\subsection{Experiment 1: \textit{Above the $B=0$ curve}}
For the first experiment, the controller is enabled after the UAM acquires an initial pose far from the surface and significantly misaligned with it. This corresponds to an initial configuration far 'above' the proposed distance-alignment barrier curve. The target surface is vertical with the world, meaning the force exertion axis is perpendicular to gravity, with a target force of $F_d = -3N$.

The alignment errors, the applied force and the controller velocities appear in Figures~\ref{fig:cbf_above}, \ref{fig:force_above} and \ref{fig:vels_above}. 
Figure~\ref{fig:cbf_above} indicates that the controller drives the system towards the surface, while properly aligning the end-effector in order to satisfy the desired barrier function as possible, without need for a separate planner. On the left all individual alignment error metrics are plotted separately. 
It  should be noted that the ``orientation error'' is the angle computed by $(180/\pi)\cos^{-1}(1-r_O)$.
In Figure~\ref{fig:force_above}, the controller also managed to maintain the measured force close to the desired one, while the commands generated by the controller during the experiment are shown in Figure~\ref{fig:vels_above}.
\begin{figure}[htbp]
    \centering
    \includegraphics[width=0.9\columnwidth,keepaspectratio]{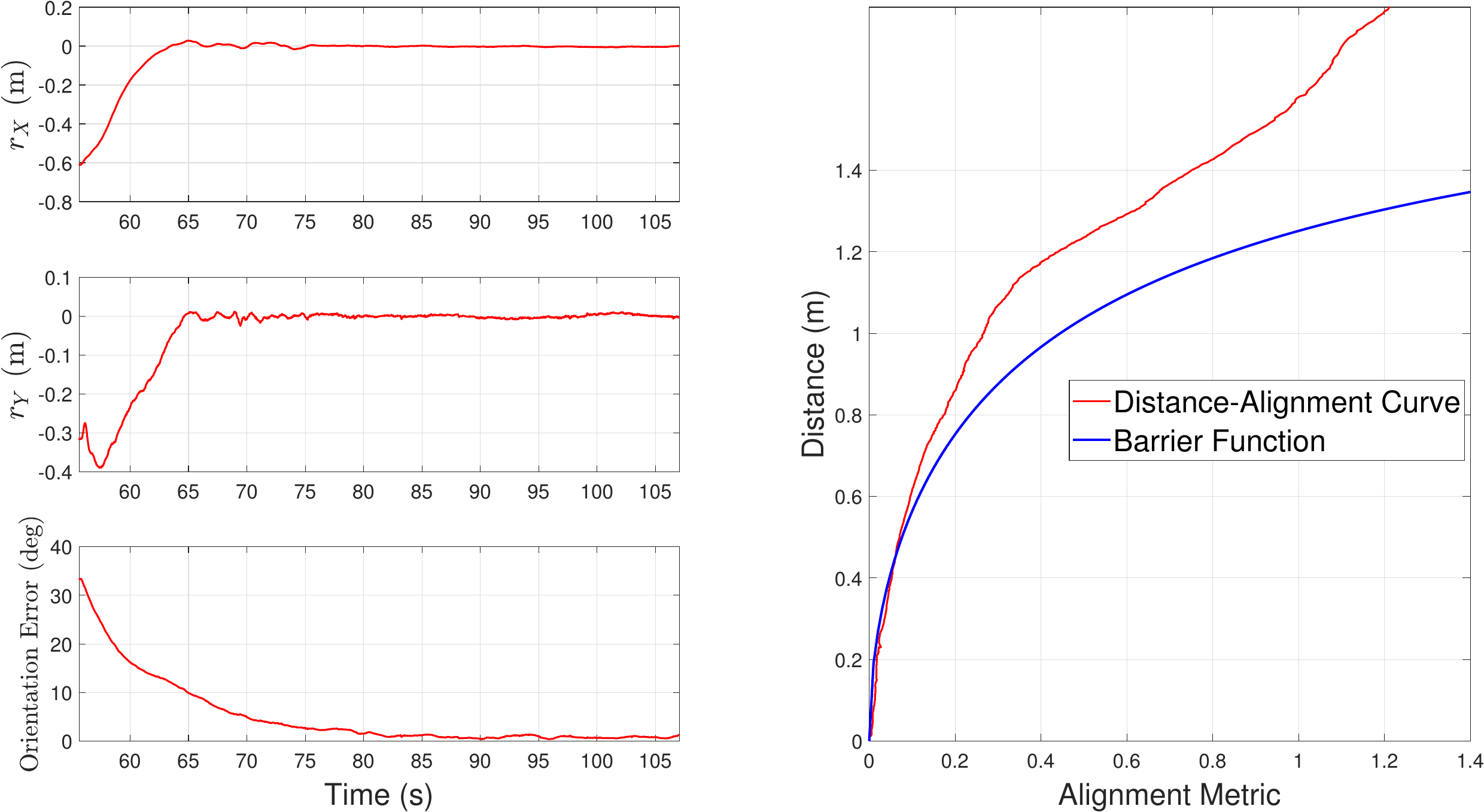}
    \caption{(Left) End-effector alignment errors (Right) Distance-alignment curve and the barrier, (Experiment 1). }
    \label{fig:cbf_above}
\end{figure}

\begin{figure}[htbp]
    \centering
    \includegraphics[width=0.7\columnwidth,keepaspectratio]{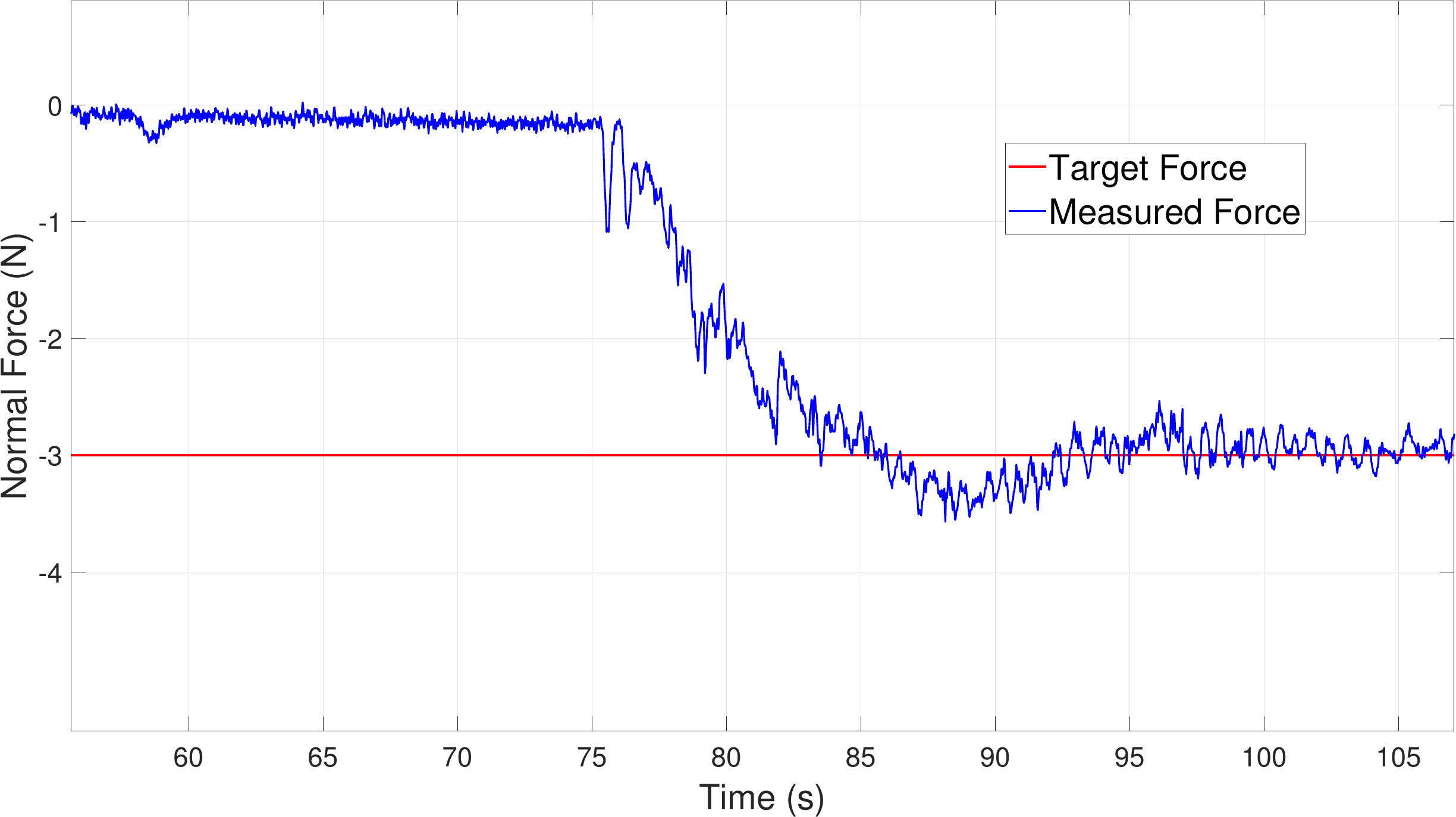}
    \caption{Measured normal force, (Experiment 1).}
    \label{fig:force_above}
\end{figure}

\begin{figure}[htbp]
    \centering
    \includegraphics[width=1\columnwidth,keepaspectratio]{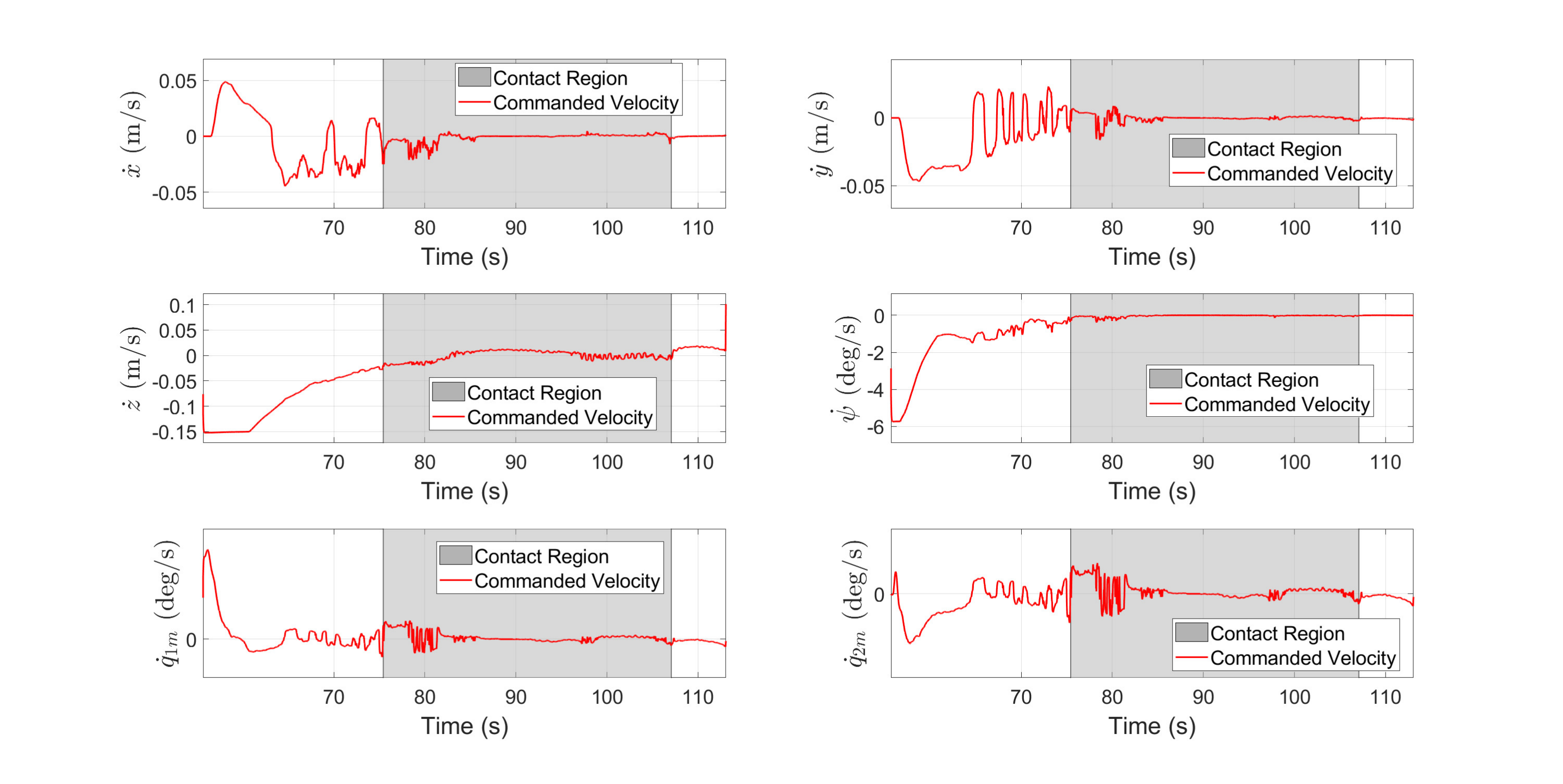}
    \caption{Controller velocity command outputs, Experiment 1. }
    \label{fig:vels_above}
\end{figure}

\subsection{Experiment 2: \textit{Below the $B=0$ curve}}
For the second experiment, the controller is enabled after the aerial manipulator acquires an initial pose very close to the surface and significantly misaligned with it. This corresponds to an initial configuration 'below' the proposed distance-alignment barrier curve, inside the undesirable region. The surface orientation and target force are identical to that of Experiment 1.

\begin{figure}[htbp]
    \centering
    \includegraphics[width=0.9\columnwidth,keepaspectratio]{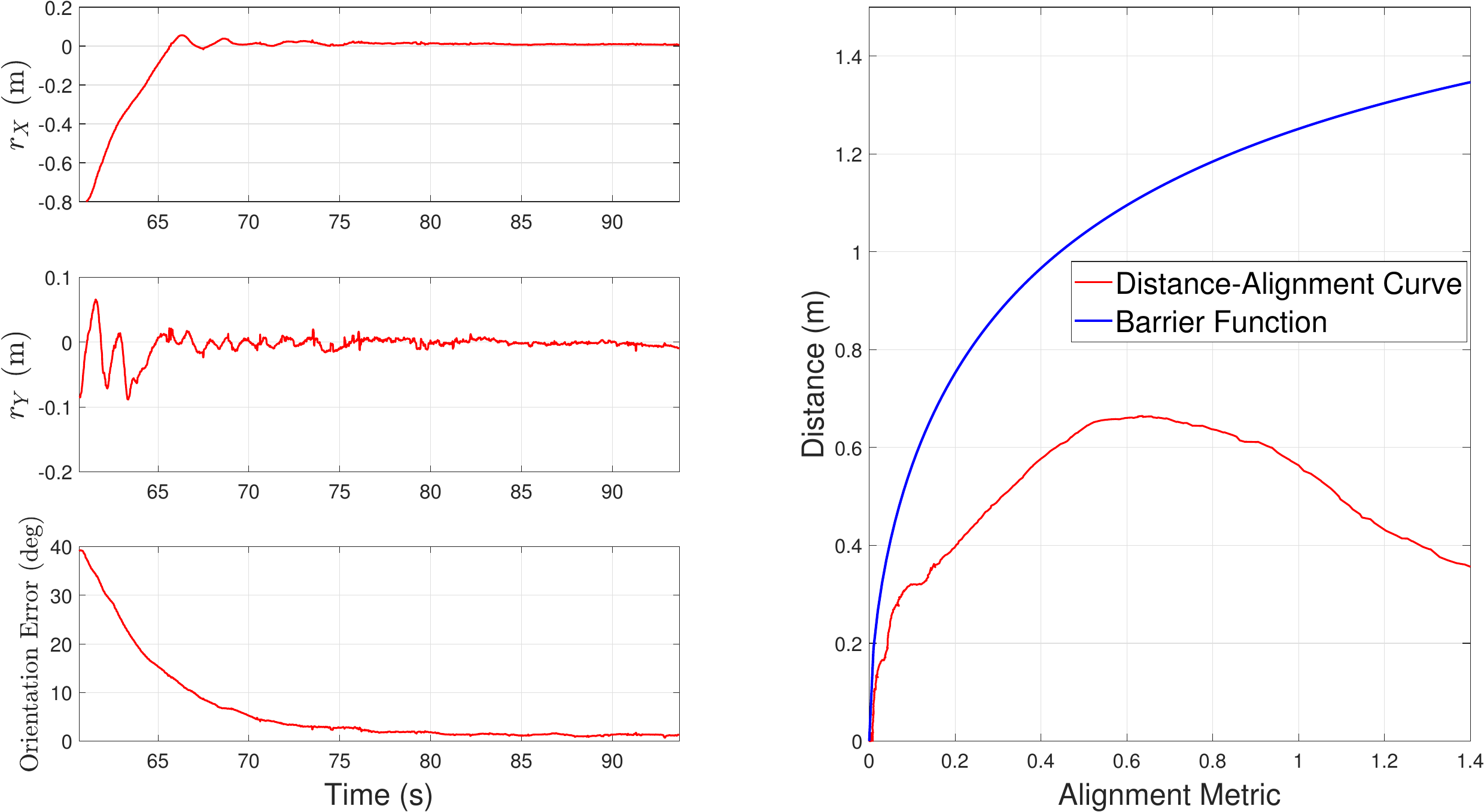}
    \caption{(Left) End-effector alignment errors (Right) Distance-alignment curve and the barrier, (Experiment 2).}
    \label{fig:cbf_below}
\end{figure}

\begin{figure}[htbp]
    \centering
    \includegraphics[width=0.7\columnwidth,keepaspectratio]{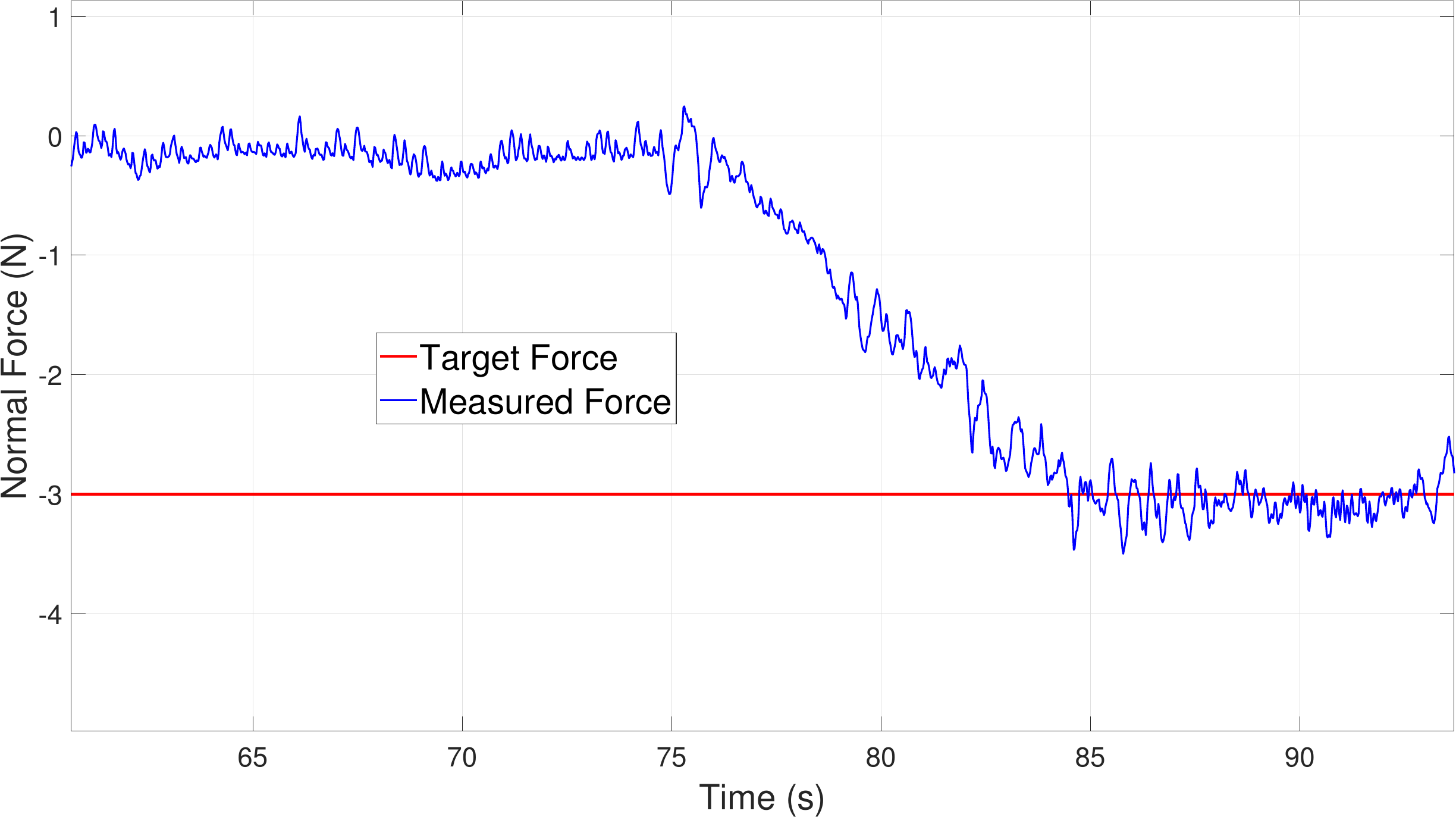}
    \caption{Measured normal force, (Experiment 2).}
    \label{fig:force_below}
\end{figure}

Figure~\ref{fig:cbf_below} shows how the controller opts to drive away from the surface initially, while improving the alignment. This is a direct outcome of trying to stay near the desired distance-alignment curve. From Figure~\ref{fig:force_below}, it can be seen that maintaining force close to the desired is unaffected by the initial conditions, with the task succeeding again.

\subsection{Experiment 3: Inclined surface}
For the third experiment, the focus is on exhibiting how the use of a dexterous robotic arm, allows the exertion of forces on surfaces of arbitrary orientation. Thus, the target surface is now at a $30^o$ inclination with respect to the world frame. The setup can be seen in the left of Figure~\ref{fig:experiment_pics}.

Figure~\ref{fig:cbf_inclined} shows that the surface orientation does not affect the controller approach, as the end-effector is kept close to the desired barrier curve still. More importantly, Figure~\ref{fig:force_inclined} demonstrates how the proposed controller is just as capable of achieving safe force exertion on a non-vertical inclined surface. 
\begin{figure}[htbp]
    \centering
    \includegraphics[width=0.9\columnwidth,keepaspectratio]{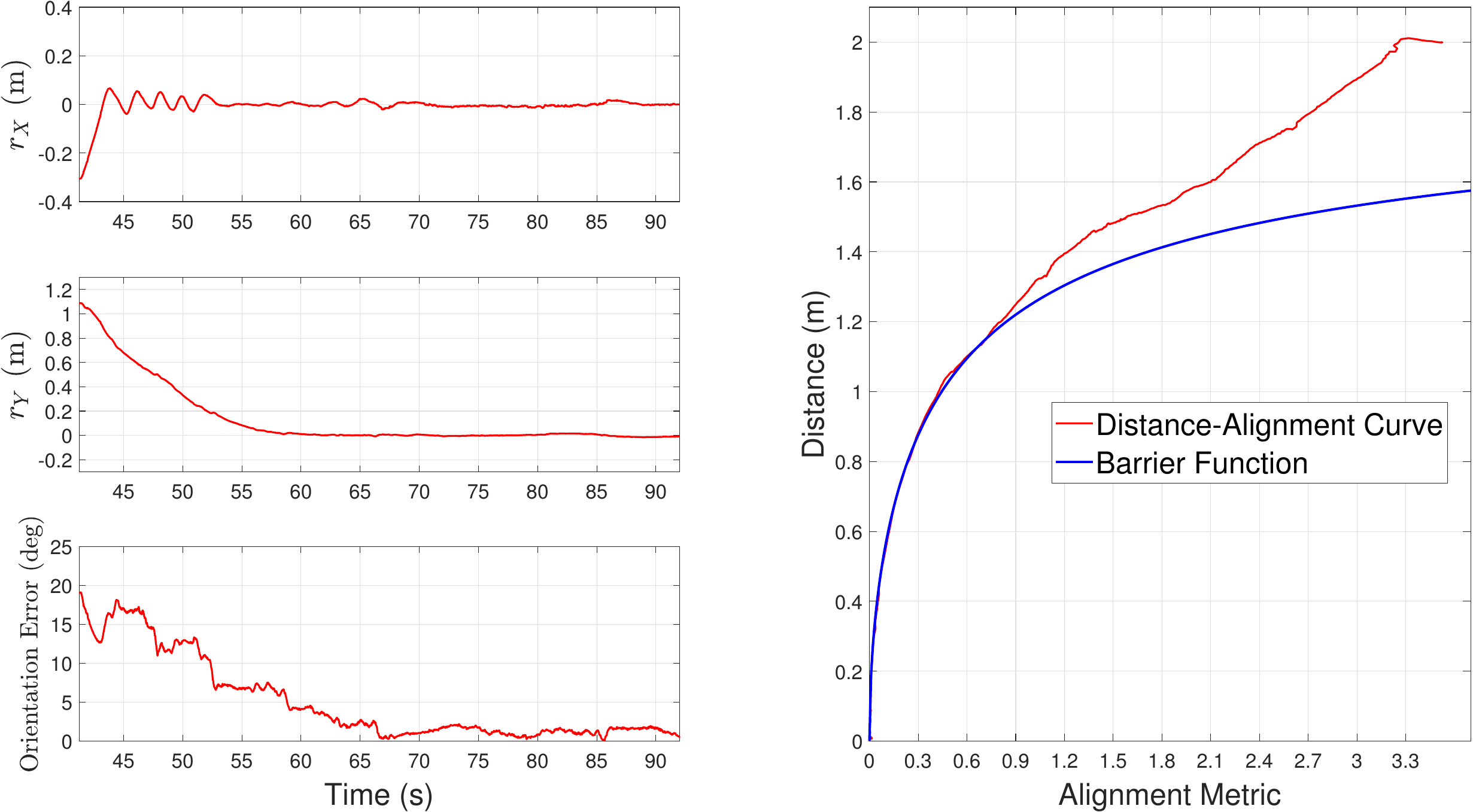}
    \caption{(Left) End-effector alignment errors (Right) Distance-alignment curve and the barrier, (Experiment 3).}
    \label{fig:cbf_inclined}
\end{figure}

\begin{figure}[htbp]
    \centering
    \includegraphics[width=0.7\columnwidth,keepaspectratio]{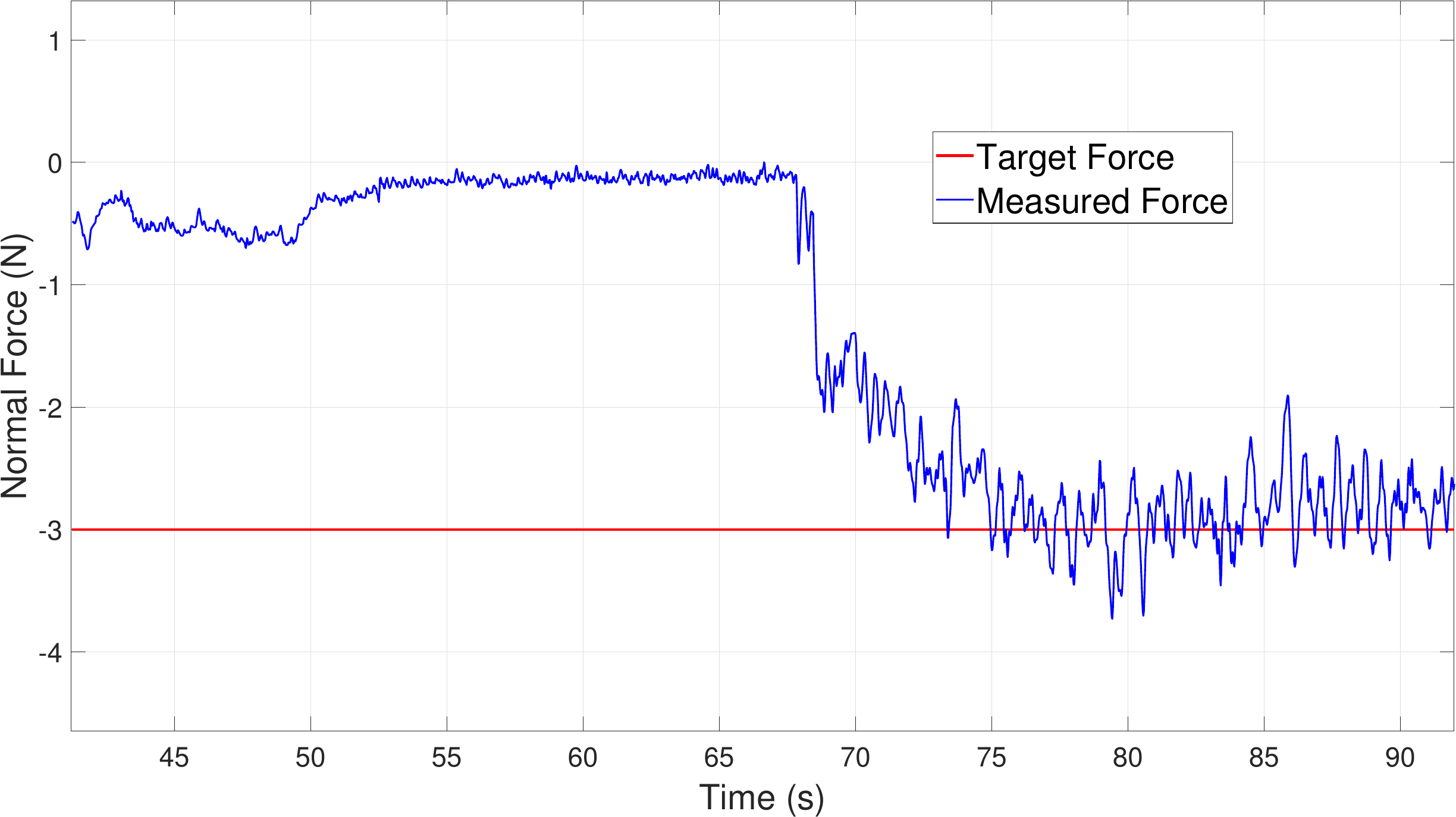}
    \caption{Measured normal force against inclined wall, (Experiment 3).}
    \label{fig:force_inclined}
\end{figure}

\subsection{Experiment 4: Different exerted forces}
The fourth experiment is dedicated to comparing the performance of the controller for various different normal force targets. The targets span from $F_d = -1N$ until $F_d = -5N$, on a vertically oriented surface. From Figure~\ref{fig:force_comparisons} it can be seen that the controller can achieve convergence to different force magnitudes.
\begin{figure}[htbp]
    \centering
    \includegraphics[width=0.7\columnwidth,keepaspectratio]{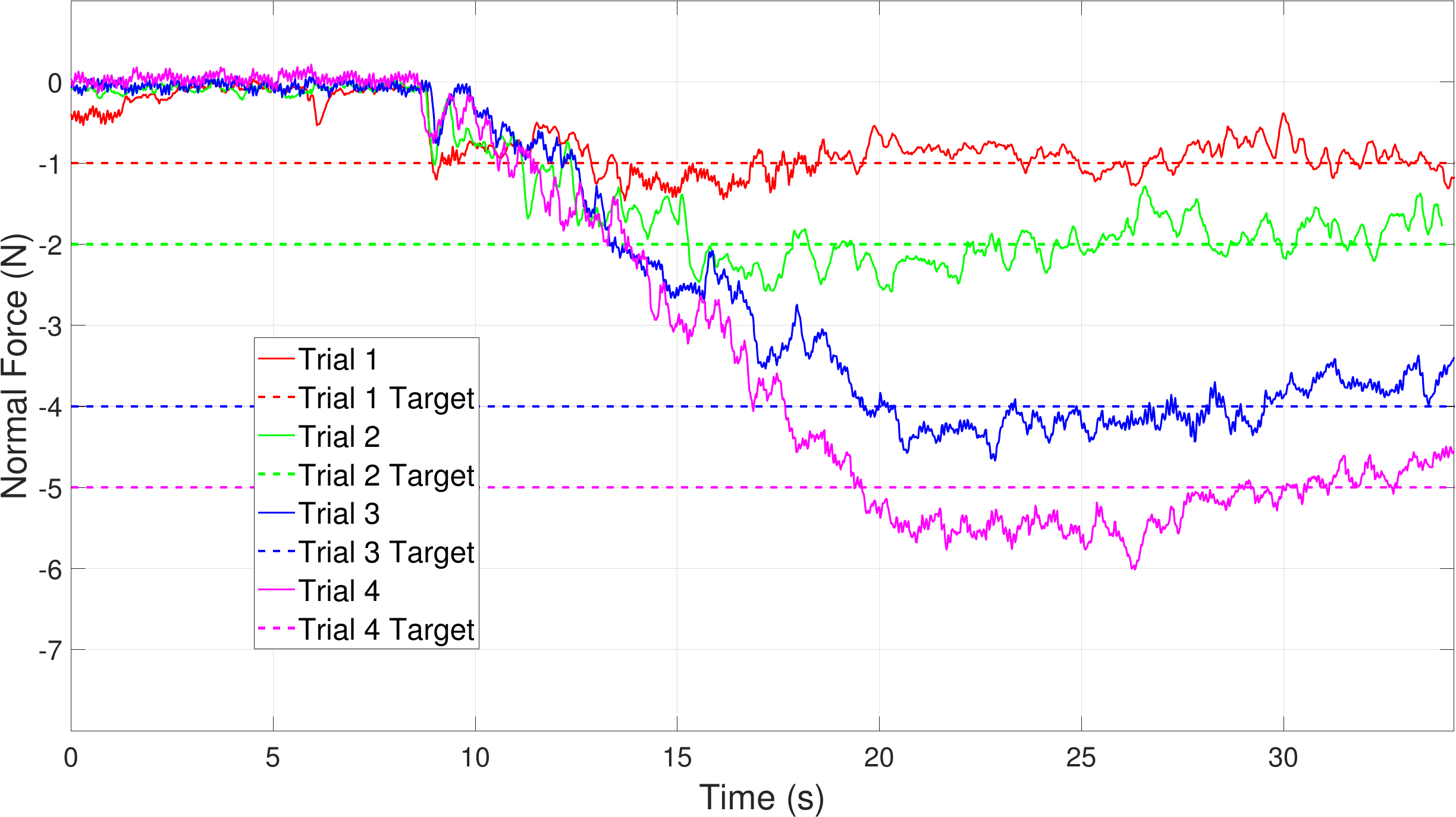}
    \caption{UAM-force exertion for different desired forces, (Experiment 4).}
    \label{fig:force_comparisons}
\end{figure}

\subsection{Experiment 5: Hard contact (large stiffness)}
In all previous experiments, the end-effector was coated with a $2cm$ layer of soft material with Young's modulus $50$kPa, and the material of the surface was a thick block of polystyrene with Young's modulus $3$MPa. In order to verify that soft contact is not necessary for the controller to achieve the task, the vertical surface was replaced by a metal sheet (Young's modulus $60$GPa), while a new end-effector is introduced, featuring metal balls for hard reduced-friction contact. The setup can be seen in the top of Figure~\ref{fig:experiment_pics}. The reduced-friction end-effector is seen in Figure~\ref{fig:metal_end_effector}.
\begin{figure}[htbp]
    \centering
    \includegraphics[width=0.5\columnwidth,keepaspectratio]{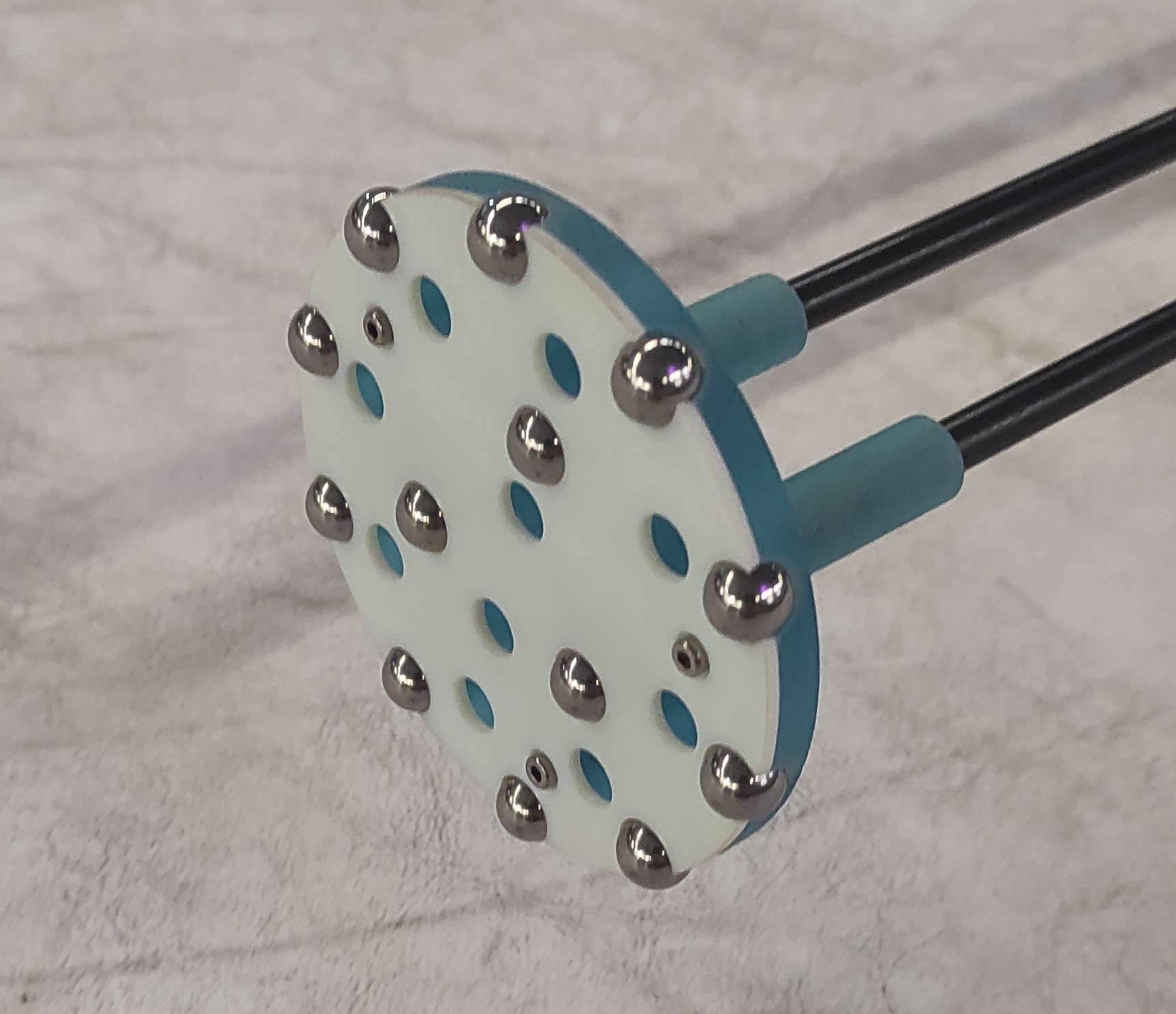}
    \caption{End-effector with metal balls for reduced-friction hard contact.}
    \label{fig:metal_end_effector}
\end{figure}

Figures~\ref{fig:cbf_hard},\ref{fig:forces_hard} help to conclude that the proposed controller is capable achieving stable force control under large stiffness, similar to the soft contact cases.

\begin{figure}[htbp]
    \centering
    \includegraphics[width=0.9\columnwidth,keepaspectratio]{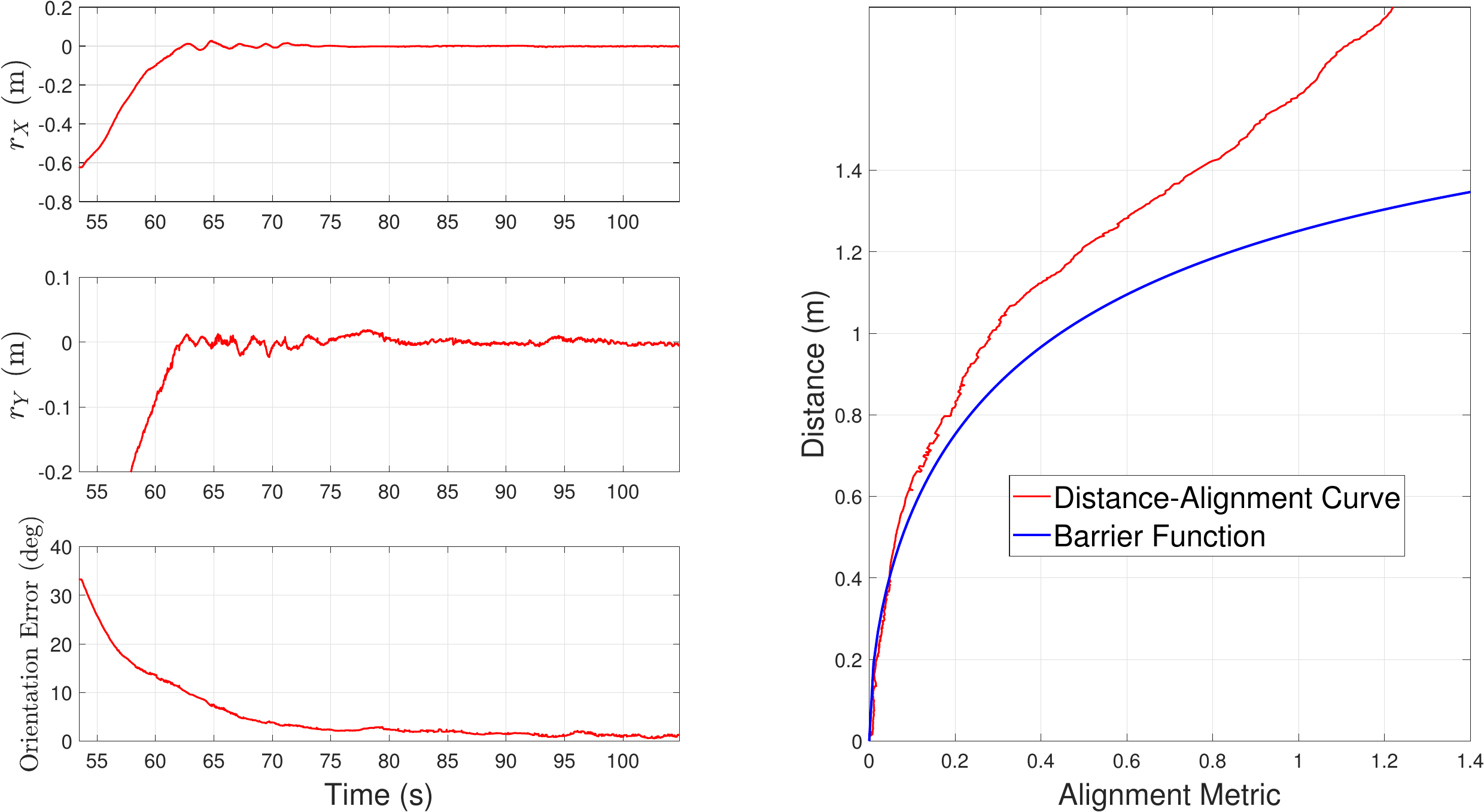}
    \caption{(Left) End-effector alignment errors (Right) Distance-alignment curve and the barrier (Experiment 5)}
    \label{fig:cbf_hard}
\end{figure}
\begin{figure}[htbp]
    \centering
    \includegraphics[width=0.7\columnwidth,keepaspectratio]{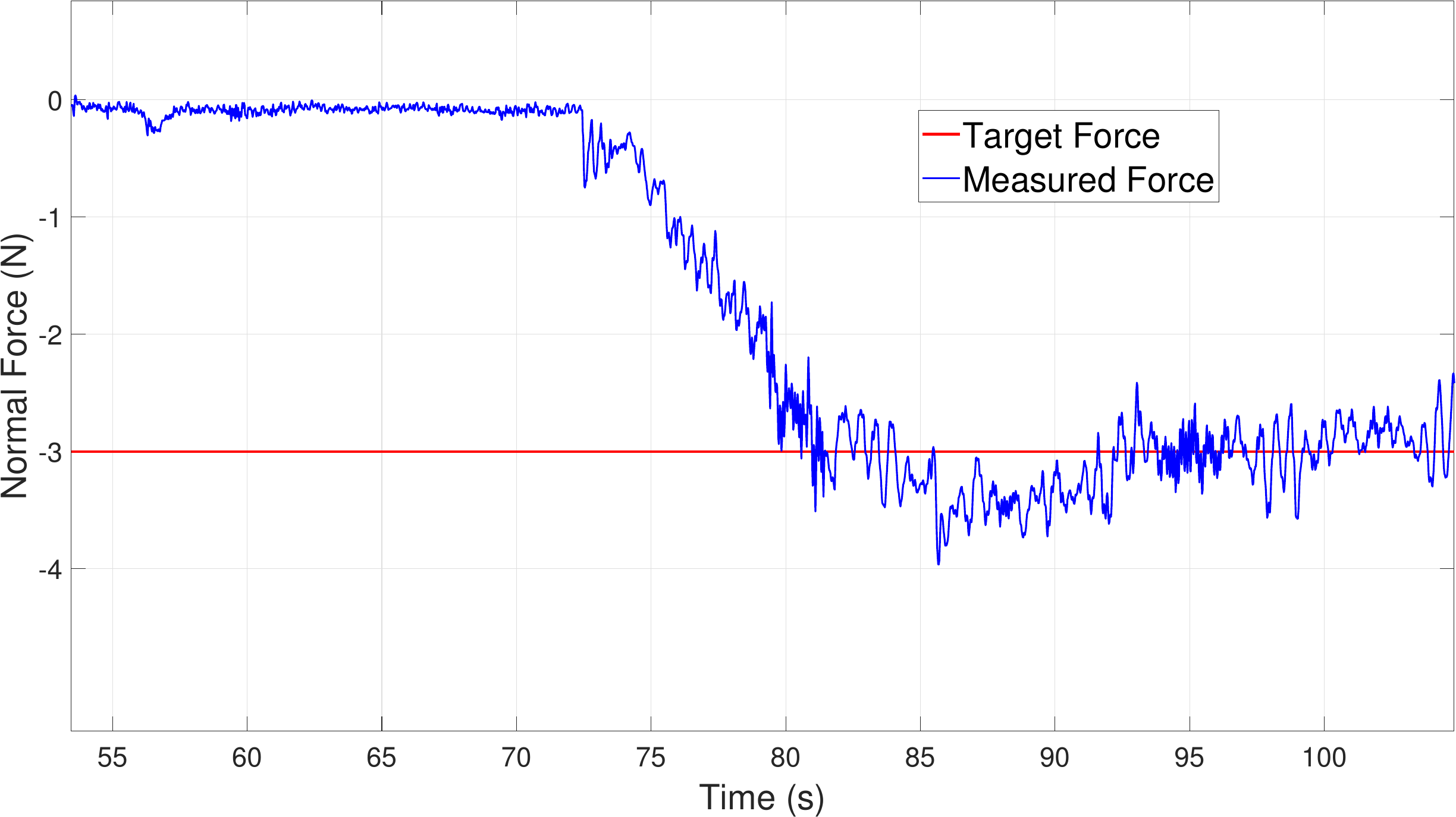}
    \caption{Measured normal force in the case of hard contact, (Experiment 5).
    }
    \label{fig:forces_hard}
\end{figure}

\subsection{Experiment 6: Contact/Force application while moving}
There are real-world applications requiring  contact with the environment and applying force while moving or tracking a trajectory. For the last experiment, the desired contact position along the planar axes of the surface is modified with application of the same controller to achieve motion while tracking the desired normal force.

The results from this experiment are seen in Figures~\ref{fig:force_motion_cbf},\ref{fig:force_motion_ft}. From Figure~\ref{fig:force_motion_cbf} the end-effector successfully performs two distinct motion profiles on the surface plane, while maintaining the desired relationship between distance and alignment.
\begin{figure}[htbp]
    \centering
    \includegraphics[width=\columnwidth,keepaspectratio]{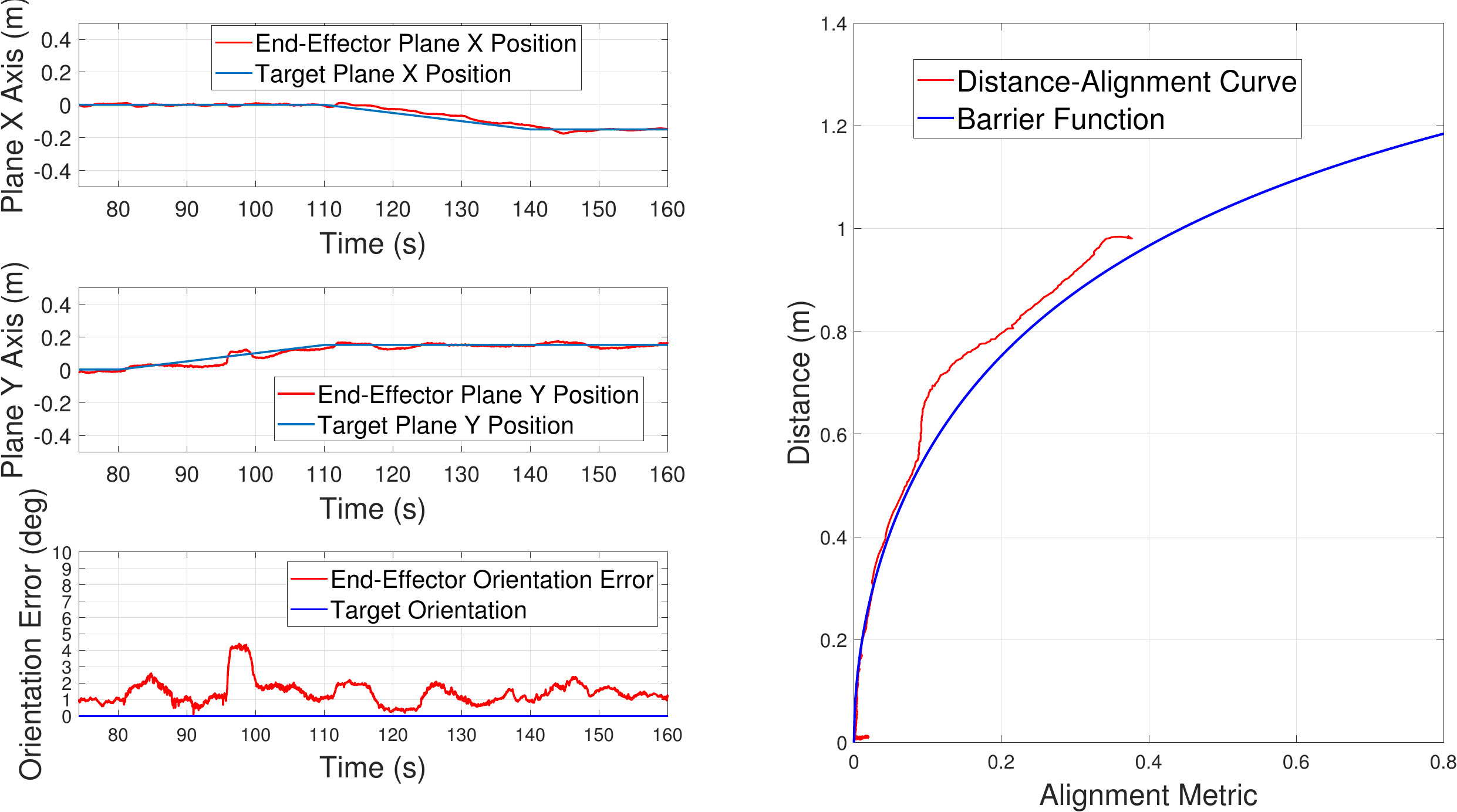}
    \caption{Motions on surface plane and corresponding barrier curve, (Experiment 6)
}
    \label{fig:force_motion_cbf}
\end{figure}

All the measured forces and torques are shown in Figure~\ref{fig:force_motion_ft}, where the UAM maintains normal force exertion throughout the motion, while indeed all other forces and torques remain close to zero. In all other experiments with static force exertion, the corresponding planar forces and torques are likewise close to zero.
\begin{figure}[htbp]
    \centering
    \includegraphics[width=\columnwidth,keepaspectratio]{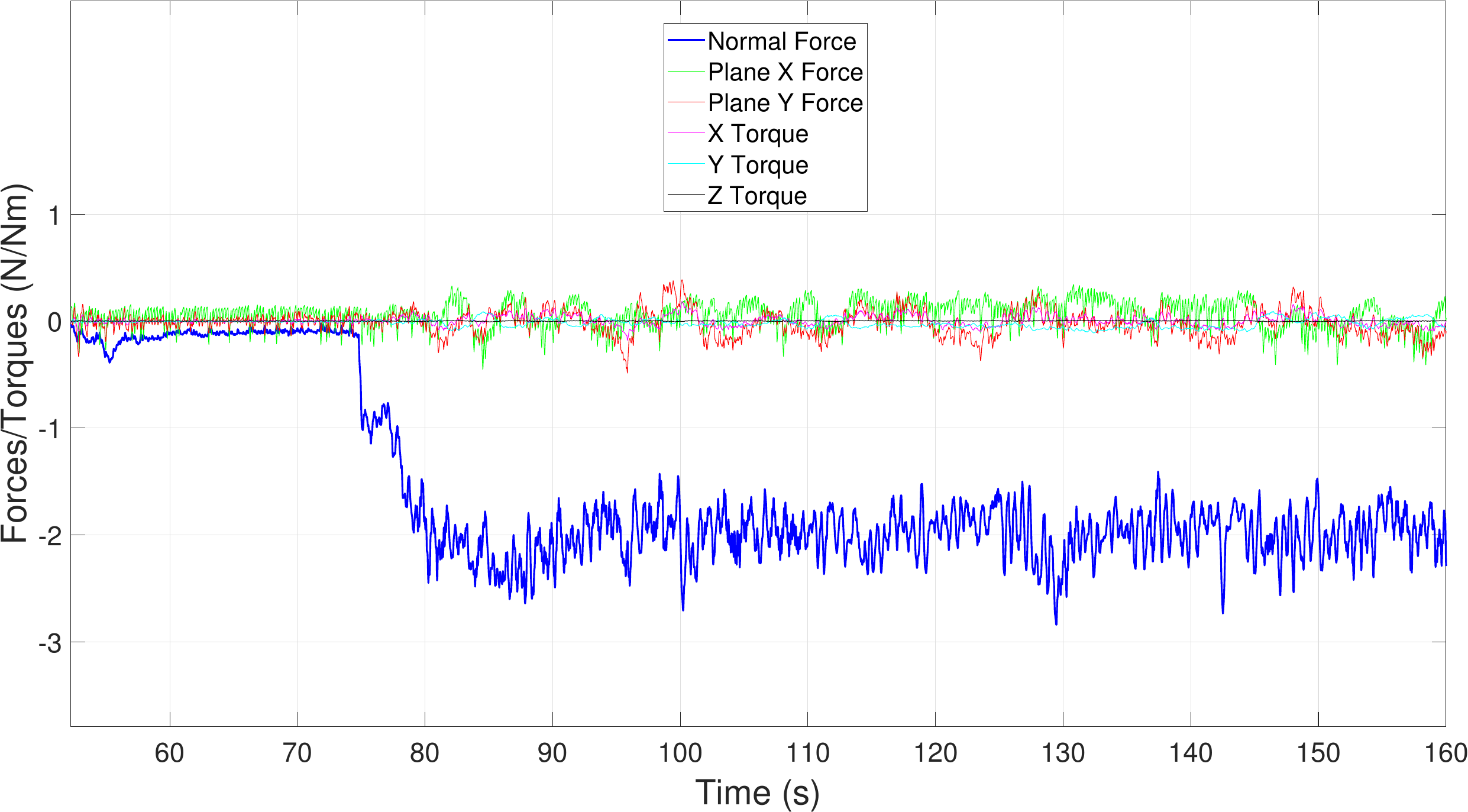}
    \caption{All measured forces and torques during motion with normal force exertion, (Experiment 6)
}
    \label{fig:force_motion_ft}
\end{figure}

\section{Conclusion}
\label{sec:concl}
In this paper, an optimization-based control method was presented for the problem of safe normal force exertion on surfaces of arbitrary orientation. The controller utilizes CBFs for prescribing a relationship between the distance to the surface and the alignment of the end-effector with it, thus ensuring contacts occur under strict, safe conditions. An in-depth analysis of the control method is provided, including feasibility, continuity and stability proofs. Extensive experiments were carried out, using different surface materials, initial conditions and force targets, establishing the capacity of the proposed control method to achieve aerial manipulator force control.
\section*{Acknowledgements}
This work was partially supported by the NYUAD Center for Artificial Intelligence and Robotics (CAIR), funded by Tamkeen under the NYUAD Research Institute Award CG010. Experimental studies were partially performed in NYUAD's Kinesis Lab, Core Technology Platform, Abu Dhabi, UAE. The authors thank Mr. Nikolaos Giakoumidis for his valuable technical support.

\section*{Appendix A - Proofs of Propositions}
\textbf{Proof of Proposition \ref{prop:nonzerograd}}: Using Lemma \ref{lemma:pdresulta} it can be seen that  $\frac{\partial}{\partial z}B(q) = \frac{\partial}{\partial z} Z(q)  = 1~\forall q$. $\qedsymbol$

\textbf{Proof of Proposition \ref{prop:pi}}: The CBF constraint in \eqref{q_m_cbf} guarantees strict bounds $\underline{q}_m,\overline{q}_m$ for $q_m$, while the first constraint of \eqref{eq:uH} guarantees $\Dot{B}(q)$ is non-decreasing as $B(q)\rightarrow0$, thus ensuring positive invariance of $\setP$. $\qedsymbol$

\textbf{Proof of Proposition \ref{prop:feasF}}: Note that, by the definition of $\underline{b}(q)$, $\overline{b}(q)$ in \eqref{eq:bdef}, 
it is always true that   $\underline{b}(q) \leq \overline{b}(q)$. Thus, the constraint $\underline{b}(q) \leq \mu \leq \overline{b}(q)$ alone is always feasible for any $q$. It remains to study when this conflicts with $\nabla_q B(q)^{\top} \mu \geq -\kappa_B(B(q))$.
Consider the linear program (LP):
\begin{eqnarray}
\max_\mu  \hspace{-5pt}  &&  \nabla_q B(q)^{\top} \mu ~ \mbox{such that} \nonumber \\
&& \underline{b}(q) \leq \mu \leq \overline{b} (q). \nonumber
\end{eqnarray}
Then $\underline{b}(q) \leq \mu \leq \overline{b}(q)$ is compatible with the CBF inequality $\nabla_q B(q)^{\top} \mu \geq -\kappa_B(B(q))$ if and only if the maximum value of this LP, which is the left-hand side of the CBF inequality and represents the highest $\dot{B}$ achievable inside the box $\underline{b} \leq \mu \leq \overline{b}$, is greater than or equal to the right-hand side of the CBF inequality. This LP is trivial to solve since it is decoupled for the variables $\mu_i$. The optimal $\mu_i$ can be obtained by solving an one-dimensional LP for each variable:
\begin{eqnarray}
&& \max_{\mu_i} \ \ \  \frac{\partial B}{\partial q_i}(q) \mu_i ~ \mbox{such that} \nonumber \\
&& \underline{b}_i(q) \leq \mu \leq \overline{b}_i (q). \nonumber
\end{eqnarray}
\noindent The optimal solution can be obtained by inspection, and is: $\mu_i = \underline{b}_i(q)$ if $\frac{\partial B}{\partial q_i}(q) < 0$, $\mu_i = \overline{b}_i(q)$ if $\frac{\partial B}{\partial q_i}(q) > 0$ and any $\mu_i$ if $\frac{\partial B}{\partial q_i}(q) = 0$. That is, $\mu = b^*(q)$, with $b^*$ according to Definition \ref{def:hlvlu}. With this choice, the optimal value of the LP is $\nabla_q B(q)^\top b^*(q)$, and thus it is possible to obtain the necessary and sufficient condition $\nabla_q B(q)^\top b^*(q) \geq -\kappa_B(B(q))$, i.e., $q \in \setF$. $\qedsymbol$

\textbf{Proof of Proposition \ref{prop:zero}}: Application of $\mu=0$ in \eqref{eq:uH}, results in
$
    -\kappa_B \big( B(q) \big) < 0,~\mbox{and }  \underline{b}(q) < 0 < \overline{b}(q). 
$
Both of the above inequalities are already guaranteed from the positive invariance of $\setP$, the properties of $\kappa_B$ (Definition \ref{def:hlvlu}) and $\underline{b}, \overline{b}$ (Definition \ref{def:lim}). Consequently, $\mu=0$ is a strictly feasible solution
. $\qedsymbol$

\textbf{Proof of Proposition \ref{prop:unique}}: From Proposition \ref{prop:feasF}, $q \in \setF$ implies that a solution exists. On the other hand, from convex function theory, it is known that a strictly convex function minimized over a convex set, has a unique optimal solution,  provided that a solution exists in the first place. So it remains to show that the optimization problem in \eqref{eq:uH} is a strictly convex optimization problem.

Indeed, in \eqref{eq:uH}, the constraint set is comprised of linear inequalities and is thus convex. The quadratic optimization problem objective function of \eqref{eq:uH} can be written in the form
\begin{equation*}
    \min_{\mu} ~ ~ \frac{1}{2}\mu^T H \mu + f^T\mu
\end{equation*}
with $H=2\nabla_q r_Z \nabla_q r_Z^{\top} +  2E$. Since $H$ is positive definite (Lemma \ref{lemma:pod}), the objective function is strictly convex. $\qedsymbol$

\textbf{Proof of Proposition \ref{eq:propcontinuity}}: The constraint functions are continuous in $q$. The objective function matrices $H, f$ are also continuous in $q$ and $H$ is positive definite (see Lemma \ref{lemma:pod}). From \cite{Lipshitz}, it can be shown that the above conditions, along with $\mu=0$ being a strictly feasible solution (from Proposition 2), are sufficient to prove continuity of the optimal solution $u_H(q)$ in $q$. $\qedsymbol$


\textbf{Proof of Proposition \ref{prop:lyapst}}: By definition, the solution $\mu=u_H(q)$ is the unique (see Proposition \ref{prop:unique}) minimizer of $W$, or $W(q,u_H(q))\leq W(q,\mu)$ for any feasible solution $\mu$. Since $\mu=0$ is feasible (Proposition \ref{prop:zero}), the particular inequality $W(q,u_H(q))\leq W(q,0)$ holds. Using this inequality, the fact that $\dot{r}_Z= \nabla_q r_Z(q)\dot{q}$, expanding the squared norm and simplifying the resulting expression, it is possible to obtain
\begin{equation*}
    2\frac{\partial V_F}{\partial r_Z}\dot{r}_Z  + \|\dot{r}_Z\|^2 + \dot{q}^{\top} E \dot{q} \leq 0.
\end{equation*}
Using the fact that $\frac{\partial V_F}{\partial r_Z} \dot{r}_Z  = \dot{V}_F$, it is deduced that $\dot{V}_F \leq -\frac{1}{2} (\|\dot{r}_Z\|^2 + \dot{q}^{\top} E \dot{q})$ and the desired result easily follows from Lemma \ref{lemma:pod}. $\qedsymbol$  

\textbf{Proof of Proposition \ref{prop:Adot}}: The Karush-Kuhn Tucker conditions (KKT) for the problem in \eqref{eq:uH} will be necessary. They are:
\begin{eqnarray}
\label{eq:kkt}
    && \mbox{Stationarity}:  \nonumber \\
    && \nabla_q r_Z (\nabla_q r_Z^{\top}\mu{+}\kappa_F) {-} \nabla_q B \lambda - \underline{\lambda} + \overline{\lambda} + E \dot{q} = 0. \nonumber \\
    && \mbox{Complementarity}:  \nonumber \\
    &&  (\nabla_q B^{\top}\mu+\kappa_B(B))\lambda = 0 \nonumber \\
    && (\mu_i-\underline{b}_i) \underline{\lambda}_i = 0 \ , \  (\overline{b}_i-\mu_i) \overline{\lambda}_i = 0\nonumber \\
    && \mbox{Primal feasibility}:  \nonumber \\
    && \nabla_q B^{\top} \mu \geq -\kappa_B (B) \ , \ \underline{b} \leq \mu \leq \overline{b}  \nonumber \\
    && \mbox{Dual feasibility}:  \nonumber \\
    && \lambda \geq 0 \ , \ \underline{\lambda} \geq 0 \ , \ \overline{\lambda} \geq 0 
\end{eqnarray}
\noindent in which $\mu \in \mathbb{R}^n$ is the primal variable and $\lambda \in \mathbb{R}, \underline{\lambda}, \overline{\lambda} \in \mathbb{R}^n$ are the dual variables.

From the ``Stationarity'' condition in  \eqref{eq:kkt} for the component $q_3=z$, using the facts that (i)  $\epsilon_3=0$ (see Definition \ref{def:W}), (ii) $\frac{\partial r_Z}{\partial z} = \frac{\partial Z}{\partial z} = 1$ (see Lemma \ref{lemma:pdresulta}) and (iii) the assumption that no limits on $\dot{z}$ are imposed (and thus $\underline{\lambda}_3 = \overline{\lambda}_3=0$),  
it can be obtained 
\begin{equation}
\label{eq:stationarity1}
 \nabla_q r_Z^{\top}\dot{q} + \kappa_Z = \lambda.   
\end{equation}
    
    Consequently, using this information for the whole   ``Stationarity'' condition in  \eqref{eq:kkt}:
    \begin{equation}
    \label{eq:stationarity2}
        (\nabla_q r_Z-\nabla_q B) (\nabla_q r_Z^{\top} \dot{q}+\kappa_F)  = -E\dot{q} +\underline{\lambda}-\overline{\lambda}.
    \end{equation}    
    It is now necessary to consider several facts, that can be verified utilizing the standard rules of calculus and the definitions of $r_Z$ and $B$: (i) $\nabla_q r_Z = \nabla_q Z$, (ii) $ \nabla_q r_Z {-} \nabla_q B=  \nabla_q Z {-} (\nabla_q Z - \nabla_q \kappa_A(A)) = \nabla_q \kappa_A(A) =  \frac{\partial \kappa_A}{\partial A}\nabla_q A$, (iii) $\dot{q}^{\top}\nabla_q r_Z  = \dot{Z}$ and (iv) $\dot{q}^{\top}\nabla_q A = \dot{A}$. Thus, from \eqref{eq:stationarity2}, applying these facts after pre-multiplying both sides of the equation by $\dot{q}^T$, the following conclusion arises:
    \begin{equation}
    \label{eq:stationarity3}
        \frac{\partial \kappa_A}{\partial A}\dot{A} (\dot{Z}+\kappa_F) = -\dot{q}^{\top}E\dot{q} +\dot{q}^{\top}\underline{\lambda}-\dot{q}^{\top}\overline{\lambda}.
    \end{equation}  
    From the ``Complementarity'' condition in  \eqref{eq:kkt}, $\underline{\lambda}^{\top}\dot{q}=\underline{\lambda}^{\top}\underline{b}$, $\overline{\lambda}^{\top}\dot{q}=\overline{\lambda}^{\top}\overline{b}$. Furthermore, ``Dual Feasibility'' implies that $\underline{\lambda}, \overline{\lambda} \geq 0$, and since $q \in \setP$, $\underline{b} < 0$ and $\overline{b} > 0$. Thus, $\dot{q}^{\top}\underline{\lambda}-\dot{q}^{\top}\overline{\lambda} = \underline{\lambda}^{\top}\underline{b} - \overline{\lambda}^{\top}\overline{b} \leq 0$. Consequently, from \eqref{eq:stationarity3}
    \begin{equation}
    \label{eq:impconclusionlemmaA}
        \frac{\partial \kappa_A}{\partial A}\dot{A} (\dot{Z}+\kappa_F) \leq -\dot{q}^{\top}E\dot{q}.
    \end{equation}
    Since $\kappa_A$ is $\kappa$-like, $\frac{\partial \kappa_A}{\partial A} > 0$. Furthermore, from \eqref{eq:stationarity1} and ``Dual Feasibility'', $\dot{Z}+\kappa_F = \lambda \geq 0$. To complete the proof, two cases are considered: either $\dot{Z}+\kappa_F  > 0$ or $\dot{Z}+\kappa_F = 0$. For the former case, it is easy to see that \eqref{eq:impconclusionlemmaA} implies the desired result. For the latter, note that from \eqref{eq:impconclusionlemmaA}, $0 \leq -\dot{q}^{\top}E\dot{q} $. But since $\dot{q}^{\top}E\dot{q} \geq 0$, this forces $\dot{q}^{\top}E\dot{q}  = 0$, that is, $E\dot{q} = 0$. This implies that all the components of $\dot{q}$, except maybe $\dot{z}$, vanish. But in this case, $\dot{A} = 0$, since $A$ does not depend on $z$ because it depends on $r_X=X$, $r_Y=Y$ and $r_O$, and none of these depend on $z$ (see Lemma \ref{lemma:pdresulta}). $\qedsymbol$

\textbf{Proof of Proposition \ref{prop:zeroset}}: From \eqref{eq:skkt} in Lemma \ref{lemma:skkt}, two mutually exclusive cases are considered: either $\kappa_F = \kappa_F(Z,F-F_d) = 0$ or $\kappa_F = \kappa_F (Z,F-F_d) \not = 0$.

\vspace{5pt}

\underline{Assuming that $\kappa_F=0$}: in this case, it will be shown that $q \in \setSi$. The variables can be selected as $\underline{\lambda}_{m,j} = \overline{\lambda}_{m,j} = 0$.

$\bullet$ \emph{Condition (i)-(a):} from Proposition \ref{def:potential}, $\kappa_F = 0$ is equivalent to $Z=Z_d$ (or $F=F_d$). 

$\bullet$ \emph{Condition (i)-(b):} if $\kappa_F = 0$, from Definition \ref{def:potential} it can be concluded that $Z-Z_d = 0$, that is, $Z=Z_d$. From the ``Primal Feasibility'', $B = Z-Z_d^* - \kappa_{A}(A) \geq 0$, and thus $A \leq \kappa_{A}^{-1}(Z_d-Z_d^*)$, in which the fact that, from Definition \ref{def:A}, $\kappa_{A}$ is strictly increasing, and hence invertible, was used. 

$\bullet$ \emph{Condition (i)-(c):} comes  from ``Primal Feasibility''.
\vspace{5pt}
\underline{Assuming that $\kappa_F \not= 0$}: in this case, it will be shown that $q \in \setSii$. Obtaining the remaining variables $\underline{\lambda}_{m,j} = \overline{\lambda}_{m,j}$ is more intricate, and will be  explained when discussing the condition (ii)-(e).
 
 $\bullet$ \emph{Condition (ii)-(a):} the ``Dual Feasibility'' condition for $\lambda = \kappa_F(Z,F-F_D)$ implies that $\kappa_F(Z,F-F_D) \geq 0$. Due to the properties of the function $\kappa_F$ (see Definition \ref{def:potential}), this is true only if $Z \geq Z_d$. Since $\kappa_F \not =0$, then it remains that $Z > Z_d$ (and also $\kappa_F > 0$).

$\bullet$ \emph{Condition (ii)-(b):} since $\kappa_F \not= 0$, from the ``Stationarity'' condition in \eqref{eq:skkt} for the variable $x$, $\frac{\partial }{\partial x} \kappa_{A}(A) = \frac{\partial }{\partial y} \kappa_{A}(A) = \frac{\partial }{\partial \psi} \kappa_{A}(A) = 0$. 
Using the chain rule:

\begin{eqnarray}
    && \frac{\partial }{\partial x} \kappa_{A}(A) = \frac{\partial \kappa_A}{\partial A}\frac{\partial A}{\partial x} = \nonumber \\
    &&\frac{\partial \kappa_A}{\partial A}\Bigg(\frac{\partial V_A^{XY}}{\partial r_X}\underbrace{\frac{\partial r_X}{\partial x}}_{\frac{\partial X}{\partial x}=1} + \frac{\partial V_A^{XY}}{\partial r_Y}\underbrace{ \frac{\partial r_Y}{\partial x}}_{\frac{\partial Y}{\partial x}=0} + \frac{\partial \aof}{\partial r_O}\underbrace{ \frac{\partial r_O}{\partial x}}_{\ez^\top \frac{\partial \zc}{\partial x}=0}  \Bigg)  
    \nonumber \\ 
    && = \frac{\partial \kappa_A}{\partial A}\frac{\partial V_A^{XY}}{\partial r_X} \nonumber
\end{eqnarray}
\noindent in which Lemma \ref{lemma:pdresulta} was used. Thus $\frac{\partial \kappa_A}{\partial A} \frac{\partial }{\partial r_X}V_A^{XY} = 0$. Since $\kappa_A$ is strictly increasing (Definition \ref{def:A}), $\frac{\partial \kappa_A }{\partial A} > 0$, and consequently it is possible to conclude that  $\frac{\partial}{\partial r_X} V_A^{XY} = 0$. A similar analysis implies that  $\frac{\partial }{\partial r_Y}V_A^{XY} = 0$ as well.  Since $V_A^{XY}$ is a Lyapunov-like function (see Definition \ref{def:A}), this can only be true when $r_X=r_Y=0$.

$\bullet$ \emph{Condition (ii)-(c):} the next condition comes from analyzing   the ``Stationarity'' condition in \eqref{eq:skkt} for $\psi$. Note that:
\begin{eqnarray}
\label{eq:concpsi}
&&\frac{\partial }{\partial \psi} \kappa_{A}(A) = \frac{\partial \kappa_A}{\partial A}\frac{\partial A}{\partial \psi} = \nonumber \\
&&\frac{\partial \kappa_A}{\partial A}\Bigg(\underbrace{\frac{\partial V_A^{XY}}{\partial r_X}}_{0} \frac{\partial r_X}{\partial \psi}{+}\underbrace{\frac{\partial V_A^{XY}}{\partial r_Y}}_{0} \frac{\partial r_Y}{\partial \psi}{+} \frac{\partial \aof}{\partial r_O} \frac{\partial r_O}{\partial \psi}\Bigg) \nonumber \\
&& = \frac{\partial \kappa_A}{\partial A}\frac{\partial \aof}{\partial r_O} \frac{\partial r_O}{\partial \psi}
\end{eqnarray}
\noindent in which Condition (ii)-(b) was used.
Considering that $\kappa_A$ and $\aof$ are strictly increasing (see Definition \ref{def:A}) and Lemma \ref{lemma:partialz}, it can be concluded from \eqref{eq:concpsi} that  $\frac{\partial r_O}{\partial \psi} = \ez^{\top}(\az \times \zc(q))=0$. 

$\bullet$ \emph{Condition (ii)-(d):} from the ``Complementarity'' condition, it can be concluded that since $\lambda = \kappa_F \not = 0$, $\kappa_B(B)=0$. Due to the properties of the function $\kappa_B$ (Definition \ref{def:hlvlu}), this implies $B=0$. This is condition (ii)-(d).

$\bullet$ \emph{Condition (ii)-(e):} from the ``Stationarity'' condition in \eqref{eq:skkt} for $q_{m,j}$,
\begin{equation}
\label{eq:impconclusion3}
\kappa_F \frac{\partial }{\partial q_{m,j}} \kappa_{A}(A) =  \underline{\lambda}_{m,j}{-}\overline{\lambda}_{m,j}. 
\end{equation}
$\frac{\partial }{\partial q_{m,j}} \kappa_{A}(A)$ can be written, using the chain rule, as
 \[
 \frac{\partial \kappa_A}{\partial A} \Bigg( \underbrace{\frac{\partial V_A^{XY}}{\partial r_X}}_{0}\frac{\partial r_X}{\partial q_{m,i}}{+}\underbrace{\frac{\partial V_A^{XY}}{\partial r_Y}}_{0}\frac{\partial r_Y}{\partial q_{m,i}}{+}\frac{\partial \aof}{\partial r_O}\frac{\partial r_O}{\partial q_{m,i}} \Bigg) 
 \]
\noindent in which $r_X=r_Y=0$ (see Condition (ii)-(c)) and the fact that $V_A^{XY}$ is Lyapunov-like was used. Thus,  since $\kappa_A$ is $\kappa$-like, $\frac{\partial \kappa_A}{\partial A} > 0$, and, together with Lemma \ref{lemma:partialz} and the fact that $\kappa_F > 0$ (see Condition (ii)-(a)), \eqref{eq:impconclusion3} can be written as:
\begin{equation}
\label{eq:lambdacond}
    (\ez \times \widecheck{n}_j)^{\top}\zc = \frac{\underline{\lambda}_{m,j}{-}\overline{\lambda}_{m,j}}{ (\partial \kappa_A/\partial A)\kappa_F}.
\end{equation}
If $q_{m,j}$ is saturated below ($q_{m,j} = \underline{q}_{m,j}$), from the ``Dual Feasibility'' and ``Complementarity Condition'' $\overline{\lambda}_{m,j}=0$ and $\underline{\lambda}_{m,j} \geq 0$. Thus, $(\ez \times \widecheck{n}_j)^{\top}\zc$ is nonnegative. Similarly, when it is saturated above  ($q_{m,j} = \overline{q}_{m,j}$), $\underline{\lambda}_{m,j}=0$ and $\overline{\lambda}_{m,j} \geq 0$ and $(\ez \times \widecheck{n}_j)^{\top}\zc$ is nonpositive. Finally, when it is non-saturated, $\overline{\lambda}_{m,j}=\underline{\lambda}_{m,j}=0$, and thus $(\ez \times \widecheck{n}_j)^{\top}\zc=0$. With these conditions, $\underline{\lambda}_{m,j}$ and $\overline{\lambda}_{m,j}$ can be obtained from \eqref{eq:lambdacond}.

$\bullet$ \emph{Condition (ii)-(f):} comes  from ``Primal Feasibility''. $\qedsymbol$
%
%
\section*{Appendix B - Proofs of Lemmas}
 \textbf{Proof of Lemma \ref{lemma:VFlyap}}: The proof uses  the well-known fact that if $g: \mathbb{R} \mapsto \mathbb{R}$ is $\kappa$-like, then $G(r_Z) \triangleq \int_0^{r_Z}g(\xi)d\xi$ is Lyapunov-like.  Due to the properties of $F$ (see Assumption \ref{assump:force}) and $\kappa_F$ (see Definition \ref{def:potential}), and the fact that the composition of $\kappa$-like functions is also $\kappa$-like, it is possible to conclude that $g(\xi) = \kappa_F \Big(\xi+Z_d, F(\xi+Z_d) - F_d \Big)$ is $\kappa$-like. $\qedsymbol$

\textbf{Proof of Lemma \ref{lemma:pdresulta}}: These can be obtained by inspection of the expressions of $X,Y,Z$ and $\zc$. Let
\begin{eqnarray}
    &&T_{\mathcal{P}}^{\mathcal{U}}(x,y,z,\psi) \triangleq \left[\begin{array}{c;{2pt/2pt}c}  R_{\mathcal{P}}^{\mathcal{W}}R_{\mathcal{W}}^{\mathcal{U}}(\psi) & \begin{array}{c} x \\ y \\ z \end{array} \\ \hdashline[2pt/2pt] \begin{array}{ccc} 0 & 0 & 0 \end{array} & 1 \end{array}\right] \nonumber \\
    &&T_{\mathcal{U}}^{\mathcal{E}}(q_m) \triangleq \left[\begin{array}{c;{2pt/2pt}c}  
    R_{\mathcal{U}}^{\mathcal{E}}(q_m) & p_{\mathcal{U}}^{\mathcal{E}}(q_m) \\ \hdashline[2pt/2pt] \begin{array}{ccc} 0 & 0 & 0 \end{array} & 1 \end{array}\right]
\end{eqnarray}

\noindent in which $R_{\mathcal{P}}^{\mathcal{W}}$, $R_{\mathcal{W}}^{\mathcal{U}}(\psi)$ were defined in Section \ref{sec:psa}. Note that, since $\theta, \phi$ are considered constant in this paper (Assumption \ref{assum:slowvar}), only the dependence on $\psi$ is made explicit. Furthermore, $(R_{\mathcal{U}}^{\mathcal{E}}(q_m), p_{\mathcal{U}}^{\mathcal{E}}(q_m))$ can be obtained using standard forward kinematics for manipulators. Then:
\begin{equation}
\label{eq:Trel}
    T_{\mathcal{P}}^{\mathcal{U}}(x,y,z,\psi)T_{\mathcal{U}}^{\mathcal{E}}(q_m) = \left[\begin{array}{ccc;{2pt/2pt}c}  \widecheck{x}(q) \hspace{-6pt} & \widecheck{y}(q)  \hspace{-6pt} & \zc(q)  \hspace{-2pt} &  \hspace{-6pt} \begin{array}{c} X(q) \\ Y(q) \\ Z(q) \end{array} \hspace{-6pt} \\ \hdashline[2pt/2pt] 0 & 0 & 0 & 1 \end{array}\right].
\end{equation}
From these, expressions for the variables $X,Y,Z$ and $\zc$ can be obtained, and the relations in Proposition \ref{lemma:pdresulta} can be easily checked by inspection. $\qedsymbol$

\textbf{Proof of Lemma \ref{lemma:pod}}: It is clear that $\|\dot{r}_Z\|^2 + \dot{q}^{\top} E \dot{q} \geq 0$ since $E$ is a positive semidefinite matrix (see Definition \ref{def:W}). It remains to prove that it is zero only if $\dot{q}=0$.

Note that it is zero if and only if   $\dot{q}^\top E \dot{q}=0$ and $\|\dot{r}_Z\|^2=\dot{q}^{\top} \nabla_q r_Z \nabla_q r_Z^\top \dot{q} = 0$, since these two terms are obviously nonnegative. However, $\dot{q}^\top E \dot{q}=0$ if and only if all $\dot{q}_i=0$ for all $i \not=3$. Substituting this information into $\|\dot{r}_Z\|^2 = 0$, one obtains $\left(\frac{\partial r_Z}{\partial z}\dot{z}\right)^2=0$. But from Lemma  \ref{lemma:pdresulta}, $\frac{\partial r_Z}{\partial z} = \frac{\partial Z}{\partial z} = 1$, and thus it follows that $\dot{z}=\dot{q}_3 = 0$. $\qedsymbol$

\textbf{Proof of Lemma \ref{lemma:partialz}}: For \eqref{eq:roteqs}, it is necessary to use the equation in kinematics that dictates that for any axis vector $\widecheck{g}(q)$, $\frac{\partial}{\partial q_i} \widecheck{g}(q) = \widecheck{r}_i(q) \times \widecheck{g}(q)$, in which $\widecheck{r}_i(q)$ is the axis vector for the rotation caused by varying only the configuration $q_i$. For $q_i = \psi$, this axis is $\az$, and for $q_i = q_{m,j}$, this axis is $\widecheck{n}_j(q)$.

For \eqref{eq:roteqs2}, the definition $r_O(q) \triangleq 1 + \ez^{\top} \zc(q)$ is used, and thus $\frac{\partial r_O}{\partial q_i}(q) = \ez^{\top} \frac{\partial \zc(q)}{\partial q_i}$. The desired results follow from \eqref{eq:roteqs}, the triple product permutation equalities $a^\top (b \times c) = c^\top (a \times b) = b^\top (c \times a)$ and Definition \ref{def:xyz}. $\qedsymbol$

\textbf{Proof of Lemma \ref{lemma:skkt}:} Suppose $u_H(q)=0$. To see which conditions entail $u_H(q)=0$, the KKT conditions in \eqref{eq:kkt}, that are necessary and sufficient for strictly convex QP problems \cite{boyd}, are analyzed. These conditions for the optimization problem in \eqref{eq:uH} with the optimal solution $u_H(q) = \mu=0$ imply that:
\begin{eqnarray}
    && \mbox{Stationarity}:  \nonumber \\
    && \nabla_q r_Z \kappa_F {-} \nabla_q B \lambda - \underline{\lambda} + \overline{\lambda} = 0. \nonumber \\
    && \mbox{Complementarity}:  \nonumber \\
    &&  \kappa_B(B)\lambda = 0 \ ,  \ \underline{b}_i \underline{\lambda}_i = 0 \ , \  \overline{b}_i \overline{\lambda}_i = 0\nonumber \\
    && \mbox{Primal feasibility}:  \nonumber \\
    && 0 \geq -\kappa_B (B) \ , \ \underline{b} \leq 0 \leq \overline{b}  \nonumber \\
    && \mbox{Dual feasibility}:  \nonumber \\
    && \lambda \geq 0 \ , \ \underline{\lambda} \geq 0 \ , \ \overline{\lambda} \geq 0 \nonumber
\end{eqnarray}
\noindent in which $\kappa_F = \frac{\partial }{\partial r_Z}V_F = \kappa_F(Z,F(Z)-F_d)$, the dependence on $q$ was omitted, the indexing $i$ refers to the components of the configuration $q$ and the variables $\lambda \in \mathbb{R}, \underline{\lambda}, \overline{\lambda} \in \mathbb{R}^n$ are the dual variables for the CBF and lower/upper bound velocities constraints, respectively. 

These conditions will be simplified into an \emph{equivalent form}. This procedure starts by considering only the $x$, $y$, $z$ and $\psi$ components of these conditions. From the ``Complementarity'' conditions, and the fact that for these entries $\underline{b}_i$ and $\overline{b}_i$ are nonzero constants (see \eqref{eq:bdef}), it is possible to conclude that for these four entries $\underline{\lambda}_i = \overline{\lambda}_i = 0$. Using this information and analyzing the ``Stationarity'' condition for $z$:
\begin{equation}
\label{eq:lambdac}
    \underbrace{\frac{\partial r_Z}{\partial z}}_{\frac{\partial }{\partial z}Z = 1}  \kappa_F - \underbrace{\frac{\partial B}{\partial z}}_{1}\lambda = 0 \ \rightarrow \ \lambda =  \kappa_F
\end{equation}
\noindent in which the relations in Lemma \ref{lemma:pdresulta} and the definition of $B$ were used. Using the ``Stationarity'' condition for $x,y,\psi$:
\[
      \lambda \frac{\partial B}{\partial x} = \frac{\partial r_Z}{\partial x} \kappa_F  \  ,  \    \lambda \frac{\partial B}{\partial y} = \frac{\partial r_Z}{\partial y} \kappa_F  \  ,  \    \lambda \frac{\partial B}{\partial \psi} = \frac{\partial r_Z}{\partial \psi}\kappa_F.
\]

The last relation, for $\psi$, can be further simplified by noting that $B = Z - Z_d^* - \kappa_{A}(A)$ and thus the terms  $\frac{\partial r_Z}{\partial \psi}   \kappa_F = \frac{\partial Z}{\partial \psi}   \kappa_F$ cancel out. Furthermore, using $\frac{\partial r_Z}{\partial x}  = \frac{\partial  Z}{\partial x} = 0$, $\frac{\partial r_Z }{\partial y} = \frac{\partial Z}{\partial y} = 0$ obtained from Lemma \ref{lemma:pdresulta} and the fact that $\lambda = \kappa_F$ from \eqref{eq:lambdac}, it is possible to obtain:
\small
\begin{equation}
\label{eq:impconclusion}
     \kappa_F \frac{\partial }{\partial x} \kappa_{A}(A) = 0 ,~\kappa_F \frac{\partial }{\partial y} \kappa_{A}(A) = 0,~\kappa_F \frac{\partial }{\partial \psi} \kappa_{A}(A) = 0.
\end{equation}
\normalsize

For the ``Stationarity'' conditions for the manipulator configurations $q_{m,j}$:
\begin{eqnarray}
\label{eq:impconclusion2}
    && \frac{\partial r_Z}{\partial q_{m,j}} \kappa_F - \frac{\partial B}{\partial q_{m,j}}\underbrace{\lambda}_{\kappa_F} - \underline{\lambda}_{m,j} + \overline{\lambda}_{m,j} = 0 \ \rightarrow \nonumber \\
    &&  \frac{\partial r_Z}{\partial q_{m,j}} \kappa_F - \left( \frac{\partial Z}{\partial q_{m,j}}{-}\frac{\partial}{\partial q_{m,j}} \kappa_A(A) \right)\kappa_F =  \underline{\lambda}_{m,j}{-}\overline{\lambda}_{m,j} \nonumber \\
    && \rightarrow \kappa_F \frac{\partial }{\partial q_{m,j}} \kappa_{A}(A) =  \underline{\lambda}_{m,j}{-}\overline{\lambda}_{m,j}
\end{eqnarray}
\noindent in which $\frac{\partial r_Z}{\partial q_{m,j}} = \frac{\partial Z}{\partial q_{m,j}}$ was used. Finally,  from the ``Primal Feasibility'' $\underline{b}_i(q) \leq 0 \leq \overline{b}_i(q)$, by looking at the conditions  for the manipulator configurations $q_{m,j}$, it is possible to conclude that $\underline{q}_{m,j} \leq q_{m,j} \leq \overline{q}_{m,j}$.
With these developments, \eqref{eq:skkt} can be derived. $\qedsymbol$

\newpage
\bibliographystyle{IEEEtran}
\bibliography{Force24}

\begin{thebibliography}{10}
\providecommand{\url}[1]{#1}
\csname url@rmstyle\endcsname
\providecommand{\newblock}{\relax}
\providecommand{\bibinfo}[2]{#2}
\providecommand\BIBentrySTDinterwordspacing{\spaceskip=0pt\relax}
\providecommand\BIBentryALTinterwordstretchfactor{4}
\providecommand\BIBentryALTinterwordspacing{\spaceskip=\fontdimen2\font plus
\BIBentryALTinterwordstretchfactor\fontdimen3\font minus
  \fontdimen4\font\relax}
\providecommand\BIBforeignlanguage[2]{{%
\expandafter\ifx\csname l@#1\endcsname\relax
\typeout{** WARNING: IEEEtran.bst: No hyphenation pattern has been}%
\typeout{** loaded for the language `#1'. Using the pattern for}%
\typeout{** the default language instead.}%
\else
\language=\csname l@#1\endcsname
\fi
#2}}

\bibitem{optim_coop_arms_planning}
A.~Tika and N.~Bajcinca, ``Predictive control of cooperative robots sharing
  common workspace,'' \emph{IEEE Transactions on Control Systems Technology},
  vol.~32, no.~2, pp. 456--471, 2024.

\bibitem{multi_uav_transport_cbfs}
A.~Hegde and D.~Ghose, ``{Multi-UAV Collaborative Transportation of Payloads
  With Obstacle Avoidance},'' \emph{IEEE Control Systems Letters}, vol.~6, pp.
  926--931, 2022.

\bibitem{physical_tase}
X.~Meng, Y.~He, J.~Han, and A.~Song, ``Physical interaction oriented aerial
  manipulators: Contact force control and implementation,'' \emph{IEEE
  Transactions on Automation Science and Engineering}, pp. 1--13, 2024.

\bibitem{dimarogonas}
C.~R.~d. Cos and D.~V. Dimarogonas, ``Adaptive cooperative control for
  human-robot load manipulation,'' \emph{IEEE Robotics and Automation Letters},
  vol.~7, no.~2, pp. 5623--5630, 2022.

\bibitem{aerial_manipulation_tro_review}
A.~Ollero, M.~Tognon, A.~Suarez, D.~Lee, and A.~Franchi, ``Past, present, and
  future of aerial robotic manipulators,'' \emph{IEEE Transactions on
  Robotics}, vol.~38, no.~1, pp. 626--645, 2022.

\bibitem{aerial_manipulator_millimeter}
M.~Wang, Z.~Chen, K.~Guo, X.~Yu, Y.~Zhang, L.~Guo, and W.~Wang,
  ``Millimeter-level pick and peg-in-hole task achieved by aerial
  manipulator,'' \emph{IEEE Transactions on Robotics}, pp. 1--19, 2023.

\bibitem{aerial_manip_ctrl_uncertainty}
D.~Lee, J.~Byun, and H.~J. Kim, ``Rise-based trajectory tracking control of an
  aerial manipulator under uncertainty,'' \emph{IEEE Control Systems Letters},
  vol.~6, pp. 3379--3384, 2022.

\bibitem{yu2020finite}
Y.~Yu, P.~Li, and P.~Gong, ``Finite-time geometric control for underactuated
  aerial manipulators with unknown disturbances,'' \emph{International Journal
  of Robust and Nonlinear Control}, vol.~30, no.~13, pp. 5040--5061, 2020.

\bibitem{ding2022fault}
Y.~Ding, Y.~Wang, and B.~Chen, ``Fault-tolerant control of an aerial
  manipulator with guaranteed tracking performance,'' \emph{International
  Journal of Robust and Nonlinear Control}, vol.~32, no.~2, pp. 960--986, 2022.

\bibitem{BBPWPASN_tro21}
K.~Bodie, M.~Brunner, M.~Pantic, S.~Walser, P.~Pfändler, U.~Angst,
  R.~Siegwart, and J.~Nieto, ``Active interaction force control for
  contact-based inspection with a fully actuated aerial vehicle,'' \emph{IEEE
  Transactions on Robotics}, vol.~37, no.~3, pp. 709--722, 2021.

\bibitem{liang2023adaptive}
J.~Liang, H.~Zhong, Y.~Wang, Y.~Chen, J.~Zeng, and J.~Mao, ``Adaptive force
  tracking impedance control for aerial interaction in uncertain contact
  environment using barrier function,'' \emph{IEEE Transactions on Automation
  Science and Engineering}, 2023.

\bibitem{8796030}
A.~D. Ames, S.~Coogan, M.~Egerstedt, G.~Notomista, K.~Sreenath, and P.~Tabuada,
  ``Control barrier functions: Theory and applications,'' in \emph{2019 18th
  European Control Conference}, 2019, pp. 3420--3431.

\bibitem{gonccalves2023safe}
V.~M. Gon{\c{c}}alves, D.~Chaikalis, A.~Tzes, and F.~Khorrami, ``Safe
  multi-agent drone control using control barrier functions and acceleration
  fields,'' \emph{Robotics and Autonomous Systems}, p. 104601, 2023.

\bibitem{Dai2023}
B.~Dai, R.~Khorrambakht, P.~Krishnamurthy, V.~Gonçalves, A.~Tzes, and
  F.~Khorrami, ``Safe navigation and obstacle avoidance using differentiable
  optimization based control barrier functions,'' \emph{IEEE Robotics and
  Automation Letters}, vol.~8, no.~9, pp. 5376--5383, 2023.

\bibitem{gonccalves2024control}
V.~M. Gon{\c{c}}alves, P.~Krishnamurthy, A.~Tzes, and F.~Khorrami, ``Control
  barrier functions with circulation inequalities,'' \emph{IEEE Transactions on
  Control Systems Technology}, 2024.

\bibitem{DHK23}
B.~Dai, H.~Huang, P.~Krishnamurthy, and F.~Khorrami, ``Data-efficient control
  barrier function refinement,'' in \emph{2023 American Control Conference},
  2023, pp. 3675--3680.

\bibitem{cbfs_qp_safety}
A.~D. Ames, X.~Xu, J.~W. Grizzle, and P.~Tabuada, ``Control barrier function
  based quadratic programs for safety critical systems,'' \emph{IEEE
  Transactions on Automatic Control}, vol.~62, no.~8, pp. 3861--3876, 2017.

\bibitem{cbfs_tcst}
A.~Alan, A.~J. Taylor, C.~R. He, A.~D. Ames, and G.~Orosz, ``Control barrier
  functions and input-to-state safety with application to automated vehicles,''
  \emph{IEEE Transactions on Control Systems Technology}, vol.~31, no.~6, pp.
  2744--2759, 2023.

\bibitem{aerial_screwing}
M.~Schuster, D.~Bernstein, P.~Reck, S.~Hamaza, and M.~Beitelschmidt,
  ``Automated aerial screwing with a fully actuated aerial manipulator,'' in
  \emph{2022 IEEE/RSJ International Conference on Intelligent Robots and
  Systems (IROS)}, 2022, pp. 3340--3347.

\bibitem{smrcka2021admittance}
D.~Smrcka, T.~Baca, T.~Nascimento, and M.~Saska, ``{Admittance force-based
  {UAV}-wall stabilization and press exertion for documentation and inspection
  of historical buildings},'' in \emph{2021 International Conference on
  Unmanned Aircraft Systems (ICUAS)}.\hskip 1em plus 0.5em minus 0.4em\relax
  IEEE, 2021, pp. 552--559.

\bibitem{cdc_soft_adaptive_impedance}
T.~K. Stephens, C.~Awasthi, and T.~M. Kowalewski, ``Adaptive impedance control
  with setpoint force tracking for unknown soft environment interactions,'' in
  \emph{2019 IEEE 58th Conference on Decision and Control (CDC)}, 2019, pp.
  1951--1958.

\bibitem{su2023sequential}
Y.~Su, J.~Li, Z.~Jiao, M.~Wang, C.~Chu, H.~Li, Y.~Zhu, and H.~Liu, ``Sequential
  manipulation planning for over-actuated unmanned aerial manipulators,'' in
  \emph{International Conference on Intelligent Robots and Systems (IROS)},
  2023.

\bibitem{tzoumanikas2020aerial}
D.~Tzoumanikas, F.~Graule, Q.~Yan, D.~Shah, M.~Popovic, and S.~Leutenegger,
  ``Aerial manipulation using hybrid force and position {NMPC} applied to
  aerial writing,'' in \emph{2020 Robotics: Science and Systems XVI, July
  12-16}, Corvalis, Oregon, USA, 2020.

\bibitem{MPC_Force_Robots}
D.~Müller, A.~Mayer, and O.~Sawodny, ``Model predictive force control for
  robots in compliant environments with guaranteed maximum force,'' in
  \emph{2019 American Control Conference (ACC)}, 2019, pp. 1355--1360.

\bibitem{tcst_force_exertion}
J.~Matschek, J.~Bethge, and R.~Findeisen, ``Safe machine-learning-supported
  model predictive force and motion control in robotics,'' \emph{IEEE
  Transactions on Control Systems Technology}, vol.~31, no.~6, pp. 2380--2392,
  2023.

\bibitem{safe_compliant_interaction}
Q.~Liu, S.~Lyu, K.~Guo, J.~Wang, X.~Yu, and L.~Guo, ``A coordinated framework
  of aerial manipulator for safe and compliant physical interaction,''
  \emph{Control Engineering Practice}, vol. 146, p. 105898, 2024.

\bibitem{ecc24_cbf_force}
M.~Sharifi and S.~Heshmati-Alamdari, ``Safe force/position tracking control via
  control barrier functions for floating base mobile manipulator systems,''
  \emph{arXiv preprint arXiv:2404.13626}, 2024.

\bibitem{barrier_backstepping_hybrid_fp_ctrl}
M.~Adinehvand, C.~Y. Lai, and R.~Hoseinnezhad, ``Barrier-lyapunov
  -function-based backstepping adaptive hybrid force/position control for
  manipulator with force and position constraints,'' in \emph{2021 American
  Control Conference (ACC)}, 2021, pp. 2266--2271.

\bibitem{rong2022robust}
Y.~Rong, W.~Chou, and R.~Jiao, ``Robust fault-tolerant motion/force control of
  a fully-actuated hexarotor using adaptive sliding mode impedance control,''
  \emph{International Journal of Robust and Nonlinear Control}, vol.~32, no.~7,
  pp. 4149--4172, 2022.

\bibitem{omni-forces-dob}
G.~Malczyk, M.~Brunner, E.~Cuniato, M.~Tognon, and R.~Siegwart,
  ``Multi-directional interaction force control with an aerial manipulator
  under external disturbances,'' \emph{Autonomous Robots}, pp. 1--19, 2023.

\bibitem{Lipshitz}
B.~Morris, M.~J. Powell, and A.~D. Ames, ``Sufficient conditions for the
  {L}ipschitz continuity of {QP}-based multi-objective control of humanoid
  robots,'' in \emph{52nd IEEE Conference on Decision and Control}, 2013, pp.
  2920--2926.

\bibitem{boyd}
S.~Boyd and L.~Vandenberghe, \emph{Convex Optimization}.\hskip 1em plus 0.5em
  minus 0.4em\relax {Cambridge University Press}, March 2004.

\end{thebibliography}
\end{document}